\title{\huge Hidden Layer Distribution}
\author
{
Chenyang Zhang\thanks{\scriptsize Department of Statistics and Actuarial Science, School of Computing and Data Science, The University of Hong Kong; {\tt chyzhang@connect.hku.hk}}
\qquad 
Peifeng Gao\thanks{\scriptsize Department of Computer Science, School of Computing and Data Science, The University of Hong Kong; {\tt gaopeifeng@connect.hku.hk}}
\qquad 
Difan Zou\thanks{\scriptsize Department of Computer  Science, School of Computing and Data Science, \& Institute of Data Science, The University of Hong Kong;
 {\tt dzou@cs.hku.hk}}
\qquad
Yuan Cao\thanks{\scriptsize Department of Statistics and Actuarial Science, School of Computing and Data Science,
	The University of Hong Kong;  {\tt yuancao@hku.hk}}
}
\date{}
\begin{document}

\title{\huge Gradient Descent Robustly Learns the Intrinsic Dimension of Data in Training Convolutional Neural Networks}


\maketitle

\begin{abstract}
Modern neural networks are usually highly over-parameterized. Behind the wide usage of over-parameterized networks is the belief that, if the data are simple, then the trained network will be automatically equivalent to a simple predictor. Following this intuition, many existing works have studied different notions of ``ranks'' of neural networks and their relation to the rank of data. In this work, we study the rank of convolutional neural networks (CNNs) trained by gradient descent, with a specific focus on the robustness of the rank to image background noises. Specifically, we point out that, when adding background noises to images, the rank of the CNN trained with gradient descent is affected far less compared with the rank of the data. We support our claim with a theoretical case study, where we consider a particular data model to characterize low-rank clean images with added background noises. We prove that CNNs trained by gradient descent can learn the intrinsic dimension of clean images, despite the presence of relatively large background noises. We also conduct experiments on synthetic and real datasets to further validate our claim.
\end{abstract}

\section{Introduction}

Neural networks have become a cornerstone in modern machine learning, demonstrating remarkable performance across various domains. A common characteristic of modern networks is their tendency to be highly over-parameterized. Interestingly, it has been demonstrated that over-parameterized models trained by standard optimization algorithms  exhibit a preference for simplicity \citep{DBLP:journals/jmlr/SoudryHNGS18, pmlr-v99-ji19a,DBLP:conf/aistats/NacsonSS19,pmlr-v80-gunasekar18a,gunasekar2017implicit,li2018algorithmic,arora2019implicit,razin2020implicit,chizat2020implicit, phuong2020inductive}: if the training data can be fitted well by a simple predictor, then after training, an over-parameterized model may effectively reduce to this simple predictor.


A notable line of recent works has considered notions of ``ranks'' to characterize how simple the over-parameterized neural network after training is \citet{gunasekar2017implicit,li2018algorithmic,arora2019implicit,chizat2020implicit,zhou2022towards,jacot2022implicit}. Specifically, \citet{gunasekar2017implicit,li2018algorithmic,arora2019implicit} showed that when training over-parameterized matrix factorization models and linear neural networks, gradient descent has an implicit bias towards a low-rank solution. Similar conclusions have also been demonstrated for more nonlinear networks. Specifically,  \citet{chizat2020implicit} showed that the effective hidden layer neurons in a two-layer neural network is sparse. \citet{zhou2022towards} empirically demonstrated that the hidden neural weight vectors condense on isolated orientations when learning easy tasks, and provided explanations of this phenomenon with theoretical case studies. \citet{jacot2022implicit} further formulated two notions of ranks, namely the Jacobian rank and the Bottleneck rank, for vector-valued neural networks, and demonstrated that over-parameterized networks tend to achieve small ranks. 




\begin{figure}[ht!]
	\begin{center}
 		\hspace{-0.2in}
            \subfloat[ MNIST]{\includegraphics[width=0.4\textwidth]{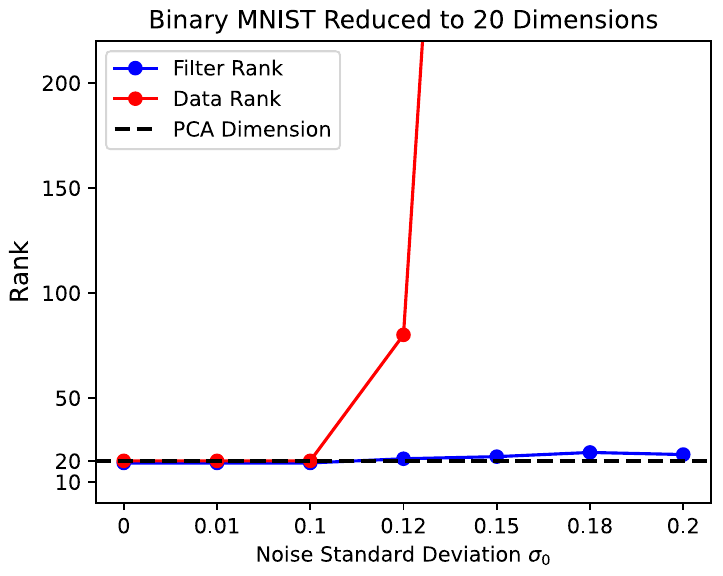}}
            \subfloat[CIFAR-10 ]{\includegraphics[width=0.4\textwidth]{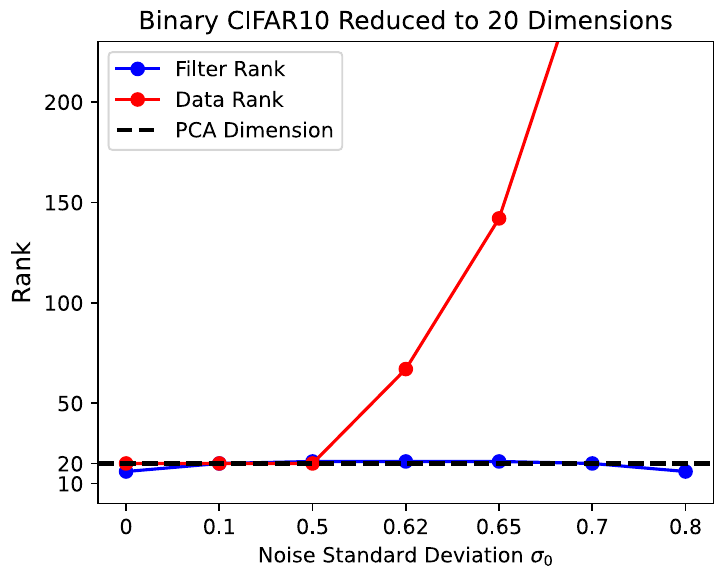}}
	\end{center}
	\vskip -12pt
        \caption{Ranks of data and filters under different noise levels. In (a), we perform a principal component analysis (PCA) to a subset of MNIST images to reduce the number of principal components to 20, which represents the rank of clean data. We then add background noise patches around the obtained low-rank image, and train a two-layer CNN with fixed second layer weights until convergence. We then calculate both the rank of the noisy images and the rank of the matrix consisting of all the convolutional filters of the CNN. When calculating ranks, eigenvalues smaller than $1/100$ of the largest eigenvalue are ignored. The curves of filter rank and data rank with respect to the noise level are plotted. In (b), we conduct a similar set of experiments on the CIFAR-10 data set.}
	\label{fig:intro}
\end{figure}

 In this work, we aim to study the ``ranks'' of two-layer convolutional neural networks (CNN) when learning from data sets with a clean low-rank structure from a new perspective: we examine the robustness of the neural network rank when background noises of increasing levels are added to the original low-rank data. Interestingly, we can draw the following conclusion:
 \begin{center}\textit{The rank of CNN is more robust to background noises, compared with the rank of data.}
 \end{center}

An illustration of this claim is given in Figure~\ref{fig:intro}. From Figure~\ref{fig:intro}, it is evident that for both MNIST and CIFAR-10 images, as the level of background noise increases, the rank of the matrix composed of the data inputs rises sharply, eventually leading to an explosion. In comparison, the rank of the matrix representing CNN filters consistently remains close to the PCA rank (i.e. rank of clean data), demonstrating much less fluctuation. Therefore, these empirical observations match the claim that the rank of the CNN filters is more robust to background noise than the rank of data.

The major contributions of this paper are as follows:
\begin{itemize}[leftmargin = *]
    \item We reveal the ``rank robustness'' phenomenon in training convolutional neural networks. In particular, we add different levels of background noises to clean low-rank data and then use CNNs to fit these noisy data. We observe that, even if a significant amount of noise has been added which causes the stable rank of the data to explode, the stable rank of the CNN filters can still remain around the rank of the clean data. This suggests that the rank of CNN is more robust to the noise compared to the rank of data.
    
    \item We theoretically prove that the observed phenomenon happens when training a two-layer CNN on a data model with multiple patches, where some of the patches contain the object content for classification while others are filled with background noise. More specifically, we show that under a wide range of noise levels, the CNN model will learn the rank of the clean data.
    In comparison, we also show that under the same noise levels, the data rank can provably explode.
    \item Our theoretical analysis is based on a careful examination of the CNN training dynamics. Specifically, we develop theoretical tools to demonstrate that different convolutional filters of the CNN are updated by gradient descent in distinct directions, with each filter's direction determined by its random initialization. We believe that these novel analysis tools can be applied to the study of CNNs from other perspectives as well and are of independent interest.
    
    
\end{itemize}

\noindent\textbf{Notation.} Given two sequences $\{x_n\}$ and $\{y_n\}$, we denote $x_n = O(y_n)$ if there exist some absolute constant $C_1 > 0$ and $N > 0$ such that $|x_n|\le C_1 |y_n|$ for all $n \geq N$. Similarly, we denote $x_n = \Omega(y_n)$ if there exist $C_2 >0$ and $N > 0$ such that $|x_n|\ge C_2 |y_n|$ for all $n > N$. We say $x_n = \Theta(y_n)$ if $x_n = O(y_n)$ and $x_n = \Omega(y_n)$ both holds. We use $\tilde O(\cdot)$, $\tilde \Omega(\cdot)$, and $\tilde \Theta(\cdot)$ to hide logarithmic factors in these notations respectively. Moreover, we denote $x_n=\poly(y_n)$ if $x_n=O( y_n^{D})$ for some positive constant $D$, and $x_n = \polylog(y_n)$ if $x_n= \poly( \log (y_n))$. For two scalars $a$ and $b$, we denote $a \vee b = \max\{a, b\}$ and $a\wedge b =\min\{a,b\}$. For any $n\in \NN_+$, we use $[n]$ to denote the set $\{1, 2, \cdots, n\}$. We denote $\diag(a_1, a_2, \cdots, a_n)$ the diagonal matrix with $a_i$'s being its diagonal entries.

\section{Problem setup}\label{sec:ProblemSetup}

As outlined in the introduction, this paper aims to provide a theoretical explanation for an intriguing empirical phenomenon concerning the ``rank" of the filters in a convolutional neural network (CNN) when applied to noisy data. However, the classic definition of rank is not suitable for our study, as it is highly susceptible to minor perturbations: even the slightest noise can cause the data to achieve full rank. While in experiments, the ``approximate rank” of a matrix is usually clear by looking at the distribution of singular values, a theoretical characterization of this phenomenon requires a mathematically more rigorous definition. Fortunately, the definition of ``stable rank'' meets this requirement, which could be regarded as a ``continuous counterpart'' of the classic rank \citep{georgiev2021impact}. Stable rank was first introduced in \citet{rudelson2007sampling} for studying the low-rank approximation of large matrices, and we provide a formal definition in the following.


\begin{definition}\label{def:stable_rank}
For a matrix $\Ab \in \RR^{d_1\times d_2}$, the stable rank  of $\Ab$ is defined as 
\begin{align*}
    \sr(\Ab) := \frac{\|\Ab\|_F^2}{\|\Ab\|_{2}^2}
\end{align*}
\end{definition}
Compared to the classic definition of rank, stable rank is analogous in many aspects but considerably more well-behaved: it is more stable to small perturbations and it is differentiable. 
To illustrate its robustness to small perturbations, consider a diagonal matrix 
$\Ab \in \RR^{d\times d}$ with its first diagonal with its first diagonal entry set to $1$ and all other entries equal to $\epsilon$. As $\epsilon \to 0$, the classic rank of $\Ab$ remains $d$ until $\epsilon$ exactly equals to 0. In contrast, the stable rank of $\Ab$ equals $1+(d-1)\epsilon$, approaching $1$ smoothly as $\epsilon$ decreases \citep{frei2022implicit}. Therefore, stable rank is a suitable quantity to characterize the impact of noise at different levels. 


Besides the definition of the stable rank, we also need to give a concrete mathematical description of the image data with noisy backgrounds. In order to enable the analysis of CNNs and demonstrate out empirical observations in Figure~\ref{fig:intro}, we particularly hope to 
mathematically capture the following nature of the data: 
\begin{enumerate}[nosep]
    \item[(i)] The data consists of multiple ``patches'' (blocks of ``pixels'' with localized features).
    \item[(ii)] Some patches are ``clean object patches'', the others are ``noisy background patches''.
    \item[(iii)] The ``clean object patches'' share a low-rank structure.
\end{enumerate}
Motivated by this, we consider performing a case study based on a particular data model to theoretically demonstrate the robustness of the stable rank of the CNNs. The definition of the data model $\cD$ is given as follows:

\begin{definition}[Low-rank clean images with noisy backgrounds]\label{def:data}
Let $\ \cU_{+1}, \cU_{-1} \subset \RR^d$ be two disjoint sets of orthonormal vectors.
A data pair $(\Xb,y)$ is generated as follows. The label $y$ is generated as a Rademacher random variable. The input $\Xb$ is of the form $[\xb^{(1)}, \xb^{(2)}, \dots, \xb^{(P)}] \in\RR^{d\times P}$ with each column denoting a patch. Given $y$, a non-empty index set $\cS\subsetneqq [P]$ is randomly generated. Then for all $p\in [P]$, if $p\in \cS$, the patch $\xb^{(p)}$ is assigned as a vector uniformly chosen from $\cU_{y}$. Otherwise, $\xb^{(p)}$ is assigned as a noise vector generated from $N(\mathbf{0}, \sigma_{\mathrm{noise}}^2 \cdot( \Ib - \sum_{\bmu\in \cU_{+1}\cup \ \cU_{-1}} \bmu\bmu^\top))$.

\end{definition}

In the data model $\cD$ defined above, we consider the sets $\cU_{+1}$ and $\cU_{-1}$ following the intuition that they should consist of ``basis vectors'' that can form objects belonging to classes $+1$ and $-1$ respectively. We assume that $|\cU_{+1}| = |\cU_{-1}| = K$ for the ease of the study of ``ranks'', as it is evident that the rank of the clean data should be $2K$. Importantly,  the data model is flexible regarding the distribution of the index set $\cS$, and all our results hold as long as $\cS$ is nonempty and $\cS \subsetneqq [P]$. This allows the model to better match the nature of image data, where patches with particular spatial relationships can together form an object. 
Besides, our definition of the noise vector distribution $N(\mathbf{0}, \sigma_{\mathrm{noise}}^2 \cdot( \Ib - \sum_{\bmu\in \cU_{+1}\cup \ \cU_{-1}} \bmu\bmu^\top))$ ensures that the Gaussian noises are generated from the orthogonal complement of $\cU_{+1} \cup \ \cU_{-1}$. By separating the ``object patches'' with $p\in \cS$ and the ``background patches'' with $p\in \cS^c$, the data model only allows the noises to be added to the backgrounds of the images, which aligns with our experiment setup that produces Figure~\ref{fig:intro}.
Moreover, as a data model for a supervised training task, Definition~\ref{def:data} ensures that object patches are drawn from $\cU_{y}$ and are therefore related to the label $y$, while background patches are independent of $y$. Consequently, this data model accurately captures the nature of image classification tasks, where only the actual object in the image is directly related to the class of the image.
 
 Note that similar data models have been considered for the studies of a variety of different topics, including knowledge distillation~\citep{allen2022towards}, algorithmic biases \citep{zou2023understanding,lu2023benign}, and the ``benign-overfitting'' phenomenon~\citep{cao2022benign, kou2023benign}. Our data model in Definition~\ref{def:data} is more general and covers most of the models studied in these existing works.
 

We consider training CNNs using a dataset $\{ (\Xb_i,y_i)\}_{i=1}^n$ for binary classification, where each training data pair $ (\Xb_i,y_i)$ is generated independently from the distribution given in Definition~\ref{def:data}. To calculate the data rank, we consolidate all training inputs into a single matrix, as $\hat \Xb =[\xb_1^{(1)}, \cdots, \xb_1^{(P)},\xb_2^{(1)}, \cdots, \xb_2^{(P)}, \cdots, \xb_n^{(1)}, \cdots, \xb_n^{(P)}] \in \RR^{d\times nP}$. Then the following proposition theoretically demonstrates that the stable rank of the data matrix $\hat \Xb$ will explode at the initial stages of noise level  $\sigma_{\mathrm{noise}}$ growth.

\begin{proposition}\label{prop:data_rank}
Suppose that $d \geq \tilde\Omega(n^4)$, and $K, P\leq O(1)$. For any positive $\delta$ satisfying $\log(1/\delta)\leq O(d)$, the following inequalities concerning the stable rank of $\hat \Xb$ hold with probability at least $1-\delta$.

\begin{itemize}[leftmargin = *]
    \item If $ \sigma_{\mathrm{noise}}\sqrt{d} \leq  O(1)$, then 
    \begin{align*}
      2K -  O\bigg(\sqrt{\frac{\log(1/\delta)}{n}}\bigg) \leq   \mathrm{StableRank}(\hat \Xb) \leq 2K + O\big(\sigma_{\mathrm{noise}}^2d\big).
    \end{align*}
    \item If $ \sigma_{\mathrm{noise}}\sqrt{d} \geq  \Omega(1)$, then 
    \begin{align*}
       \mathrm{StableRank}(\hat \Xb) \geq \Big(2K + \Omega\big(\sigma_{\mathrm{noise}}^2d\big) \Big)\wedge n.
    \end{align*}
\end{itemize}
\end{proposition}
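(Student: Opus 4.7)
The plan is to decompose $\hat\Xb = \hat\Xb_s + \hat\Xb_n$, where $\hat\Xb_s$ consists of the object patches (columns with $p\in\cS_i$) and $\hat\Xb_n$ consists of the background patches, and then exploit an orthogonality between these two parts. Every signal column lies in $V:=\mathrm{span}(\cU_{+1}\cup\cU_{-1})$, while every noise column is drawn from $N(\mathbf 0,\sigma_{\mathrm{noise}}^2(\Ib-\sum_{\bmu}\bmu\bmu^\top))$ and therefore lies almost surely in $V^\perp$. Consequently $\|\hat\Xb\|_F^2 = \|\hat\Xb_s\|_F^2 + \|\hat\Xb_n\|_F^2$, and $\hat\Xb\hat\Xb^\top$ splits as a direct sum of two PSD operators supported on $V$ and $V^\perp$, which gives $\|\hat\Xb\|_2^2 = \max\{\|\hat\Xb_s\|_2^2,\|\hat\Xb_n\|_2^2\}$. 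This reduces the problem to bounding four quantities separately.

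For the signal part, let $N_s=\sum_{i=1}^n|\cS_i|$ and, for each $\bmu\in\cU_{+1}\cup\cU_{-1}$, let $n_{\bmu}$ be the number of signal columns equal to $\bmu$. Since $\cU_{\pm 1}$ are orthonormal and disjoint, $\hat\Xb_s\hat\Xb_s^\top=\sum_{\bmu} n_{\bmu}\,\bmu\bmu^\top$ is diagonal in the $\cU$-basis, giving $\|\hat\Xb_s\|_F^2 = N_s$ and $\|\hat\Xb_s\|_2^2 = \max_{\bmu} n_{\bmu}$. Because $1\le|\cS_i|\le P-1$ and $P=O(1)$, we have $N_s = \Theta(n)$. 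Conditionally on $\{y_i,|\cS_i|\}$, each object patch is uniform on $\cU_{y_i}$ (with $|\cU_{y_i}|=K$), so $\EE n_{\bmu}=N_s/(2K)$; a Hoeffding bound on $\sum_{i,p}\mathds 1\{\xb_i^{(p)}=\bmu\}$ combined with a union bound over $2K=O(1)$ directions yields $|n_{\bmu}-N_s/(2K)|\le O(\sqrt{n\log(1/\delta)})$ with probability at least $1-\delta/2$.

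For the noise part, a rotation sending $V^\perp$ to $\RR^{d-2K}$ turns $\hat\Xb_n$ into a $(d-2K)\times N_n$ matrix with i.i.d.\ $N(0,\sigma_{\mathrm{noise}}^2)$ entries, where $N_n=nP-N_s=\Theta(n)$. Chi-square concentration on each column (and summing $N_n$ independent columns) gives $\|\hat\Xb_n\|_F^2=\sigma_{\mathrm{noise}}^2(d-2K)N_n(1+o(1))$; and the standard Gaussian-matrix bound (e.g.\ Davidson--Szarek) gives $\sigma_{\mathrm{noise}}\bigl(\sqrt{d-2K}-\sqrt{N_n}-O(\sqrt{\log(1/\delta)})\bigr)\le \|\hat\Xb_n\|_2\le\sigma_{\mathrm{noise}}\bigl(\sqrt{d-2K}+\sqrt{N_n}+O(\sqrt{\log(1/\delta)})\bigr)$. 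Since $d\ge\tilde\Omega(n^4)\gg N_n$ and $\log(1/\delta)=O(d)$, both bounds simplify to $\|\hat\Xb_n\|_2^2 = \sigma_{\mathrm{noise}}^2 d\,(1\pm o(1))$.

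Finally, we plug these estimates into $\sr(\hat\Xb)=\|\hat\Xb\|_F^2/\|\hat\Xb\|_2^2$ and split into the two regimes. When $\sigma_{\mathrm{noise}}\sqrt d = O(1)$, the noise spectral norm is $O(1)$ while the signal spectral norm is $\Theta(n/K)$, so $\|\hat\Xb\|_2^2=\max_{\bmu}n_{\bmu}$ and a direct computation gives both the lower bound $2K - O(\sqrt{\log(1/\delta)/n})$ (from the Hoeffding bound on $n_\bmu$) and the upper bound $2K + O(\sigma_{\mathrm{noise}}^2 d)$ (from $\|\hat\Xb_n\|_F^2/\|\hat\Xb_s\|_2^2=O(\sigma_{\mathrm{noise}}^2 d)$). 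When $\sigma_{\mathrm{noise}}\sqrt d=\Omega(1)$, I split further: if $\sigma_{\mathrm{noise}}^2 d\lesssim n$, the signal spectral norm still dominates and the same Frobenius-over-spectral calculation gives $\sr(\hat\Xb)\ge 2K+\Omega(\sigma_{\mathrm{noise}}^2 d)$; if $\sigma_{\mathrm{noise}}^2 d\gtrsim n$, the noise term dominates both the numerator and denominator, giving $\sr(\hat\Xb)\ge \|\hat\Xb_n\|_F^2/\|\hat\Xb_n\|_2^2=\Omega(N_n)=\Omega(n)$, and the two sub-cases merge into the stated minimum. The main obstacle is the spectral-norm concentration for the signal block, which must be sharp enough to preserve the leading constant $2K$ while picking up only a $O(\sqrt{\log(1/\delta)/n})$ deviation; everything else is routine Gaussian matrix analysis.
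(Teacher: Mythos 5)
Your proof is correct and follows essentially the same route as the paper: exploit orthogonality between the signal block (living in $\mathrm{span}(\cU_{+1}\cup\cU_{-1})$) and the noise block (living in its complement) to decouple the spectrum, read off the signal singular values from the diagonal/all-ones structure of the Gram matrix, concentrate the per-direction counts $n_\bmu$ via Hoeffding, bound the Gaussian noise block with standard Gaussian-matrix tail bounds, and then split into cases at $\sigma_{\mathrm{noise}}^2 d\asymp n$. Your framing of the decomposition as a zero-padded sum $\hat\Xb=\hat\Xb_s+\hat\Xb_n$ with $\hat\Xb\hat\Xb^\top$ a direct sum is cosmetically different from the paper's column-permutation-plus-singular-value-union lemma, but it is the same underlying argument.
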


Proposition \ref{prop:data_rank} characterizes the stable rank of the data matrix $\hat \Xb$ across different noise levels $\sigma_{\mathrm{noise}}$. Specifically, it identifies a boundary for the noise level $\sigma_{\mathrm{noise}}\sqrt{d}$, where the stable rank of the data matrix exhibits significantly different patterns on either side of this boundary. Here, we consider the value of $\sigma_{\mathrm{noise}}\sqrt{d}$ rather than $\sigma_{\mathrm{noise}}$, as $\sigma_{\mathrm{noise}}$ merely represents the scale at each pixel of background patches.
By multiplying the factor $\sqrt{d}$, it correctly reflects the overall noise level of each background patch. When $\sigma_{\mathrm{noise}}\sqrt{d}\leq O(1)$, the stable rank of $\hat \Xb$ is close to the clean data rank $2K$, with a difference at most $ O(1)$. While once the noise level expresses the boundary such that $\sigma_{\mathrm{noise}}\sqrt{d} \geq  \Omega(1)$, the stable rank of $\hat \Xb$ will be greater than $2K$, exceeding it by an amount at least $\Omega(1)$.



\noindent\textbf{Two-layer CNNs.} We consider a two-layer convolutional neural network whose filters are applied to the $P$ patches $\xb^{(1)}, \xb^{(2)}, \ldots, \xb^{(P)}$, 
and the second layer parameters of the network are fixed as $+1/m$'s and $-1/m$'s. The network can be written as 
\begin{align}\label{eq:two-layer_CNNs}
    f(\Wb, \Xb) = F_{+1}(\Wb_{+1},\Xb) - F_{-1}(\Wb_{-1},\Xb).
\end{align} 
Here, $F_{+1}(\Wb_{+1},\Xb)$,  $F_{-1}(\Wb_{-1},\Xb)$ are defined as 
\begin{align*}
F_j(\Wb_j,\Xb) &= \frac{1}{m}{\sum_{r=1}^m}\sum_{p=1}^P \sigma(\la\wb_{j,r},\xb^{(p)}\ra),
\end{align*}
where $j\in \{+ 1, -1\}$, and 
$m$ is the number of convolutional filters in $F_{+1}$ and $F_{-1}$. We consider Huberized ReLU activation function $\sigma(\cdot)$ defined as follows:
\begin{align}\label{eq:def_huberized_relu}
   \sigma(z)=\left\{
\begin{aligned}
&0, &&\text{if} \ z\leq 0;\\
&\frac{z^q}{q \kappa^{q-1}}, &&\text{if} \ z\in [0,\kappa];\\
&z -\bigg(1-\frac{1}{q}\bigg)\kappa, &&\text{if} \ z\geq \kappa,
\end{aligned}
\right. 
\end{align}
where $\kappa, q$ are both absolute constants that do not rely on the training size $n$, dimension $d$, and the number of filters $m$, with $q \geq 3$.
We use $\wb_{j,r}\in\RR^{d}$ to denote the weight for the $r$-th filter (i.e., neuron), and $\Wb_{j}$ is the collection of model weights associated with $F_j$. We also use $\Wb$ to denote the collection of all model weights. We note that our CNN model can also be viewed as a CNN with average global pooling.

\noindent\textbf{Training algorithm.}
 We train the above CNN model by minimizing the empirical loss on training data set $\{(\Xb_i, y_i)\}_{i=1}^n$ with cross-entropy loss function. Specifically, we define the training loss function as follows:
\begin{align*}
    L_S(\Wb) 
    &= \frac{1}{n} {\sum_{i=1}^n} \ell[ y_i \cdot f(\Wb,\Xb_i) ],
\end{align*}
where $\ell(z) = \log(1 + \exp(-z))$ is the cross-entropy loss function for binary classification.

We consider gradient descent starting from Gaussian initialization, where each entry of $\Wb_{+1}$ and $\Wb_{-1}$ is sampled from a Gaussian distribution $N(0 , \sigma_0^2)$, where $\sigma_0^2$ is the variance. The gradient descent update of the filters in the CNN can be written as
\begin{align}
    \wb_{j,r}^{(t+1)} &= \wb_{j,r}^{(t)} - \eta \cdot \nabla_{\wb_{j,r}} L_S(\Wb^{(t)}) \label{eq:gdupdate}
    %
\end{align}
for $j \in \{\pm 1\}$ and $r \in [m]$.

\section{Stable rank of CNNs}
In the previous section, we introduce some preliminaries of this paper: we define ``stable rank," the main focus of our study, and demonstrate in Proposition~\ref{prop:data_rank} that the stable rank of the data significantly increases during the initial stages of noise level growth. In this section, we present our main results regarding the stable rank of the CNN filters: compared to that of the data, the stable rank of the CNN filters is more robust to noise—it remains close to the rank of clean data, $2K$, for a wide range of varying noise level.

Before we demonstrate our results, we first present some necessary conditions regarding the parameters of our training, including the dimension $d$, the training sample size $n$, neural network width $m$ (number of filters), learning rate $\eta$, and initialization scale $\sigma_0$.


\begin{condition}\label{condition:d_sigma0_eta}
Suppose that 
\begin{enumerate}[leftmargin = *,nosep]
    \item Dimension $d$ is sufficiently large:  $d = \tilde{\Omega}(m^4 \vee n^4)$
    \item Training sample size $n$ and neural network width $m$ satisfy $n,m = \Omega(\polylog(d))$.
    \item The learning rate $\eta$ satisfies $\eta \leq  \tilde{O}(1\wedge \sigma_{\mathrm{noise}}^{-2}d^{-1})$.
    \item The standard deviation of Gaussian initialization $\sigma_0$ is small: 
    $\sigma_{0} \leq \tilde{O}\big(d^{-1/2}\wedge\sigma_{\mathrm{noise}}^{-1}d^{-1}\big)$.
\end{enumerate}
\end{condition}
These assumptions are widely made in a series of recent works on the benign overfitting phenomena of gradient descent in learning over-parameterized CNN models \citep{chatterji2021finite,kou2023benign,cao2022benign}.
In what follows, we briefly explain the reasons and intuition behind these assumptions.
Specifically, the neural network width satisfies $m = \Omega(\polylog(d))$ to ensure that, with a sufficiently high probability, the model training will neither collapse nor vanish along any basis vector during initialization (i.e., at least one filter is activated for each $\bmu$ in $\cU_{+1}\cup \ \cU_{-1}$). Additionally, the learning rate $\eta$ is required to be small enough to guarantee the convergence of the gradient descent iterations of the training loss. And the condition on the initialization scaling $\sigma_0$ ensures that gradient descent performs feature learning rather than learning random kernels. Finally, we further assume that the number of basis vectors in both $\cU_{+1}$ and $\cU_{-1}$, and the number of patches in each input is at a constant order, i.e. $K, P\leq O(1)$. These conditions remain consistent with the Proposition~\ref{prop:data_rank}.



Now we are ready to deliver our main theorem, which characterizes the stable rank of the CNN filters $\Wb^{(T)}$ after a certain number of iterations, $T$. Besides, we also present convergence results for both training loss $L_S(\Wb^{(T)})$ and test loss $L_{\cD}(\Wb^{(T)})$, where the test loss $L_{\cD}(\Wb)$ is defined as $L_{\cD}(\Wb) =\EE_{(\Xb, y)\sim \cD}[\ell(y\cdot f(\Wb, \Xb))]$.

\begin{theorem}\label{thm:main_result}
    Suppose that Condition~\ref{condition:d_sigma0_eta} holds, and further assume that $\sigma_{\mathrm{noise}}\sqrt{d} \leq \tilde O(n^{1/q})$. Then at any iteration $T = \eta^{-1}\poly(\sigma_{\mathrm{noise}}^{-2}d^{-1},\sigma_{0}^{-1}, n, m, d)\geq \tilde\Omega\big(\frac{m}{\eta \sigma_0^{q-2}}\big)$, with probability at least $1-O(m^{-1})$, the following conclusions hold.
    \begin{enumerate}[leftmargin = *]
    \item The stable rank of CNN filters is close to the rank of clean data, $2K$:
    \begin{align}\label{eq:main_result_1}
        \Big|\sr(\Wb^{(T)}) -2K\Big|\leq O\bigg(\frac{1}{\log T}\bigg).
    \end{align}
    \item The training loss converges at a rate proportional to the reciprocal of the iterations: 
    \begin{align}\label{eq:main_result_2}
        L_S(\Wb^{(T)}) \leq O\bigg(\frac{m^2}{\eta T}\bigg).
    \end{align}
    Additionally, this convergence rate is tight: if the number of object patches of each input $\Xb_i$, i.e. $|\cS_i|$, is $1$, then it holds that $L_S(\Wb^{(T)}) = \Theta\big(\frac{m^2}{\eta T}\big)$.
    \item The CNN trained by gradient descent achieves small test loss:
    \begin{align}\label{eq:main_result_3}
        L_\cD(\Wb^{(T)}) \leq  O\bigg(\frac{m^2}{\eta T}\bigg) + \exp\big(-\tilde \Omega(d)\big).
    \end{align}
\end{enumerate}
\end{theorem}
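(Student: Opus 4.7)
The plan is to carry out a signal-noise decomposition of the filter iterates, track the evolution of the resulting coefficients through the gradient descent dynamics, and then read off the three conclusions from the final shape of the filters. Specifically, for each $j\in\{\pm 1\}$, $r\in[m]$, I would write
\begin{align*}
\wb_{j,r}^{(t)} = \wb_{j,r}^{(0)} + \sum_{\bmu \in \cU_{+1}\cup \cU_{-1}} \gamma_{j,r,\bmu}^{(t)}\,\bmu + \sum_{i=1}^{n}\sum_{p\notin \cS_i} \rho_{j,r,(i,p)}^{(t)}\,\frac{\xb_i^{(p)}}{\|\xb_i^{(p)}\|},
\end{align*}
exploiting the orthogonality between the basis vectors in $\cU_{+1}\cup\cU_{-1}$ and the noise patches guaranteed by Definition~\ref{def:data}. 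Then I would derive closed-form recursions for $\gamma_{j,r,\bmu}^{(t)}$ and $\rho_{j,r,(i,p)}^{(t)}$ from the gradient update \eqref{eq:gdupdate}, and prove a set of invariants at every iteration (kept alive via induction): (i) signal coefficients $\gamma_{j,r,\bmu}^{(t)}$ are nonnegative only for $\bmu\in\cU_j$, and grow; (ii) noise coefficients $\rho_{j,r,(i,p)}^{(t)}$ remain of order $\tilde O(\sigma_0 + \sigma_{\mathrm{noise}}^2 d\cdot\text{something small})$, hence negligible compared with the signal; and (iii) one signal coefficient per filter dominates.

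The key structural step, and the main obstacle, is establishing a \emph{winner-takes-all} property: for each filter $\wb_{j,r}$, exactly one $\bmu^\star(j,r)\in\cU_j$ ends up dominating, and the identity of $\bmu^\star(j,r)$ is determined by which $\bmu\in\cU_j$ has the largest inner product with $\wb_{j,r}^{(0)}$. This is where the Huberized ReLU with $q\ge 3$ is essential: while the iterates stay inside the polynomial regime $\la\wb_{j,r},\xb^{(p)}\ra\le\kappa$, the gradient along a direction $\bmu$ scales as $(\la\wb_{j,r}^{(t)},\bmu\ra)^{q-1}$, so the ratio between the leading coefficient and any subleading one grows super-linearly and never reverses. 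Carefully chaining this polynomial growth estimate from initialization all the way through the phase where filters enter the linear regime $\la\wb_{j,r},\bmu\ra\geq\kappa$ requires bookkeeping the gap with a tensor-power style inequality, and this is the heart of the argument. A parallel concentration step, using the Gaussianity of $\wb_{j,r}^{(0)}$ together with $m=\Omega(\polylog d)$ and $|\cU_{\pm 1}|=K=O(1)$, then shows that each $\bmu\in\cU_{+1}\cup\cU_{-1}$ is picked as the winner by $\Theta(m/K)$ of the $m$ filters in its group, up to a multiplicative factor $1+O(1/\log T)$ once $T$ is polynomially large in the relevant parameters.

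Given this structure, the stable rank claim \eqref{eq:main_result_1} is a direct computation. Write $\Wb^{(T)}$ as a $d\times 2m$ matrix; by the invariants, each column is close to a scalar multiple of some $\bmu\in\cU_{+1}\cup\cU_{-1}$ up to $O(1/\log T)$ relative error, and the $2K$ column clusters indexed by $\bmu$ are orthogonal. Hence $\|\Wb^{(T)}\|_F^2 \approx \sum_{\bmu} N_{\bmu}\gamma_{\bmu}^2$ and $\|\Wb^{(T)}\|_2^2 \approx \max_{\bmu} N_{\bmu}\gamma_{\bmu}^2$, where $N_{\bmu}$ is the number of filters aligned to $\bmu$ and $\gamma_{\bmu}$ is the common magnitude; the balance $N_{\bmu}=(1\pm O(1/\log T))m/K$ and $\gamma_{\bmu}=(1\pm O(1/\log T))\gamma$ (the latter also following from the recursion, since all surviving coefficients follow the same ODE up to lower-order terms) yields $\sr(\Wb^{(T)})=2K\pm O(1/\log T)$. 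For \eqref{eq:main_result_2}, I would use the standard online-to-batch style argument adapted to the decomposition: the per-step loss decrement is at least proportional to $\eta\|\nabla L_S\|_F^2/m^2$ from smoothness of $\sigma$, and a potential-function argument on $\sum_{j,r,\bmu}\gamma_{j,r,\bmu}^{(t)}$ converts this into the $O(m^2/(\eta T))$ upper bound; the matching $\Theta$ when $|\cS_i|=1$ comes from showing that under a single object patch the margin $y_i f(\Wb^{(t)},\Xb_i)$ can grow at most logarithmically, giving a matching lower bound on $\ell$. Finally, for the test loss \eqref{eq:main_result_3}, I would split the test input $\Xb$ into its object and background parts: on the object part, the trained filters produce exactly the correct-sign logit (modulo $O(1/\log T)$) because the same $\bmu$'s appear, while on the background part the fresh Gaussian noise has inner product with each filter of order $\sigma_{\mathrm{noise}}\|\wb_{j,r}\|/\sqrt{d}$ with probability $1-\exp(-\tilde\Omega(d))$, which is negligible. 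Combining these and averaging against $\ell$ gives exactly the stated bound.
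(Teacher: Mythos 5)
Your signal--noise decomposition and the use of tensor-power dynamics to separate filters via the $q\geq 3$ polynomial recursion are the right high-level tools, and Lemma-level claims (i) and (ii) in your proposal match Lemma~\ref{lemma:opposite_learning} and the paper's bookkeeping of $\rho_{j,r,i,k'}$ and $\bXi_{j,r}$. However, there is a genuine structural error in your winner-takes-all claim that changes the entire computation.

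You assert that every filter $\wb_{j,r}$ aligns with some dominant basis vector $\bmu^\star(j,r)\in\cU_j$, so that each $\bmu$ is ``picked'' by $\Theta(m/K)$ filters, all of which grow to a common magnitude $\gamma_\bmu$. The paper proves something quite different (Theorem~\ref{thm:main_result3}, Lemmas~\ref{lemma:phase1_main}--\ref{lemma:phase2_main}): for each $\bmu_{j,k}$ there is \emph{exactly one} filter $\wb_{j,r_{j,k}}$ (the one with the largest initial inner product) that grows to norm $\Theta(m\log T)$; all $m-K$ other filters in $\Wb_j$ remain near zero, with $\|\wb_{j,r}^{(T)}\|_2 \leq O(m^{-1}) + \tilde O(\sigma_0 d^{1/2})$. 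The mechanism that forces this is the coupling through the loss derivative $\ell_i'^{(t)}$: because all filters share the same $C_t = \frac{1}{nm}\sum_i \ell_i'^{(t)}$, once the leading filter enters the linear regime and the training margins blow up, $C_t$ decays like $1/t$, and in the polynomial regime the increments $\eta C_t x_t^{q-1}$ for the lagging filters sum only to $O(x_0^{q-1}\log t)$, which stays below $\kappa$ for polynomially many iterations precisely because $q\geq 3$. With your picture of $m/K$ co-winners per $\bmu$, the computation $\|\Wb\|_F^2 \approx \sum_\bmu N_\bmu\gamma_\bmu^2$, $\|\Wb\|_2^2\approx \max_\bmu N_\bmu\gamma_\bmu^2$ happens to also give $\approx 2K$, but the $O(1/\log T)$ error you claim does not follow: if $N_\bmu$ were a count over $m$ random assignments, its fluctuation would be $O(\sqrt{m/K})$, a relative $O(\sqrt{K/m})$, which has nothing to do with $\log T$. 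In the paper the $O(1/\log T)$ error instead comes from the ratio of the $O(m)$ subleading term to the $m\log T$ leading term in $\|\wb_{j,r_{j,k}}^{(T)}\|_2$, a consequence of the matching upper and lower bounds in Lemma~\ref{lemma:phase2_main}. Your tightness argument for $L_S(\Wb^{(T)})=\Theta(m^2/(\eta T))$ when $|\cS_i|=1$ also depends on this single-winner structure: the paper upper-bounds $y_iF_{y_i}$ using the fact that only $\wb_{y_i,r_{y_i,k^*}}$ contributes $\log T$ while all other filters contribute $O(1)$; with $m/K$ co-winners the total output would scale like $(m/K)\cdot(K/m)\cdot\log T$ under your normalization, and the cancellations would need to be re-derived from scratch. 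You should replace the ``each basis vector recruits $\Theta(m/K)$ filters'' picture with the correct ``each basis vector recruits exactly one filter, determined at initialization'' picture, and then the stable rank and loss computations follow as in the paper.
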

The proof of Theorem~\ref{thm:main_result} is given in Section~\ref{section:overview_proof}. The first conclusion in 
Theorem~\ref{thm:main_result} demonstrates that, within a polynomial number of iterations, the CNN filters will reach a stable rank that is close to $2K$, which is the rank of the clean data. 
Notably, this result is established under the condition that $\sigma_{\mathrm{noise}}\sqrt{d} \leq \tilde{O}(n^{1/q})$. In comparison, by Proposition~\ref{prop:data_rank}, the stable rank of the data matrix is only close to $2K$ under the condition $\sigma_{\mathrm{noise}}\sqrt{d} \leq O (1)$. A more illustrative comparison between the stable ranks of the CNN filters and the data matrix is given in Figure~\ref{fig:comparison}. These comparisons clearly demonstrate the capability of CNNs to robustly capture the rank of clean data in the presence of relatively large background noises.

The second and third conclusions in Theorem~\ref{thm:main_result} show that gradient descent can successfully train the CNN model to achieve small training and test loss. These two conclusions back up our analysis on the stable rank by demonstrating that our analysis is built upon a proper setup where CNN training is successful. We would like to remark that establishing these theoretical guarantees on the training and test losses is particularly challenging. First of all, due to the complicated nature of the CNN model, the training objective function~\eqref{eq:two-layer_CNNs} is highly non-convex, and therefore it is challenging to show that gradient descent can minimize the training loss arbitrarily close to zero, which implies the convergence to a \textit{global minimum}. In addition, Theorem~\ref{thm:main_result} is established in a high-dimensional setting where the number of parameters ($2md$) is significantly larger than the sample size $n$. In this context, the CNN will eventually \textit{overfit} the training data (achieving a training loss arbitrarily close to zero, as shown in the second conclusion), making it challenging to establish strong guarantees on the test loss.




\begin{figure}[ht!]
	\begin{center}
 		\hspace{-0.2in}
            \includegraphics[width=0.55\textwidth]{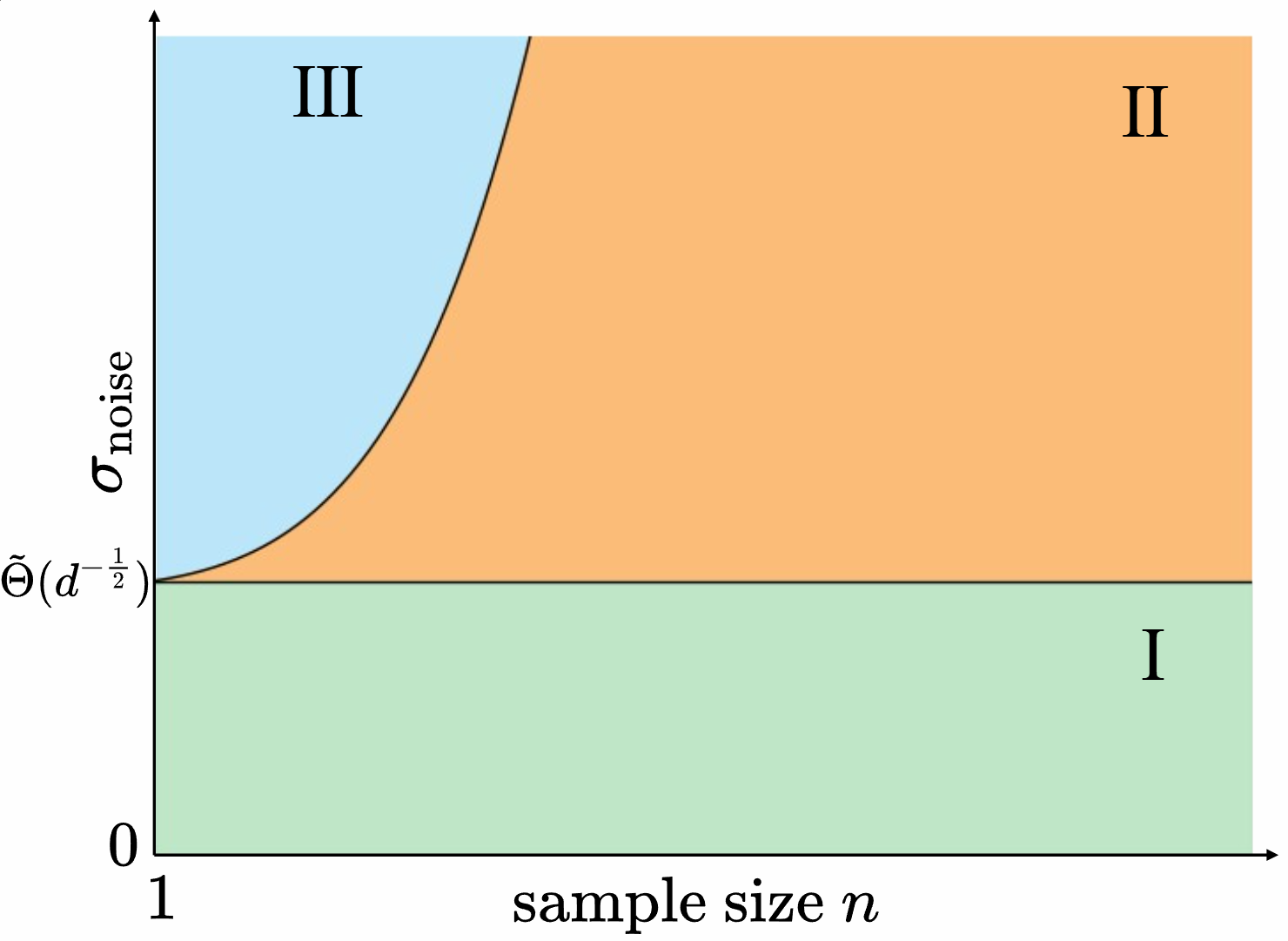}
	\end{center}
	\vskip -12pt
        \caption{Illustration of the stable ranks of the data matrix and CNN filters under different noise levels $\sigma_{\mathrm{noise}}$ and sample sizes $n$. In Region I, stable ranks of both the data matrix and the CNN filters stay close to the rank of the clean data. In Region II, the stable rank of CNN remains close to the rank of the clean data, while the stable rank of the data matrix explodes. In Region III, the stable ranks of both the data matrix and CNN filters explode. It is evident that the stable rank of CNN remains close to the rank of the clean data under a much wider regime, demonstrating its robustness to background noises.}
	\label{fig:comparison}
\end{figure}

\section{CNNs learn the clean-data subspace}
In the previous section, we present our main results regarding the stable rank of CNN filters: the stable rank of CNN filters approaches the rank of the clean data. A natural conjecture to explain this phenomenon is that the filters successfully learn the structure of the clean data, namely the basis vectors associated with object patches. This also accounts for why two-layer CNNs can achieve small training and test losses simultaneously. To rigorously verify this conjecture, we carefully examine the components of each filter throughout the training process. Our findings reveal there exist two distinct patterns among the filters: some significantly learn a particular basis vector, while others remain close to their initial state. To better present our theoretical findings, we denote $\cN$ as the orthogonal complement to the subspace spanned by the basis vectors in $\cU_{+1}\cup \ \cU_{-1}$, i.e. $\cN = \mathrm{span}(\cU_{+1}\cup \ \cU_{-1})^{\perp}$. We also denote $\Pb_{\cN}$ as the projection matrix of the subspace $\cN$, i.e. $\Pb_{\cN} = \Ib_d - \sum_{y\in\{\pm 1\}}\sum_{k=1}^K\bmu_{y, k}\bmu_{y, k}^\top$. Then based on these notations, we provide a formal illustration of our theoretical findings in the following theorem. 

\begin{theorem}\label{thm:main_result3}
Denote $\cU_{+1} = \{\bmu_{1,1}, \bmu_{1,2}, \ldots, \bmu_{1,K}\}$ and $\cU_{-1} = \{\bmu_{-1,1}, \bmu_{-1,2}, \ldots, \bmu_{-1,K}\}$.
    Under the same conditions as Theorem~\ref{thm:main_result}, there exist $K$ distinct filters $\wb_{1, r_{1, 1}}, \cdots, \wb_{1, r_{1, K}}$ in $\Wb_{+1}$ corresponding to $\bmu_{1, 1}, \cdots, \bmu_{1, K}$, and $K$ distinct filters $\wb_{-1, r_{-1, k}}, \cdots, \wb_{-1, r_{-1, K}}$ in $\Wb_{-1}$, corresponding to $\bmu_{-1, 1}, \cdots, \bmu_{-1, K}$, respectively. At any large iteration $T = \eta^{-1}\poly(d^{-1}\sigma_{\mathrm{noise}}^{-2}, \sigma_{0}^{-1}, n, m, d)\geq \tilde\Omega\big(\frac{m}{\eta \sigma_0^{q-2}}\big)$, 
    the following results hold for all $j\in \{\pm 1\}$ and $k\in [K]$, with probability at least $1-O(m^{-1})$:
    \begin{align*}
        &\Big\|\wb_{j, r_{j, k}}^{(T)} - \big(m\log (T)\big)\bmu_{j, k}\Big\|_2 \leq O\big(m\log(m\vee \eta^{-1})\big);\\
        &\big\|\wb_{j, r}^{(T)}\big\|_2\leq O(m^{-1}) + \tilde O(\sigma_0d^{1/2}), \enspace \text{if} \enspace r\neq r_{j, k}.
    \end{align*}
    Moreover, $\big\|\Pb_{\cN}\cdot ( \wb_{j, r}^{(T)} - \wb_{j, r}^{(0)} )\big\|_2\leq O(\sigma_0 n^{1/2})$ for all $j\in\{-1, +1\}$ and $r\in [m]$.
\end{theorem}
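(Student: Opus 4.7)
I would track each filter through its projections onto the basis vectors $\bmu_{y,k}$ and onto the noise subspace $\cN$. Writing $\gamma_{j,r,y,k}^{(t)} := \langle \wb_{j,r}^{(t)}, \bmu_{y,k}\rangle$ and $\mathbf{v}_{j,r}^{(t)} := \Pb_{\cN}\wb_{j,r}^{(t)}$, the orthonormality of $\cU_{+1}\cup\cU_{-1}$ and the fact that the noise distribution is supported on $\cN$ decouple the gradient cleanly: only samples with $y_i=y$ and at least one object patch $\xb_i^{(p)}=\bmu_{y,k}$ feed into the update of $\gamma_{j,r,y,k}$, with sign $j\cdot y$ and weight $\sigma'(\gamma_{j,r,y,k}^{(t)})$, while $\mathbf{v}_{j,r}$ only accumulates contributions from background patches. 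As a first consequence, once $\gamma_{j,r,-j,k}^{(t)}$ enters the flat region $z\le 0$ of $\sigma'$ it stays frozen at its $O(\sigma_0)$ initial value, while $\gamma_{j,r,j,k}^{(t)}$ is monotonically nondecreasing whenever positive.

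\textbf{Winner-take-all in the signal subspace.} For each $(j,k)$, the initial correlations $\{\langle\wb_{j,r}^{(0)},\bmu_{j,k}\rangle\}_{r\in[m]}$ are i.i.d.\ $N(0,\sigma_0^2)$, so with probability at least $1-O(m^{-1})$ there is a unique argmax $r_{j,k}$ exceeding every runner-up by a multiplicative factor of at least $1+\Omega(1/\log m)$; a union bound over the $2K=O(1)$ choices of $(j,k)$ guarantees the $2K$ indices are all distinct. While $\gamma\in(0,\kappa]$ the update of the active filter satisfies $\gamma^{(t+1)}-\gamma^{(t)} \asymp (\eta/m)\,(\gamma^{(t)})^{q-1}$ up to slowly varying loss factors, a tensor-power dynamic whose ODE solution amplifies the initial gap after raising to the $1/(q-2)$ power: the winner reaches the Huberized threshold $\kappa$ at a time when every runner-up is still at size $\tilde O(\sigma_0)$. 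After the winner enters the linear regime $\sigma'=1$, it alone drives $f(\Wb,\Xb_i)$ on $\bmu_{j,k}$-containing samples, so the loss factors $(-\ell'(y_if_i^{(t)}))$ on those samples decay and the runners-up remain at size $O(m^{-1})+\tilde O(\sigma_0\sqrt d)$, which is the second stated bound. Finally, combining the pointwise inequality $-\ell'(z)\le \ell(z)$ with the training-loss bound $L_S(\Wb^{(T)})\le O(m^2/(\eta T))$ from Theorem~\ref{thm:main_result} and telescoping yields $\sum_{t<T}\sum_i(-\ell'(y_if_i^{(t)})) \asymp nm^2(\log T)/\eta$, which via the winner's update rule gives $\gamma_{j,r_{j,k},j,k}^{(T)} = m\log T \pm O(m\log(m\vee\eta^{-1}))$; together with the frozen cross-class and noise coefficients this produces the first stated bound.

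\textbf{Noise bound and the main obstacle.} Projecting the gradient onto $\cN$ kills every object patch, giving
\[
\mathbf{v}_{j,r}^{(t+1)}-\mathbf{v}_{j,r}^{(t)} = \frac{\eta j}{nm}\sum_i\sum_{p\notin\cS_i}\big(-\ell'(y_if_i^{(t)})\big)\,y_i\,\sigma'\big(\langle \wb_{j,r}^{(t)},\xb_i^{(p)}\rangle\big)\,\xb_i^{(p)}.
\]
Under the assumptions $\sigma_{\mathrm{noise}}\sqrt d\le \tilde O(n^{1/q})$ and $\sigma_0\le \tilde O(\sigma_{\mathrm{noise}}^{-1}d^{-1})$, the inner products $\langle\wb_{j,r}^{(t)},\xb_i^{(p)}\rangle$ remain in the polynomial piece of $\sigma$ throughout training, so $\sigma'$ contributes a factor of order $(\sigma_{\mathrm{noise}}\sqrt d\cdot\|\wb_{j,r}^{(t)}\|_2)^{q-1}$ that, combined with the Gaussian-concentration bound $\|\sum_{i,p}\xb_i^{(p)}\|_2\le \tilde O(\sigma_{\mathrm{noise}}\sqrt{nd})$ and telescoped over $t<T$, yields $\|\mathbf{v}_{j,r}^{(t)} - \mathbf{v}_{j,r}^{(0)}\|_2 \le O(\sigma_0\sqrt n)$. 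The main obstacle is that the three claims intertwine: the winner-take-all argument requires $\sigma'$ on noise patches to be negligible so that the per-basis ODEs for $\gamma_{j,r,j,k}$ behave as decoupled one-dimensional dynamics, the noise bound in turn requires control over the loss factors $\sum_i(-\ell'(y_if_i^{(t)}))$, and those loss factors are governed by the growth of the winning $\gamma$. I would therefore carry all three properties as simultaneous induction hypotheses maintained step-by-step for every $T$ in the stated polynomial range, with the small learning rate and initialization in Condition~\ref{condition:d_sigma0_eta} providing the slack needed to propagate each hypothesis to the next iterate.
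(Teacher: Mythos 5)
Your overall decomposition (projecting each filter onto the signal basis and onto $\cN$), the freezing argument for cross-class coefficients, the tensor-power winner-take-all phase, and the noise-projection bound all mirror the paper's route through Lemmas~\ref{lemma:phase1_main}--\ref{lemma:opposite_learning}. However, there is a genuine logical gap in how you obtain the $m\log T$ growth rate. You write that you would combine $-\ell'(z)\le\ell(z)$ with the training-loss bound $L_S(\Wb^{(T)})\le O(m^2/(\eta T))$ ``from Theorem~\ref{thm:main_result}'' and telescope. But Theorem~\ref{thm:main_result} is proved \emph{after}, and \emph{from}, Theorem~\ref{thm:main_result3} (see Section~\ref{section:proof_main_result}): the training-loss bound there rests on the lower bound $\langle\wb_{j,r_{j,k}}^{(T)},\bmu_{j,k}\rangle\ge m\log T-\log(m^2\eta^{-1})-O(1)$, which is exactly the quantity you are trying to establish. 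Invoking Theorem~\ref{thm:main_result} to derive it is therefore circular. The paper instead obtains matching upper and lower bounds on $-\ell_i'^{(t)}$ of the form $\Theta\big(\exp(-\frac{1}{m}\langle\wb_{j,r_{j,k}}^{(t)},\bmu_{j,k}\rangle)\big)$ (Lemma~\ref{lemma:F-yi_li'bd}), feeds these into the scalar recursion for $\langle\wb_{j,r_{j,k}}^{(t)},\bmu_{j,k}\rangle$, and solves the resulting exponential-drift sequence directly (Lemma~\ref{lemma:exp_sequence}); this is self-contained and is the step your sketch is missing. Moreover, even granting the loss bound, your telescoping would need it at every intermediate $t$, not only at $T$, and the matching lower bound on $\sum_{t,i}(-\ell'_i)$ requires $-\ell'(z)=\Theta(\ell(z))$ uniformly, which again circles back to controlling the logits.

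A smaller issue: you claim the top initialization exceeds every runner-up by a multiplicative factor $1+\Omega(1/\log m)$ with probability $1-O(m^{-1})$. Anticoncentration of the ratio of two i.i.d.\ Gaussians (Lemma~\ref{lemma:absgaussian}, Lemma~\ref{lemma:initialization_norms}) only yields a gap of $1+\Theta(\delta/m^2)$ with probability $1-O(\delta)$, i.e.\ $1+\Theta(m^{-3})$ at the stated confidence level; a union bound over the $\binom{m}{2}$ pairs forces the gap to shrink polynomially in $m$. This does not break your plan, since the tensor-power comparison (Lemma~\ref{lemma:sequence_compare}) amplifies any positive gap $c$ as long as $\eta$ and $\sigma_0$ scale with $c$ (Condition~\ref{condition:d_sigma0_eta} does), but the quantitative claim as stated is incorrect. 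Finally, in the noise bound, the factor you write as $(\sigma_{\mathrm{noise}}\sqrt d\cdot\|\wb_{j,r}^{(t)}\|_2)^{q-1}$ should use $\|\Pb_{\cN}\wb_{j,r}^{(t)}\|_2$ rather than $\|\wb_{j,r}^{(t)}\|_2$: the latter grows like $m\log T$ for the winning filter, whereas $\langle\wb_{j,r}^{(t)},\bxi_{i,k'}\rangle$ depends only on the $\cN$-component, which is precisely what keeps the argument bounded.
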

Theorem~\ref{thm:main_result3} demonstrate that for each basis vector $\bmu_{j, k}$, there exists exactly one distinct filter $\wb_{j, r_{j,k}}$ can significantly learn this basis vector. Specifically, Theorem~\ref{thm:main_result3} straightforwardly implies that when $T$ is large, the length of the projection of $\wb_{j, r_{j, r}}^{(t)}$ onto $\bmu_{j, k}$, i.e., $\la \wb_{j, r_{j, r}}^{(t)},\bmu_{j, k}\ra$, is bounded from both above and below at the same order of $m\log (T)$. This indicates that we can accurately characterize the growth of $\wb_{j, r_{j, r}}^{(t)}$ along the direction of $\bmu_{j ,k}$. For those other filters not corresponding to any basis vector, their norm is upper bounded by a small value, suggesting that they remain close to their initialization throughout the training process. Additionally, for all filters, their projection onto the subspace $\cN$ remains at an exceptionally low level. This implies that all filters, including those that can increase significantly, learn very little from the background noises. Therefore, Theorem~\ref{thm:main_result3} elaborates on the properties of CNNs trained by gradient descent in more details, and provides supplementary results that further explain why the stable rank results of the CNN holds as stated in Theorem~\ref{thm:main_result}.


According to the conclusions of Theorem~\ref{thm:main_result3}, we can intuitively treat $\wb_{j, r_{j, k}}^{(T)}\approx m\log(T)\bmu_{j, k}$ and $\wb_{j, r}^{(T)}\approx \mathbf{0}$ if $r\neq r_{j, k}$. One interesting finding is that the coefficient $m\log (T)$ remains consistent across different basis vectors.  Based on this observation, if we rearrange the rows in $\Wb_j$ such that the first $K$ rows exactly represent those $K$ filters corresponding to different basis vectors from $\cU_{j}$ in order, the permutated filter matrix will align directionally with the matrix $\Wb_{j}^* = [\bmu_{j, 1}, \cdots, \bmu_{j, K} , \mathbf{0}_d , \cdots, \mathbf{0}_d]^\top$. The following corollary provides a formal illustration of this conclusion.

\begin{corollary}\label{crlry:main_result}
    Let $\Wb_{j}^* = [\bmu_{j, 1}, \cdots, \bmu_{j, K} , \mathbf{0}_d , \cdots, \mathbf{0}_d]^\top \in \RR^{m\times d}$ with $j\in \{\pm 1\}$. Under the same conditions as Theorem~\ref{thm:main_result}, there exist permutation matrices $\Pb_+ ,\Pb_-\in \RR^{m \times d}$. At any large iteration 
    $T = \eta^{-1}\poly(\sigma_{\mathrm{noise}}^{-2}d^{-1},\sigma_{0}^{-1}, n, m, d)\geq \tilde\Omega\big(\frac{m}{\eta \sigma_0^{q-2} }\big)$, with probability at least $1-O(m^{-1})$, it holds that
    \begin{align*}
        \Bigg\|\frac{\Wb_{+1}^*}{\| \Wb_{+1}^* \|_F}- \Pb_+\frac{\Wb_{+1}^{(T)}}{ \| \Wb_{+1}^{(T)} \|_F} \Bigg\|_F, \Bigg\|\frac{\Wb_{-1}^*}{\| \Wb_{-1}^* \|_F}- \Pb_{-}\frac{\Wb_{-1}^{(T)}}{ \| \Wb_{-1}^{(T)} \|_F} \Bigg\|_F \leq O\bigg(\frac{1}{\log (T)}\bigg).
    \end{align*}
\end{corollary}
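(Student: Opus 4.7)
The plan is to construct the permutation matrix $\Pb_j$ explicitly from the indices $r_{j,1},\ldots,r_{j,K}$ supplied by Theorem~\ref{thm:main_result3}, and then reduce the desired Frobenius estimate to a row-by-row comparison. For each $j\in\{\pm 1\}$, I take $\Pb_j$ to be the permutation that moves the $r_{j,k}$-th row of $\Wb_j^{(T)}$ to position $k$ for $k\in[K]$, leaving the remaining rows in any order; this is well defined because Theorem~\ref{thm:main_result3} guarantees that $r_{j,1},\ldots,r_{j,K}$ are distinct. After this permutation, the first $K$ rows of $\Pb_j\Wb_j^{(T)}$ are aligned row-wise with the $K$ nonzero rows $\bmu_{j,1}^\top,\ldots,\bmu_{j,K}^\top$ of $\Wb_j^*$.

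Next I would estimate the two normalizing constants. Clearly $\|\Wb_j^*\|_F^2 = K$ since each $\bmu_{j,k}$ has unit norm. For $\|\Wb_j^{(T)}\|_F^2 = \sum_{r=1}^m\|\wb_{j,r}^{(T)}\|_2^2$, Theorem~\ref{thm:main_result3} yields $\|\wb_{j,r_{j,k}}^{(T)}\|_2 = m\log T \pm O(m\log(m\vee\eta^{-1}))$ for the $K$ learning filters and $\|\wb_{j,r}^{(T)}\|_2 \leq O(m^{-1})+\tilde O(\sigma_0 d^{1/2}) = \tilde O(1)$ for the remaining $m-K$ non-learning filters (the latter using Condition~\ref{condition:d_sigma0_eta}). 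Summing and invoking $K = O(1)$,
\begin{align*}
\|\Wb_j^{(T)}\|_F^2 \;=\; K(m\log T)^2\cdot\bigl(1+O(\log(m\vee\eta^{-1})/\log T)\bigr).
\end{align*}

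Finally, the Frobenius norm of the difference is the square root of the sum of squared row differences. For $k\in[K]$, writing $\wb_{j,r_{j,k}}^{(T)} = (m\log T)\bmu_{j,k}+\bm\epsilon_{j,k}$ with $\|\bm\epsilon_{j,k}\|_2 = O(m\log(m\vee\eta^{-1}))$, the $k$-th row difference equals $\tfrac{\bmu_{j,k}}{\sqrt K} - \tfrac{(m\log T)\bmu_{j,k}+\bm\epsilon_{j,k}}{\sqrt K\,m\log T\,(1+\delta)^{1/2}}$ with $\delta = O(\log(m\vee\eta^{-1})/\log T)$. Expanding $(1+\delta)^{-1/2} = 1-\delta/2+O(\delta^2)$ and applying the triangle inequality, each such row contributes at most $O(1/\log^2 T)$ to the squared Frobenius norm. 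For the remaining rows, $\Wb_j^*$ is zero while the matching row of $\Pb_j\Wb_j^{(T)}/\|\Wb_j^{(T)}\|_F$ has norm at most $\tilde O(1)/(\sqrt K\,m\log T)$, whose aggregate squared contribution is $\tilde O(1/(m\log^2 T))$. Combining the two yields $O(1/\log^2 T)$, whose square root gives the desired $O(1/\log T)$.

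The main obstacle I anticipate is the logarithmic bookkeeping rather than anything structural: after normalization, the per-filter error becomes $O(\log(m\vee\eta^{-1})/\log T)$, which collapses to $O(1/\log T)$ only because the admissible range of $T$ in the theorem is polynomial in $m\vee\eta^{-1}$, $d$, $\sigma_0^{-1}$, etc., so $\log T$ is of the same order as $\log(m\vee\eta^{-1})$ and the extra logarithm is absorbed into the implicit constant. Making this rigorous amounts to choosing the polynomial factor inside $T = \eta^{-1}\poly(\cdots)$ large enough that the $O(1/\log T)$ in the corollary statement tolerates an absolute multiplicative constant that depends on this polynomial; no further analysis of the training dynamics is needed, since the corollary is essentially a quantitative reformulation of Theorem~\ref{thm:main_result3}.
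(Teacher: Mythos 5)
Your high-level approach matches the paper's: construct $\Pb_j$ from the distinct indices $r_{j,1},\dots,r_{j,K}$ of Theorem~\ref{thm:main_result3}, compare the matrices row by row, and combine via subadditivity. The rough calculations of $\|\Wb_j^*\|_F$, $\|\Wb_j^{(T)}\|_F$, and the non-learning rows are all fine.

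However, your treatment of the $K$ learning rows has a genuine gap. You write $\wb_{j,r_{j,k}}^{(T)} = (m\log T)\bmu_{j,k} + \bm\epsilon_{j,k}$ with only the norm bound $\|\bm\epsilon_{j,k}\|_2 = O(m\log(m\vee\eta^{-1}))$, expand the normalized row difference, and then apply the triangle inequality. This yields a per-row error of $O(\log(m\vee\eta^{-1})/\log T)$, and you acknowledge this does not directly give $O(1/\log T)$. Your proposed remedy — "absorb the extra logarithm into the implicit constant because $\log T$ is of the same order as $\log(m\vee\eta^{-1})$" — does not work: if $\log T = \Theta(\log(m\vee\eta^{-1}))$ then $\log(m\vee\eta^{-1})/\log T = \Theta(1)$, which is constant, not $O(1/\log T)$. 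Choosing the polynomial factor in $T$ larger can make this ratio small, but not $O(1/\log T)$; that would require $\log(m\vee\eta^{-1})=O(1)$, which is not assumed.

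The missing idea is a cancellation that your triangle inequality discards. By Lemma~\ref{lemma:phase2_main}, the $\bmu_{j,k}$-component of $\bm\epsilon_{j,k}$ is not merely "some vector of norm $O(m\log(m\vee\eta^{-1}))$" — it equals $-m\log(m^2\eta^{-1}) + O(m)$, and the \emph{same} offset $-m\log(m^2\eta^{-1})$ appears in $\|\Wb_j^{(T)}\|_F/\sqrt{K}$ (i.e., in your $\delta$). When you subtract $\bmu_{j,k}/\sqrt{K}$ from the normalized row, the term $\tfrac{\bmu_{j,k}}{\sqrt K}\bigl(1-(1+\delta)^{-1/2}\bigr)$ and the $\bmu_{j,k}$-component of $\tfrac{\bm\epsilon_{j,k}}{\sqrt K\,m\log T\,(1+\delta)^{1/2}}$ cancel to leading order, leaving a residual of size $O(m)/(m\log T - m\log(m^2\eta^{-1}) + O(m))$. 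The paper's proof exploits this by writing both $\la\wb_{1,r_{1,k}}^{(T)},\bmu_{1,k}\ra$ and $\|\Wb_{+1}^{(T)}\|_F/\sqrt K$ in the explicit common form $m\log T - m\log(m^2\eta^{-1}) + O(m)$, so the ratio minus $1$ reduces to $O(1)/(\log T - \log(m^2\eta^{-1}) + O(1))$, which then becomes $O(1/\log T)$ under the iteration lower bound $T = \tilde\Omega(m/(\eta\sigma_0^{q-2}))$ and Condition~\ref{condition:d_sigma0_eta} (which force $\log T - \log(m^2\eta^{-1}) = \Omega(\log T)$). To close the gap, you should expand the row difference in the $\bmu_{j,k}$-direction and the orthogonal directions separately and track the $-m\log(m^2\eta^{-1})$ offset explicitly rather than bounding $\bm\epsilon_{j,k}$ only by its norm.
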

Corollary~\ref{crlry:main_result} demonstrates that as the training proceeds, the normalized filter matrix will approach the normalized clean data matrix at a rate of $O\big(\frac{1}{\log (T)}\big)$. This corollary intuitively explains why the stable rank of CNN filters $\Wb^{(T)}$ approaches $2K$. Based on the directional convergence results, we can approximately take $\Wb_{+1}^{(T)} \approx \frac{\| \Wb_{+1}^{(T)} \|_F}{\| \Wb_{+1}^* \|_F}\Pb_+^{-1}\Wb_{+1}^*$. Multiplication by an orthogonal matrix or a constant does not affect the stable rank, as the stable rank is determined by the relative scales of different singular values. Since the stable rank of $\Wb_{+1}^*$ is $K$, we can conclude the stable rank of $\Wb_{+1}^{(T)}$ will approach $K$, at the same rate as their directional convergence. Similar conclusion also holds for $\Wb_{-1}^{(T)}$. Additionally, as the components of $\Wb_{+1}^{(T)}$ and $\Wb_{-1}^{(T)}$ are nearly orthogonal based on our assumption that $\mathrm{span}(\cU_{+1})$ and $\mathrm{span}(\cU_{-1})$ are orthogonal, it is evident that the stable rank of $\Wb^{(T)}$ is approximately $2K$.
\section{Experiments}
In this section, we present our experimental results to
backup our theoretical results and show a two-layer CNN is robust to background noise in data.

We generate training data from the MNIST \citep{MNIST} and CIFAR10 \citep{CIFAR} datasets according to Definition \ref{def:data}. 
We use images from two selected classes as the source of object patches for the $y=-1$ class and the $y=1$ class, 
respectively. 
To convert the original image into a data point with a low-rank structure, we reshape each image into a vector and stack all these vectors into a matrix. Then, we use PCA to control the number of principal components in this matrix, reflecting the rank of the clean data. After performing PCA, we reshape each column back into an image and pad it with a circle of background patches filled with Gaussian noise. Specifically, each pixel in the background patches is sampled from $N(0, \sigma_{\mathrm{noise}}^2)$, 
where we set $\sigma_{\mathrm{noise}}$ to different values to verify how our model behaves under varying levels of noise. 
We consider a CNN model as defined in Section \ref{sec:ProblemSetup}, and the weights of this CNN model are initialized from Gaussian distribution with a small standard deviation, consistent with our theoretical settings.
For different data, we use different setups to generate the noise data and run the full batch gradient decent to train the CNN:

\begin{figure}[!tbp]
    \centering
    \subfloat[$\sigma_{\mathrm{noise}} = 0.01$]{
        \includegraphics[width=0.25\linewidth]{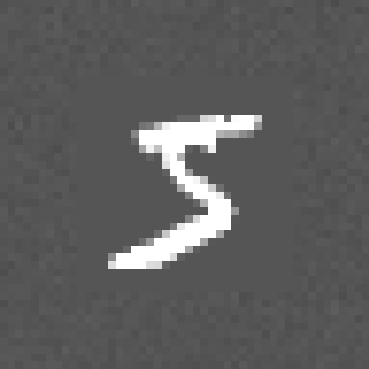}
    }
    \subfloat[$\sigma_{\mathrm{noise}} = 0.1$]{
        \includegraphics[width=0.25\linewidth]{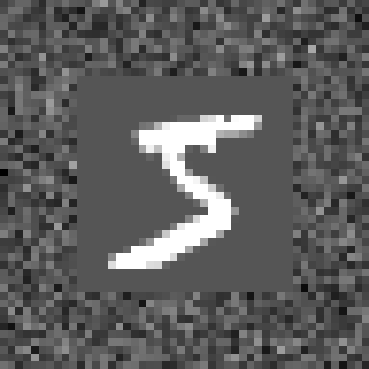}
    }
    \subfloat[$\sigma_{\mathrm{noise}} = 0.2$]{
        \includegraphics[width=0.25\linewidth]{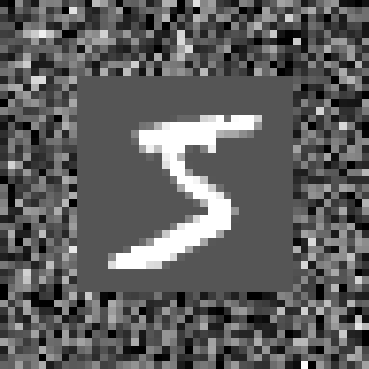}
    }
    \caption{
        Illustration of a training image from the MNIST dataset, reduced to rank $10$ and padded with a circle of noise.
    }
    \label{fig:MNISTexample}
\end{figure}

\noindent\textbf{MNIST.}
The MNIST images undergo dimensionality reduction to three levels of rank: $10$, $20$, and $30$. 
Then each column of the matrix, corresponding to an image, is reshaped to its original size and 
padded with a 14-pixel wide circle of noise (An example is shown in Figure \ref{fig:MNISTexample}). 
The padded pixels are entry-wise Gaussian noise $N(0, \sigma_{\mathrm{noise}}^2)$, where $\sigma_{\mathrm{noise}}$ is set to $0$, $0.01$, $0.1$, $0.12$, $0.15$, $0.18$, and $0.2$. 
Additionally, the model width $m$ is set to 128. For each rank level, the standard deviation of initialization distribution is set to $1\text{e-3}$, $1\text{e-3}$, and $1\text{e-2}$, respectively.

\noindent\textbf{CIFAR10.}
To reduce the complexity of the CIFAR-10 dataset and facilitate the observation of the phenomenon, we transform the original CIFAR-10 images into embeddings using ResNet-18. All embeddings are then stacked into a matrix, which undergoes dimensionality reduction to ranks of $15$, $20$, and $25$. After reducing the dimensions of the embedding matrix, we concatenate each embedding with a noise vector.
Here, the standard deviation $\sigma_{\mathrm{noise}}$ is set to $0$, $0.1$, $0.5$, $0.62$, $0.65$, $0.7$ and $0.8$. 
For the hyperparameters of the CNN model, the width $m$ is set to $256$, $512$ and $128$, and the standard deviation of initialization distribution are set to $1\text{e-6}$, $3\text{e-7}$, $3\text{e-7}$ for each different rank level.

\noindent\textbf{Synthetic Data.} 
In addition to using two real-world datasets, we also conduct experiments on synthetic data. 
We strictly follow Definition \ref{def:data} to generate the synthetic data. 
For the object patches, we set $K = 10, 20$, and $30$, and choose one-hot vectors as the basis vectors assigned to object patches. Then, we set 
$|\cS| = 1$ and $P = 3$, which means each data instance contains one object patch and two Gaussian noise patches. 
And $\sigma_{\mathrm{noise}}$ is set to $0$, $0.001$, $0.0065$, $0.009$, $0.01$, $0.012$, $0.015$. 
For the CNN model,  the standard deviation of initialization distribution is set to $1 \times 10^{-4}$, 
and the model width $m$ is set to 128.

\noindent\textbf{Result.} 
According to Theorem \ref{thm:main_result3}, the rank of the filter 
is approximately equal to the number of basis vectors. In this experiment, we report 
three different ranks: the dimension of PCA, \textit{i.e.} the rank of pure basis vectors without noise, the rank of model weights, and the rank 
of the data with noise. Here, we verify whether the dimension of PCA is roughly 
equal to the rank of the filters after training.
To evaluate the rank of the model weights and the noise data, we denote the number of 
their singular values larger than $\lambda_{\max} / 100$ as the rank, where $\lambda_{\max}$ 
is the maximal singular value of the corresponding matrix. 
In all experiments, the rank results are presented when models have been trained to achieve 
a very small training error, in the range of $1\text{e-1}$ to $1\text{e-2}$.
As shown in Figure \ref{fig:MNISTandCIFAR}, it is evident that the rank of data will dramatically explode as the level of background noise starts to increase. In comparison,  the rank of the 
CNN filter remains approximately the same as the PCA dimension for all different levels of background noise. These empirical observations support our theoretical findings that CNNs can recover the intrinsic low-rank structure of the clean data, even when significant noise has been added to the data, which obscures the low-rank structure in the data matrix.


\begin{figure}[!tbp]
    \centering
    \subfloat[Rank visualization of data and CNN filters on MNIST dataset.]{
        \includegraphics[width=0.95\linewidth]{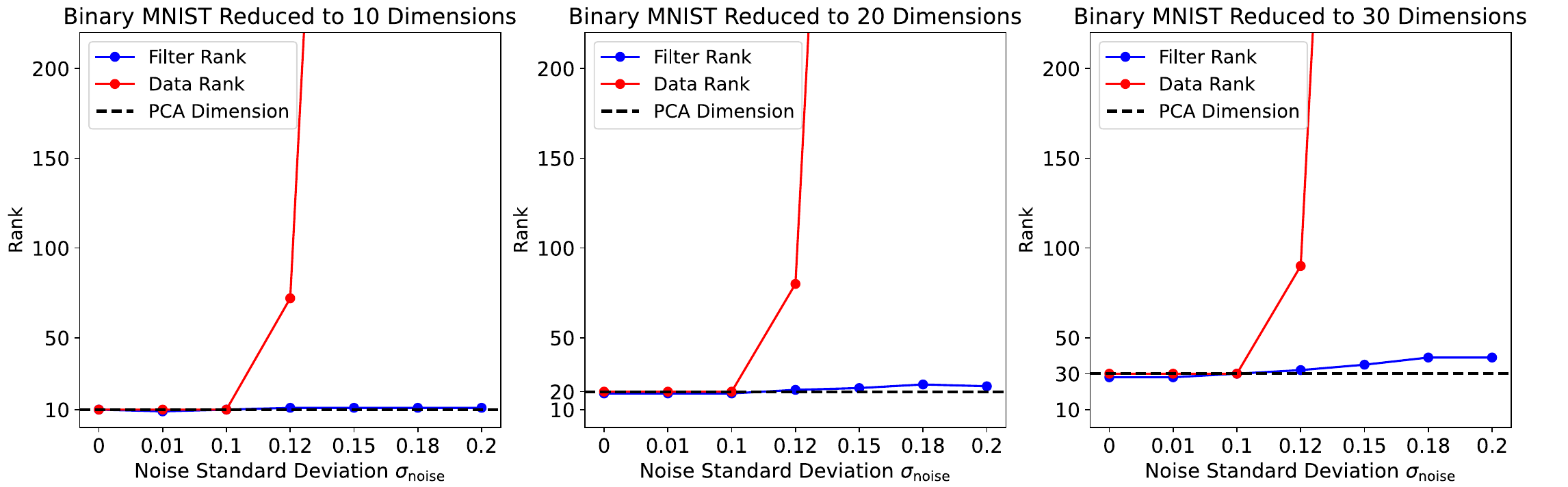}
    }\\
    \subfloat[Rank visualization of data and CNN filters on CIFAR10 dataset.]{
        \includegraphics[width=0.95\linewidth]{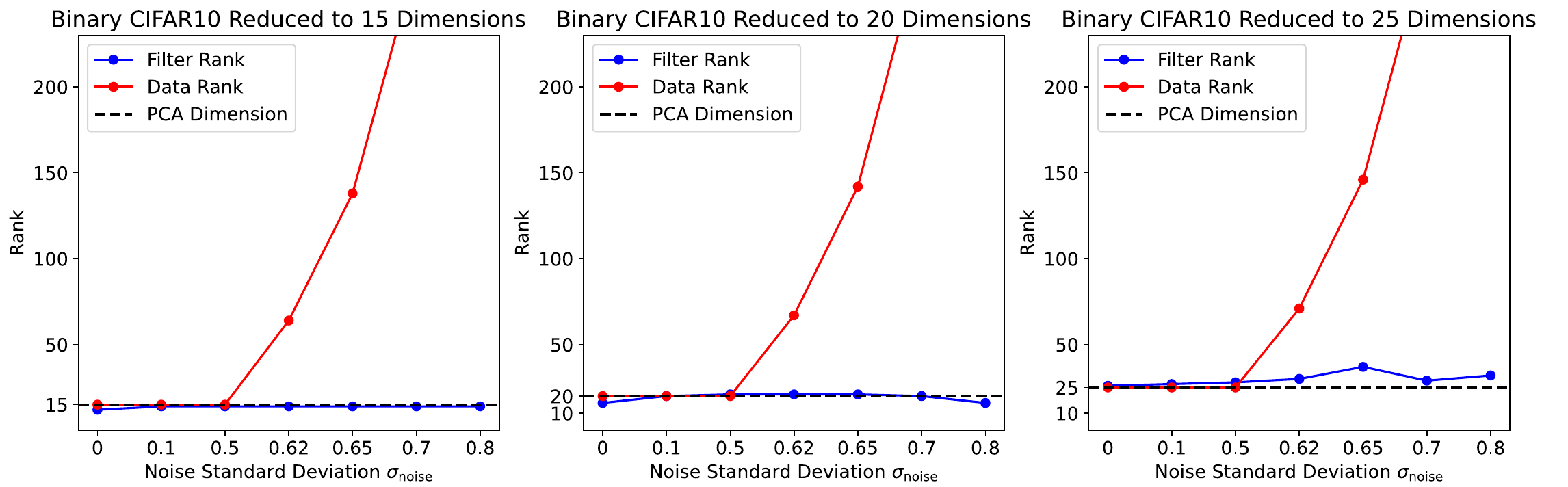}
    }\\
    \subfloat[Rank visualization of data and CNN filters on Synthetic Data.]{
        \includegraphics[width=0.95\linewidth]{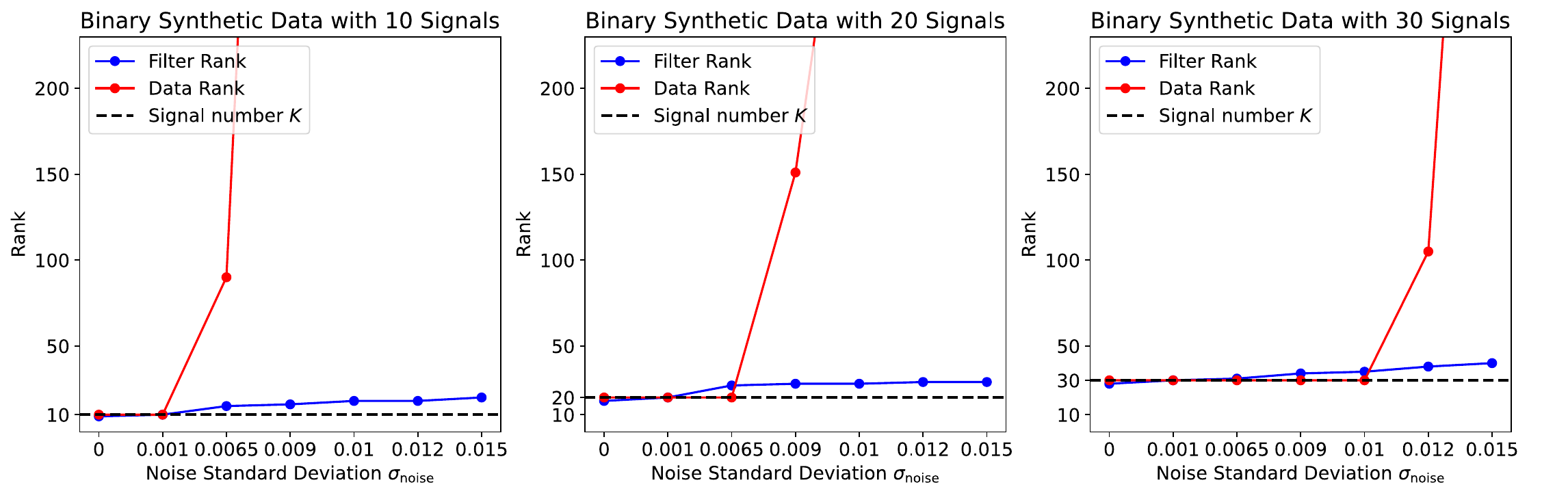}
    }
    \caption{
        Rank of the data and learned filters under different noise levels. Here $x$-axis represents the value of the noise level $\sigma_{\mathrm{noise}}$, and $y$-axis is the rank. From the figures, it can be clearly observed that the data rank increases rapidly as the noise becomes stronger, while the rank of the CNN filters remains robust against the noise, and keeps being the same as the rank of clean data.
    }
    \label{fig:MNISTandCIFAR}
\end{figure}

\section{Overview of proof technique}\label{section:overview_proof}
In this section, we explain how we establish our main theoretical results. We begin by introducing several key lemmas that play a crucial role in proving Theorem~\ref{thm:main_result3}. We then present a proof for Theorem~\ref{thm:main_result3} based on these lemmas. Finally, we demonstrate the proof for Theorem~\ref{thm:main_result}, which can be easily derived from the results of Theorem~\ref{thm:main_result3}.
\subsection{Proof of Theorem~\ref{thm:main_result3}}

We first introduce some further notations regarding the data model $\cD$ defined in Definition~\ref{def:data}. For each input $\Xb_i$ with $i \in [n]$, $\cS_i$ determines the indices of the object patches among this input, with $s_i=|\cS_i|$. We denote $\bnu_{i, 1}, \cdots, \bnu_{i, s_i}\in \cU_{y_i}$ as the basis vectors in the $i$-th training data point. 
Additionally, we denote $\bxi_{i, 1}, \bxi_{i, 2}, \cdots, \bxi_{i, P-s_i}$ as the Gaussian noise vectors assigned to the background patches, i.e., $\bxi_{i, k'}\sim N\big(\mathbf{0}, \sigma_{\mathrm{noise}}^2 \cdot (\Ib_d - \sum_{y\in\{\pm 1\}}\sum_{k=1}^K\bmu_{y, k}\bmu_{y, k}^\top)\big)$. Based on these notations, 
we can rewrite our gradient descent iterative formula~\eqref{eq:gdupdate} as
\begin{align}\label{eq:gdupdate3}
    \wb_{j,r}^{(t+1)} &= \wb_{j,r}^{(t)} - \eta \cdot \nabla_{\wb_{j,r}} L_S(\Wb^{(t)}) \nonumber\\
    &= \wb_{j,r}^{(t)} - \frac{\eta}{nm} \sum_{i=1}^n\sum_{p=1}^{P} \ell_i'^{(t)} \cdot  \sigma'(\la\wb_{j,r}^{(t)}, \xb_{i}^{(p)}\ra)\cdot j y_{i}\xb_{i}^{(p)}\nonumber\\
    &= \wb_{j,r}^{(t)} - \frac{\eta}{nm} \sum_{i=1}^n\sum_{k'=1}^{P-s_i} \ell_i'^{(t)} \cdot  \sigma'(\la\wb_{j,r}^{(t)}, \bxi_{i, k'}\ra)\cdot j y_{i}\bxi_{i, k'}\nonumber\\
    & \quad - \frac{\eta}{nm} \sum_{i=1}^n\sum_{k=1}^{s_i} \ell_i'^{(t)} \cdot \sigma'(\la\wb_{j,r}^{(t)}, \bnu_{i, k}\ra)\cdot jy_i \bnu_{i, k}, 
\end{align}
where $\ell_i'^{(t)} = \ell'[ y_i \cdot f(\Wb^{(t)},\Xb_i) ] $. 
This iterative rule reveals that the update of all filters $\wb_{j, r}$'s are always a linear combination of the basis vectors from $\cU_{+1}$, $\cU_{-1}$, and Gaussian noise vectors $\bxi_{i, k'}$'s generated in the subspace $\cN$, which is the complement subspace of $\cU_{+1} \cup \ \cU_{-1}$. Based on these observations, to characterize the content learned by each filter, it is sufficient to study the inner products between this filter and the basis vectors and noise vectors. For each basis vector $\bmu_{y, k}$ from $\cU_{y}$, we provide a careful analysis of the dynamic process of its inner product with each filter $\wb_{j, r}$. As to demonstrate our desired conclusions regarding stable rank, we need a refined quantification of the projection of each filter onto each basis vector. While for noise vectors, we do not consider the inner product with each filter. Instead, we focus on the projection of each filter $\wb_{j, r}$ into the subspace $\cN$, i.e. $\Pb_{\cN}\cdot\wb_{j, r}^{(t)}$. We propose this design as a small upper-bound for $\big\|\Pb_{\cN}\cdot\wb_{j, r}^{(t)}\big\|_2$ is sufficient to illustrate that the filter $\wb_{j, r}$ learns little from any noise vectors.

\noindent\textbf{When training starts, filters exhibit distinct patterns according to initialization.}
The working mechanism of CNNs, in which each filter interacts with all filters, implies that these filters exhibit symmetry and translation invariance. These properties also hold when the filters are updated using gradient descent. Specifically, we note that the iterative rule~\eqref{eq:gdupdate3} is nearly the same for different filters, differing only in their initialization along various directions, i.e. $\la\wb_{j,r }^{(0)}, \bmu_{y, k}\ra$'s and $\la\wb_{j,r }^{(0)}, \bxi_{i, k'}\ra$'s. Based on this observation, we intuitively conjecture that the initialization of the inner products $\la\wb_{j, r}^{(0)}, \bmu_{y, k}\ra$ might play a key role in analyzing the increasing trajectory of $\la\wb_{j, r}^{(t)}, \bmu_{y, k}\ra$.  Therefore, we focus on the filters with the specific initialization, and successfully demonstrate that the filters with the maximum inner product with a basis vector at initialization will exhibit significantly distinct patterns during the early training phase. We provide a formal illustration of this phenomenon in the following lemma.
\begin{lemma}\label{lemma:phase1_main}
Under the same conditions as Theorem~\ref{thm:main_result},
for each basis vector $\bmu_{j, k}$ from $\cU_{j}$, let $\wb_{j, r_{j,k}}$ denote the filter in $\Wb_{j}$ with largest inner product with $\bmu_{j, k}$, i.e $\wb_{j, r_{j,k}} = \argmax_{\wb_{j, r}\in \Wb_{j}} \la\wb_{j, r}^{(0)}, \bmu_{j, k}\ra$. Then with probability at least $1-O(m^{-1})$, the filters $\wb_{j, r_{j,k}}$'s are distinct for different basis vector $\bmu_{j, k}$, and there exist an iteration number $T_1 = \tilde\Theta\big(\frac{m}{\eta \sigma_0^{q-2}}\big)$ such that the following results hold for all $j\in \{\pm 1\}$ and $k\in [K]$:
\begin{align*}
    &\la \wb_{j, r_{j, k}}^{(T_1)}\bmu_{j, k}\ra \geq \kappa, \enspace \text{and}\enspace \big|\la\wb_{j,r}^{(T_1)}, \bmu_{j, k}\ra \big|\leq \frac{1}{4Km}, \enspace \text{if} \enspace r\neq r_{j, k}.
\end{align*}
Moreover, $\big\|\bXi_{j, r}^{(t)}\big\|_2^2 \leq \frac{\sigma_0^2 n P}{2}$ hold for all $j\in \{\pm 1\}$, $r\in[m]$ and $0 \leq t \leq T_1$, where $\bXi_{j, r}^{(t)}=\Pb_{\cN}\cdot(\wb_{j, r}^{(t)}-\wb_{j, r}^{(0)})$.
\end{lemma}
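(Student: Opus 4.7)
The plan is to analyze Phase 1 of training by projecting each filter onto the basis vectors in $\cU_{+1}\cup\cU_{-1}$ and onto the complementary subspace $\cN$, then track every scalar coordinate up to the first time $T_1$ that an inner product $\la\wb_{j,r}^{(t)},\bmu_{j,k}\ra$ reaches the kink $\kappa$ of the Huberized ReLU. The key driver is the activation $\sigma'(z)=z^{q-1}/\kappa^{q-1}$ on $[0,\kappa]$, which turns the coordinate-wise recursion into a tensor-power iteration and sharply amplifies the initial advantage of the winning filter.

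I would first control initialization. For each fixed $(j,k)$ the inner products $\{\la\wb_{j,r}^{(0)},\bmu_{j,k}\ra\}_{r\in[m]}$ are i.i.d.\ $N(0,\sigma_0^2)$, so standard Gaussian extreme-value estimates give $\la\wb_{j,r_{j,k}}^{(0)},\bmu_{j,k}\ra=\tilde\Theta(\sigma_0)$ with a multiplicative gap $1+\Omega(1/\log m)$ above the runner-up. Because $2K=O(1)$ and $m=\Omega(\polylog d)$, a union bound shows that $r_{j,k}$ are mutually distinct across the $2K$ basis vectors with probability at least $1-O(m^{-1})$. Along the way, Gaussian and $\chi^2$ concentration under Condition~\ref{condition:d_sigma0_eta} (in particular $d\ge\tilde\Omega(n^4\vee m^4)$) yield the auxiliary bounds
\begin{align*}
|\la\wb_{j,r}^{(0)},\bxi_{i,k'}\ra|\le\tilde O(\sigma_0\sigma_{\mathrm{noise}}\sqrt{d}),\qquad |\la\bxi_{i,k'},\bxi_{i',k''}\ra|\le\tilde O(\sigma_{\mathrm{noise}}^2\sqrt{d}),
\end{align*}
for distinct noise patches, which will be propagated through Phase 1 inductively.

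Next I would take inner products of~\eqref{eq:gdupdate3} with $\bmu_{j,k}$. By the mutual orthogonality of $\mathrm{span}(\cU_{+1})$, $\mathrm{span}(\cU_{-1})$, and $\cN$, all cross terms vanish and the recursion collapses to
\begin{align*}
\la\wb_{j,r}^{(t+1)},\bmu_{j,k}\ra = \la\wb_{j,r}^{(t)},\bmu_{j,k}\ra - \frac{\eta}{nm}\sum_{i\in\cI_{j,k}}\ell_i'^{(t)}\,\sigma'\!\big(\la\wb_{j,r}^{(t)},\bmu_{j,k}\ra\big),
\end{align*}
where $\cI_{j,k}=\{i:y_i=j,\ \bmu_{j,k}\in\{\bnu_{i,1},\ldots,\bnu_{i,s_i}\}\}$ has size $\Theta(n)$ with high probability. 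Maintaining the inductive hypothesis that $f(\Wb^{(t)},\Xb_i)=O(1)$, one has $-\ell_i'^{(t)}\in[1/4,1/2]$, so writing $g_{j,r,k}^{(t)}:=\la\wb_{j,r}^{(t)},\bmu_{j,k}\ra$, the scalar recursion becomes a tensor-power iteration $g^{(t+1)}\approx g^{(t)}+c\,(g^{(t)})_+^{q-1}$ with $c=\Theta(\eta/(m\kappa^{q-1}))$. The associated ODE $\dot g\asymp g^{q-1}\eta/m$ blows up from initial value $g_0$ in time $\Theta(g_0^{2-q}\cdot m/\eta)$; substituting $g_0=\tilde\Theta(\sigma_0)$ for the winner yields $T_1=\tilde\Theta(m/(\eta\sigma_0^{q-2}))$. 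For non-winners $r\ne r_{j,k}$ the multiplicative initial gap is amplified by the power iteration, so at $t=T_1$ they satisfy $g_{j,r,k}^{(T_1)}\le\sigma_0\cdot\polylog(m,d)\le 1/(4Km)$ under Condition~\ref{condition:d_sigma0_eta}.

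Finally, projecting~\eqref{eq:gdupdate3} onto $\cN$ kills all basis-vector contributions and leaves
\begin{align*}
\bXi_{j,r}^{(t+1)} = \bXi_{j,r}^{(t)} - \frac{\eta}{nm}\sum_{i=1}^n\sum_{k'=1}^{P-s_i}\ell_i'^{(t)}\,\sigma'\!\big(\la\wb_{j,r}^{(t)},\bxi_{i,k'}\ra\big)\, j y_i\, \bxi_{i,k'}.
\end{align*}
Inductively $|\la\wb_{j,r}^{(t)},\bxi_{i,k'}\ra|\le\tilde O(\sigma_0\sigma_{\mathrm{noise}}\sqrt{d})$ on $[0,T_1]$ using the near-orthogonality bounds above, so $\sigma'=O((\sigma_0\sigma_{\mathrm{noise}}\sqrt{d}/\kappa)^{q-1})$, and combining with $\|\bxi_{i,k'}\|_2^2=\tilde\Theta(\sigma_{\mathrm{noise}}^2 d)$ and $\eta T_1\le\tilde O(m/\sigma_0^{q-2})$, a telescoping sum keeps $\|\bXi_{j,r}^{(t)}\|_2^2\le\sigma_0^2 nP/2$, closing the induction. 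The main obstacle is rigorously proving the tensor-power separation: that throughout Phase 1 the multiplicative gap between $g_{j,r_{j,k},k}^{(t)}$ and every other $g_{j,r,k}^{(t)}$ is preserved, so only $r_{j,k}$ crosses $\kappa$ while the runners-up stay below $1/(4Km)$. This demands a careful discrete ratio argument (the ODE heuristic loses precision as $g\to\kappa$), absorbing the $O(\eta)$ discretization error summed over $T_1$ iterations and union-bounding over the $O(Km)$ pairs $(r,k)$.
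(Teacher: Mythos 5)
Your overall plan — project the GD update onto each $\bmu_{j,k}$ to obtain a scalar tensor-power recursion driven by $\sigma'(z)=z^{q-1}/\kappa^{q-1}$, amplify the initialization advantage of the winning filter, and control the noise projection by induction — matches the paper's strategy. However, there are two substantive problems.

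\paragraph{Wrong initialization-gap estimate.} You claim $\la\wb_{j,r_{j,k}}^{(0)},\bmu_{j,k}\ra$ exceeds the runner-up by a multiplicative factor $1+\Omega(1/\log m)$. That is the \emph{typical} gap between the top two of $m$ i.i.d.\ Gaussians, but it does not hold with the required probability $1-O(m^{-1})$: conditional on the maximum, the gap to the second-largest has a near-exponential tail near zero, so the multiplicative gap can be as small as $\Theta(1/(m\,\mathrm{poly}\log m))$ with probability $\Theta(1/m)$. The paper's Lemma~\ref{lemma:initialization_norms} instead proves, via the anti-concentration bound in Lemma~\ref{lemma:absgaussian} and a union bound over all $\binom{m}{2}$ pairs, that with probability $1-\delta$ the gap is at least $1+\Theta(\delta/m^2)$; setting $\delta=\Theta(1/m)$ gives a much smaller but \emph{valid} gap $1+\Theta(1/m^3)$. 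Since the separation time in the tensor-power argument scales with the inverse of this gap, using the optimistic $1/\log m$ figure hides a nontrivial dependence on $m$ that has to be absorbed by the smallness conditions on $\sigma_0$ and $\eta$ in Condition~\ref{condition:d_sigma0_eta}.

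\paragraph{The core separation step is left open.} You correctly identify the crux — that only $r_{j,k}$ crosses $\kappa$ while all non-winners stay below $1/(4Km)$ — but explicitly defer it as ``the main obstacle.'' The paper closes exactly this gap with two tools you did not supply. First, it introduces an \emph{idealized filter} $\tilde\wb_{j,k}$ whose initial inner product with $\bmu_{j,k}$ sits between the winner's value and all non-winners' values (specifically, $(1+\Theta(\delta/m^2))\la\tilde\wb_{j,k}^{(0)},\bmu_{j,k}\ra = \la\wb_{j,r_{j,k}}^{(0)},\bmu_{j,k}\ra$) and evolves under the same recursion. By monotonicity of the scalar recursion (Lemma~\ref{lemma:initial_large}), every non-winner is dominated by $\tilde\wb_{j,k}$ at all times, so it suffices to compare just two trajectories. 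Second, Lemma~\ref{lemma:sequence_compare} gives a discrete comparison theorem: if two positive sequences obey $x_{t+1}=x_t+\eta C_t x_t^{q-1}$ and $y_{t+1}=y_t+\eta C_t y_t^{q-1}$ with $x_0/y_0\ge 1+c$, and $\eta$, $y_0/A_y$ are small enough (which is what Condition~\ref{condition:d_sigma0_eta} is engineered to ensure), then $x$ reaches its target $A_x=\kappa$ before $y$ reaches $A_y=1/(4Km)$. This is precisely the ``careful discrete ratio argument'' you flag. Lemma~\ref{lemma:sequence_time} then supplies matching upper and lower bounds on the crossing time — both needed to conclude a single uniform $T_1=\tilde\Theta(m/(\eta\sigma_0^{q-2}))$ working for all $2K$ basis vectors simultaneously, which your one-sided ODE blowup heuristic does not deliver.

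\paragraph{Minor.} For the noise bound, you induct directly on $\|\bXi_{j,r}^{(t)}\|_2^2$, whereas the paper inducts on the scalar coefficients $\rho_{j,r,i,k'}^{(t)}$ of Definition~\ref{def:w_decomposition} and then converts via Lemma~\ref{lemma:relation_Xi_xi}. Your route is workable but requires you to explicitly control the off-diagonal cross terms $\la\bxi_{i,k'},\bxi_{i',k''}\ra$ in the quadratic form at every step; the coefficient decomposition packages that bookkeeping once and for all.
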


Theorem~\ref{thm:main_result} asserts that for each $\bmu_{j, k}\in \cU_{j}$, there exists a specific filter in $\Wb_{j}$ can significantly learn this basis vector. Lemma~\ref{lemma:phase1_main} identifies this specific filter as the one with the largest inner product at initialization. Furthermore, Lemma~\ref{lemma:phase1_main} demonstrates there exist an iteration $T_1= \tilde\Theta\big(\frac{m}{\eta \sigma_0^{q-2}}\big)$. At this iteration, those filters having the largest inner product with certain basis vectors, i.e. $\wb_{j, r_{j, k}}$'s, will exhibit significantly different patterns compared to other filters: the projection of $\wb_{j, r_{j,k}}^{(T)}$ onto $\bmu_{j, k}$ attains $\kappa$, while those of other filters stay at a small value. Moreover, the projection of all filters into the subspace $\cN$ is extremely small. Since each entry of filters is initialized from standard Gaussian distribution, by the symmetry of Gaussian distribution, the same filter having the largest inner product for multiple basis vectors simultaneously can only occur with extremely small probability. Therefore, we can intuitively get the idea that $\Wb_{j}^{(T_1)}$ already exhibits a low-rank structure with rank $K$: it contains $K$ row, each approximately aligned with a distinct basis vector in $\cU_{j}$, while other rows remain close to $\mathbf{0}$.




\noindent\textbf{Matching lower and upper bounds for projections when loss converges.}
As we discussed above, the conclusions of Lemma~\ref{lemma:phase1_main} can already intuitively imply that $\Wb_{j}^{(T_1)}$ approximately exhibits a low-rank structure with rank $K$. However, to obtain an effective estimation of the stable rank, it is essential to accurately characterize all singular values, providing both lower and upper bounds. The conclusion of Lemma~\ref{lemma:phase1_main} presenting merely a lower bound $\kappa$ is not sufficient to derive a satisfactory conclusion regarding stable rank. Additionally, it is clear that a constant-order lower bound for the inner products $\la\wb_{j, r_{j, k}}^{(t)},\bmu_{j, k}\ra$'s is insufficient to guarantee the convergence of the loss, given our two-layer CNNs model~\eqref{eq:two-layer_CNNs}. Therefore, we consider the subsequent training stage of two-layer CNNs, and demonstrate a matching upper and lower bound for the increase rate of $\la\wb_{j, r_{j, k}}^{(t)},\bmu_{j, k}\ra$'s in the following lemma.

\begin{lemma}\label{lemma:phase2_main}
Under the same conditions as Theorem~\ref{thm:main_result}, and with $\wb_{j,r_{j, k}}, \bXi_{j, r}$ defined as in Lemma~\ref{lemma:phase1_main}, at any large iteration $T^{*} = T_1 + \eta^{-1}\poly( \sigma_{\mathrm{noise}}^{-2}d^{-1},\sigma_{0}^{-1}, n, m, d)$, the following results hold  for all $j\in \{\pm 1\}$ and $k\in[K]$, with probability at least $1-O(m^{-1})$:
\begin{align*}
    & \Big|\la\wb_{j,r_{j, k}}^{(T^*)} , \bmu_{j, k}\ra -m\log (T^*-T_1) -2m\log m +m\log \delta \Big|\leq O(m); \\
    &\big|\la\wb_{j,r}^{(T^*)}, \bmu_{j, k}\ra\big| \leq \frac{1}{2Km}, \enspace \text{if} \enspace r \neq r_{j, k}.
\end{align*}
Moreover, $\big\|\bXi_{j, r}^{(t)}\big\|_2^2 \leq 2\sigma_0^2 n P$ for all $j\in \{\pm 1\}$, $r\in[m]$ and $0 \leq t \leq  T^*$.
%
%
\end{lemma}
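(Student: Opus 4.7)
The plan is to extend the phase-1 analysis of Lemma~\ref{lemma:phase1_main} into a second ``linear-regime'' phase, during which every winning filter $\wb_{j, r_{j, k}}$ sits past the Huberized-ReLU threshold, so that $\sigma'(\la\wb_{j, r_{j,k}}^{(t)}, \bmu_{j,k}\ra) = 1$ throughout $[T_1, T^*]$. Projecting the gradient recursion \eqref{eq:gdupdate3} onto each basis vector $\bmu_{y,k}$ and onto the complementary subspace $\cN$ decouples the dynamics, up to the common loss derivatives $\ell_i'^{(t)}$, into three one-dimensional recursions governing (i) the winning projection $\gamma_{j,k}^{(t)} := \la\wb_{j, r_{j,k}}^{(t)}, \bmu_{j,k}\ra$, (ii) the off-winning projections $\la\wb_{j,r}^{(t)}, \bmu_{y,k}\ra$ for $r\neq r_{y,k}$, and (iii) the noise projection $\bXi_{j,r}^{(t)}$. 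The whole argument will be one joint induction on $t$ that propagates quantitative bounds on all three families simultaneously, with $\ell_i'^{(t)}$ mediating the coupling.

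The inductive hypotheses maintained on $[T_1, T^*]$ are the three conclusions of the lemma themselves: the matching two-sided bound on $\gamma_{j,k}^{(t)}$, the smallness $|\la\wb_{j,r}^{(t)}, \bmu_{y,k}\ra|\leq 1/(2Km)$ for $r\neq r_{y,k}$, and $\|\bXi_{j,r}^{(t)}\|_2^2\leq 2\sigma_0^2 nP$. Under these, the logit of every training example is dominated by its winning filters in the linear regime, so $y_i f(\Wb^{(t)},\Xb_i) \asymp (s_i/m)\cdot m\log(t-T_1) = s_i\log(t-T_1)$ and hence $-\ell_i'^{(t)} \asymp (t-T_1)^{-s_i}$. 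Plugging this back into the recursion for $\gamma_{j,k}^{(t)}$ (using $\sigma'=1$) yields
\begin{align*}
    \gamma_{j,k}^{(t+1)} - \gamma_{j,k}^{(t)} \;\asymp\; \frac{\eta}{nm} \sum_{i:\,y_i = j,\,\bmu_{j,k}\in\{\bnu_{i,\cdot}\}} (t-T_1)^{-s_i}.
\end{align*}
Setting $u = \gamma_{j,k}/m$, this approximates the ODE $\dot u \asymp C e^{-u}$, whose solution is $u(t) = \log(C(t-T_1)) + O(1)$; converting back delivers $\gamma_{j,k}^{(t)} = m\log(t-T_1)$ plus the explicit constants stated in the lemma, closing hypothesis~(a) with the discrete-to-continuous gap controlled by a Gronwall-type argument that exploits $\eta \le \tilde O(1)$.

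For the off-winning projections, Lemma~\ref{lemma:phase1_main} initializes at $|\la\wb_{j,r}^{(T_1)}, \bmu_{y,k}\ra|\leq 1/(4Km) < \kappa$, so the Huberized ReLU derivative sits in its polynomial regime and each increment scales like $(1/(4Km\kappa))^{q-1}\cdot |\ell_i'^{(t)}|$; summing against the $\sum_t|\ell_i'^{(t)}|$ bound extracted from the loss-decay rate (which is $O(1)$ when $s_i\ge 2$ and $\tilde O(1)$ when $s_i=1$) keeps the total drift below $o(1/(Km))$ and preserves hypothesis~(b). For the noise projection, each increment has norm $O(\eta\sigma_{\mathrm{noise}}\sqrt d/(nm))\cdot|\ell_i'^{(t)}|$; combining this with concentration estimates $\|\bxi_{i,k'}\|_2 = \tilde O(\sigma_{\mathrm{noise}}\sqrt d)$, the noise-scale assumption $\sigma_{\mathrm{noise}}\sqrt d \le \tilde O(n^{1/q})$, and the summable loss-derivative bound keeps $\|\bXi_{j,r}^{(t)}\|_2^2$ within the $2\sigma_0^2 nP$ budget, preserving hypothesis~(c).

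The main obstacle will be closing the coupled induction with the tightness demanded by the two-sided envelope on $\gamma_{j,k}^{(t)}$: the growth rate of $\gamma_{j,k}^{(t)}$ depends on $-\ell_i'^{(t)}$, which depends on \emph{all} winning coefficients $\{\gamma_{j',k'}^{(t)}\}_{j',k'}$ simultaneously, so no single filter can be analyzed in isolation. The key technical device will be a self-correcting mechanism argument: showing that all $K$ winning filters within each $\Wb_j$ grow in lock-step to leading order, because any lag in one of them would inflate $-\ell_i'^{(t)}$ on inputs containing the lagging basis vector and accelerate its growth until the imbalance is erased. Quantifying this mechanism requires a careful accounting of the random occurrence pattern of basis vectors across the training set, and it is precisely this accounting that produces the uniform $m\log(t-T_1)$ leading term across all $(j,k)$ while keeping the deviation within $O(m)$, as claimed.
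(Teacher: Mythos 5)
Your overall scaffolding matches the paper's — reduce $\|\bXi_{j,r}^{(t)}\|_2^2$ control to per-coefficient bounds, run a joint induction over the three families, use $\sigma' = 1$ past the threshold so the winning recursion becomes $\gamma_{j,k}^{(t+1)} = \gamma_{j,k}^{(t)} + O(\eta/m)\,e^{-\gamma_{j,k}^{(t)}/m}$, and invoke an exponential-sequence lemma to get the $m\log(t-T_1)$ growth. However, the route you propose for bounding $-\ell_i'^{(t)}$ heads into a genuine dead end, and the obstacle you flag (``no single filter can be analyzed in isolation,'' resolved by a ``self-correcting lock-step mechanism'') is exactly the thing the paper's argument is designed to avoid.

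Concretely, you claim $-\ell_i'^{(t)} \asymp (t-T_1)^{-s_i}$ uniformly, which would indeed require matching two-sided control on \emph{every} $\gamma_{j',k'}^{(t)}$ simultaneously inside the induction — a circularity you acknowledge but do not break. The paper sidesteps this with a decoupling device you do not use: for the \emph{lower} bound on $-\ell_i'^{(t)}$ it restricts attention to the subset $\cJ_{j,k}$ of inputs whose sole object patch is $\bmu_{j,k}$ (this set has size $\Omega(n/K)$ by Lemma~\ref{lemma:numberofdataII}); for such $i$, $F_{y_i}(\Wb^{(t)},\Xb_i)$ depends on $\gamma_{j,k}^{(t)}$ alone up to lower-order terms. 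For the \emph{upper} bound it uses only the crude one-term inequality $F_{j}\ge \tfrac{1}{m}\sigma(\la\wb_{j,r_{j,k}}^{(t)},\bmu_{j,k}\ra)$, which again depends only on $\gamma_{j,k}^{(t)}$. Both bounds thus produce a closed recursion $\gamma_{j,k}^{(t+1)} = \gamma_{j,k}^{(t)} + \Theta(\eta/m)e^{-\gamma_{j,k}^{(t)}/m}$ for each winning filter independently; there is no cross-coupling, hence no lock-step argument is needed. A secondary issue: your accounting of $\sum_t|\ell_i'^{(t)}|$ as $O(1)$ or $\tilde O(1)$ omits the prefactor carried by the additive constant in $\gamma_{j,k}^{(t)} = m\log(t-T_1) - 2m\log m + m\log\eta + O(m)$; the paper obtains $\sum_{t=T_1}^{T^*}(-\ell_i'^{(t)}) = \tilde\Theta(m^2/\eta)$ (equation~\eqref{eq:ell_i_upperII}), and this $m^2/\eta$ factor is then absorbed by the $\eta/(nm)\cdot(\tfrac{1}{2Km})^{q-1}/\kappa^{q-1}$ factor in the off-winning increment. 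Getting the constant right is essential for closing the off-winning and noise-coefficient inductions, since $m^{q-3}\geq\tilde\Omega(1)$ (from $q\geq 3$) is exactly what saves the budget. As written, your proposal does not have a concrete mechanism to prove the lock-step claim, and it understates the sum of loss derivatives; fixing either requires replacing your central idea with the $\cJ_{j,k}$ restriction, at which point the lock-step issue disappears.
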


Lemma~\ref{lemma:phase2_main} illustrates that during any polynomial number of iterations exceeding $T_1$ defined in Lemma~\ref{lemma:phase1_main}, the increase of the inner product $\la\wb_{j, r_{j, k}}^{(t)}, \bmu_{j,k}\ra$ is simultaneously bounded from above and below. Moreover, both the upper and lower bounds have matching rates, increasing logarithmically with respect to $t$. Besides, the projections of other filters onto each basis vector or the subspace $\cN$, always remain bounded by some small value. As we have discussed previously, the filters $\wb_{j, r_{j, k}}$ are distinct for different basis vectors with high probability. These observations intuitively explain the conclusions that $\wb_{j, r_{j,k}}^{(T^{*})} \approx m\log (T^{*}- T_1)\bmu_{j, k}$, and $\wb_{j, r}^{(T^{*})} \approx \wb_{j, r}^{(0)}$ if $r\neq r_{j, k}$ in Theorem~\ref{thm:main_result3}.


\noindent\textbf{The filters in $\Wb_{+1}$ barely learn basis in $\cU_{-1}$ and vice versa.} In Lemmas~\ref{lemma:phase1_main} and~\ref{lemma:phase2_main}, we clearly quantify the increase of $\la\wb_{1, r}^{(t)}, \bmu_{1, k}\ra$ and $\la\wb_{-1, r}^{(t)}, \bmu_{-1, k}\ra$ through the entire training process. However, by the gradient descent iterative rule~\eqref{eq:gdupdate3}, we still need to calculate $\la\wb_{1, r}, \bmu_{-1, k}\ra$ and $\la \wb_{-1, r}, \bmu_{1, k}\ra$. The following lemma indicates that these terms are bounded by a small amount throughout the training.

\begin{lemma}\label{lemma:opposite_learning}
    Under the same conditions as Theorem~\ref{thm:main_result}, for all $t>0$, $r\in[m]$, and $k\in [K]$, with probability at least $1-O(m^{-1})$, it holds that 
    \begin{align*}
        \la\wb_{1, r}^{(t)}, \bmu_{-1, k}\ra, \la\wb_{-1, r}^{(t)}, \bmu_{1, k}\ra \leq \tilde O(\sigma_0).
    \end{align*}
\end{lemma}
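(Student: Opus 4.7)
The plan is to observe that for each $j\in\{\pm 1\}$, $r\in[m]$, $k\in[K]$, the scalar quantity $z_t := \la \wb_{j,r}^{(t)}, \bmu_{-j,k}\ra$ evolves monotonically non-increasingly along gradient descent, so it is controlled for all $t\ge 0$ by its value at initialization, which is bounded by standard Gaussian concentration.

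First, I project the iteration (\ref{eq:gdupdate3}) onto $\bmu_{-j,k}$ to obtain a scalar recursion for $z_t$. The noise term drops out, since every $\bxi_{i,k'}$ is supported on $\cN=\mathrm{span}(\cU_{+1}\cup\cU_{-1})^\perp$ and hence orthogonal to $\bmu_{-j,k}$. In the remaining basis-vector term, orthonormality of $\cU_{+1}\cup\cU_{-1}$ keeps only those summands in which $\bnu_{i,k''}=\bmu_{-j,k}$, which forces $y_i=-j$ and therefore $jy_i=-1$. After tracking the signs, this reduces to
\begin{align*}
z_{t+1} - z_t \;=\; \frac{\eta}{nm}\sum_{i:\,y_i=-j}\sum_{k''}\mathbbm{1}\{\bnu_{i,k''}=\bmu_{-j,k}\}\,\ell_i'^{(t)}\,\sigma'(z_t).
\end{align*}

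Next, I check signs. Since $\ell'(\cdot)<0$ for the cross-entropy loss and $\sigma'(\cdot)\ge 0$ everywhere for the Huberized ReLU in (\ref{eq:def_huberized_relu}) (indeed $\sigma'(z)=0$ whenever $z\le 0$), every summand on the right is non-positive, so $z_{t+1}\le z_t$ for all $t$. In particular $z_t\le z_0$ for every $t\ge 0$. At initialization $\wb_{j,r}^{(0)}$ has i.i.d.\ $N(0,\sigma_0^2)$ entries and $\|\bmu_{-j,k}\|_2=1$, so $z_0\sim N(0,\sigma_0^2)$. A Gaussian tail bound combined with a union bound over the $2mK$ choices of $(j,r,k)$ gives $z_0\le C\sigma_0\sqrt{\log m}$ uniformly with probability at least $1-O(m^{-1})$ (using $K=O(1)$). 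Combining with monotonicity yields $z_t\le \tilde O(\sigma_0)$ for all $t\ge 0$, which is the claimed bound for both $\la\wb_{+1,r}^{(t)},\bmu_{-1,k}\ra$ and $\la\wb_{-1,r}^{(t)},\bmu_{+1,k}\ra$.

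The hard part is not really technical --- the whole proof is a sign-tracking argument combined with a one-line Gaussian concentration. The two small points that must be handled carefully are: first, the fact that the Gaussian noise patches contribute \emph{exactly} zero to the update of $z_t$, rather than a stochastic drift that could in principle accumulate over the polynomially many iterations considered in Theorem~\ref{thm:main_result3} --- this is precisely why the noise distribution in Definition~\ref{def:data} is constructed on the orthogonal complement of $\cU_{+1}\cup\cU_{-1}$; and second, the fact that $\sigma'(\cdot)\ge 0$ on the whole real line (not just the positive axis), so that monotonicity of $z_t$ is preserved even after it first crosses zero and subsequent updates become identically zero. Both are baked directly into the model setup, so what remains is elementary.
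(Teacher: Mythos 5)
Your proof is correct and follows essentially the same approach as the paper: project the GD update onto $\bmu_{-j,k}$, observe the noise contribution vanishes by construction, track signs to deduce that $z_t$ is non-increasing, and apply Gaussian concentration at initialization with a union bound. The one caveat is that you only establish the upper bound $z_t \le z_0 \le \tilde O(\sigma_0)$, which matches the lemma as literally stated; the paper's own proof takes the extra step of bounding $|z_t|\le \tilde O(\sigma_0)$ (by noting that once $z_t$ first becomes negative the update is identically zero, and the magnitude of the last overshoot step is at most $\frac{\eta}{m}z_0^{q-1}\ll z_0$), because the two-sided bound is what is actually invoked later in the proofs of Theorems~\ref{thm:main_result} and~\ref{thm:main_result3} via terms like $\sum_{k'}|\la\wb_{1,r}^{(T)},\bmu_{-1,k'}\ra|$. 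If you were going to rely on this lemma downstream you would need that slightly stronger form, but as a proof of the stated inequality your argument is complete.
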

Now, we are ready to prove our main Theorem~\ref{thm:main_result3}.
\begin{proof}[Proof of Theorem~\ref{thm:main_result3}]
We establish the proof for $j=1$, while the proof for $j=-1$ is totally identical.  Let $T=\eta^{-1}\poly( \sigma_{\mathrm{noise}}^{-2}d^{-1},\sigma_{0}^{-1}, n, m, d)\geq\tilde\Omega\big(\frac{m}{\eta \sigma_0^{q-2} }\big)$. For any $\bmu_{1,k}\in\cU_{+1}$ and its corresponding filter $\wb_{1, r_{1, k}}$, we can calculate the  norm of $\wb_{1, r_{1, k}}$ as
\begin{align*}
    &\big\|\wb_{1, r_{1, k}}^{(T)} - \big(m\log T\big)\cdot\bmu_{1, k}\big\|_2\\
    =& \big\|\wb_{1, r_{1, k}}^{(0)} + 
    \sum_{y\in{\pm 1}}\sum_{k'=1}^K\la\wb_{1, r_{1,k}}^{(T)}-\wb_{1, r_{1,k}}^{(0)}, \bmu_{y, k'}\ra\bmu_{y, k'} + \bXi_{1, r_{1,k}}^{(T)}- \big(m\log T\big)\cdot\bmu_{1, k}\big\|_2\\
    \leq & \big\|\wb_{1, r_{1, k}}^{(0)}\big\|_2 + \sum_{y\in{\pm 1}}\sum_{k'=1}^K\big|\la\wb_{1, r_{1,k}}^{(0)}, \bmu_{y, k'}\ra\big|+ \sum_{k'\neq k}\big|\la\wb_{1, r_{1,k}}^{(T)}, \bmu_{1, k'}\ra\big| + \sum_{k'=1}^K\big|\la\wb_{1, r_{1,k}}^{(T)}, \bmu_{-1, k'}\ra\big|\\
    & + \big|\la\wb_{1, r_{1,k}}^{(T)}, \bmu_{1, k}\ra -m\log T\big| + \big\|\bXi_{1, r_{1,k}}^{(T)}\big\|_2\\
    \leq & \tilde O(\sigma_0 d^{1/2}) + \tilde O(2K\sigma_0) + O\Big(\frac{K}{m}\Big) + \tilde O(K\sigma_0) + O\big(m\log(m\vee \eta^{-1})\big) + O(\sigma_0 n^{1/2})\\
    \leq& O\big(m\log(m\vee \eta^{-1})\big).
\end{align*}
The first inequality holds by triangle inequality. The second inequality is derived from Lemma~\ref{lemma:phase2_main}, Lemma~\ref{lemma:opposite_learning}, and concentration results in Appendix~\ref{section:concentration}, which guarantee that $\big\|\wb_{1, r}^{(0)}\big\|_2\leq \tilde O(\sigma_0 d^{1/2})$ and $\big|\la\wb_{1, r}^{(0)}, \bmu_{y, k}\ra\big|\leq \tilde O(\sigma_0)$ for all $r\in[m]$, $y\in \{\pm 1\}$ and $k\in [K]$. The last inequality follows from the scale relationships among these parameters as assumed in Condition~\ref{condition:d_sigma0_eta}.
For other filters $\wb_{1,r}$ with $r\neq r_{1, k}$ for all $k\in [K]$, we have
\begin{align*}
     \big\|\wb_{1, r}^{(T)}\big\|_2 \leq &\big\|\wb_{1, r}^{(0)}\big\|_2 + \sum_{y\in\{\pm 1\}}\sum_{k=1}^K\big|\la\wb_{1, r}^{(T)}-\wb_{1, r}^{(0)}, \bmu_{y, k}\ra\big| + \|\bXi_{1, r}^{(T)}\|_2\\
     \leq & \big\|\wb_{1, r}^{(0)}\big\|_2 + \sum_{y\in{\pm 1}}\sum_{k=1}^K\big|\la\wb_{1, r}^{(0)}, \bmu_{y, k}\ra\big| + \sum_{k=1}^K\big|\la\wb_{1, r}^{(T)}, \bmu_{1, k}\ra\big| + \sum_{k=1}^K\big|\la\wb_{1, r}^{(T)}, \bmu_{-1, k}\ra\big| + \big\|\bXi_{1, r}^{(T)}\big\|_2\\
    \leq& \tilde O(\sigma_0 d^{1/2}) + \tilde O(2K\sigma_0) + O\Big(\frac{K}{m}\Big) + \tilde O(K\sigma_0)  + O(\sigma_0 n^{1/2})\leq \tilde O(\sigma_0 d^{1/2}) +  O(m^{-1}).
\end{align*}
Similarly, the first inequality is from triangle inequality. The second inequality is from Lemma~\ref{lemma:phase2_main}, Lemma~\ref{lemma:opposite_learning}, and concentration results in Appendix~\ref{section:concentration}. And the last inequality is by Condition~\ref{condition:d_sigma0_eta}. By definition of $\Pb_{\cN}$, we can directly have $\Pb_{\cN}\cdot \big(\wb_{1,r}^{(T)}- \wb_{1,r}^{(0)}\big) =\bXi_{1, r}^{(t)}$, which proves the last conclusion in Theorem~\ref{thm:main_result3}.
\end{proof}
\subsection{Proof of Theorem~\ref{thm:main_result}}\label{section:proof_main_result}
In this section, we provide a detailed proof of our main result, Theorem~\ref{thm:main_result}, building on the preceding lemmas and conclusions. 
\begin{proof}[Proof of Theorem~\ref{thm:main_result}]
We first prove the result regarding the stable rank of $\Wb^{(T)}$. By Theorem~\ref{thm:main_result3}, we have $\big\|\wb_{j, r_{j, k}}^{(T)}\big\|_2 = m\log T - m\log(m^2\eta^{-1}) + O(m)$, and $\big\|\wb_{j, r}^{(T)}\big\|_2 \leq O(m^{-1}) + \tilde O(\sigma_0 d^{1/2})$ when $r\neq r_{j,k}$. By definition of Frobenius norm and operator norm, we can derive that 
\begin{align*}
    &\big\|\Wb^{(T)}\big\|_F =  \sqrt{\sum_{j\in\{\pm 1\}}\sum_{k=1}^{K}\big\|\wb_{j, r_{j, k}}^{(T)}\big\|_2^2 +  \sum_{j\in\{\pm 1\}}\sum_{r\neq r_{j, k}}\big\|\wb_{j, r}^{(T)}\big\|_2^2} = \sqrt{2K}m\log T - \sqrt{2K}m\log(m^2\eta^{-1}) + O(m);\\
    &\big\|\Wb^{(T)}\big\|_2 = \max_{\bmu\in \RR^d} \frac{\big\|\Wb^{(T)}\bmu\big\|_2}{\|\bmu\|_2}=\max_{\bmu\in \cU_{+1}\cup \ \cU_{-1}} \big\|\Wb^{(T)}\bmu\big\|_2= m\log T - m\log(m^2\eta^{-1}) + O(m).
\end{align*}
Then by the definition of stable rank, we have 
\begin{align*}
    &\bigg|\sr(\Wb^{(T)}) -2K\bigg| = \Bigg|\frac{\big\|\Wb^{(T)}\big\|_F^2}{\big\|\Wb^{(T)}\big\|_2^2} -2K\Bigg|\\
    =&  \Bigg|\frac{2K m^2\big[\log T - \log(m^2\eta^{-1})\big]^2 + O\big(m^2\big[\log T - \log(m^2\eta^{-1})\big]\big)}{m^2\big[\log T - \log(m^2\eta^{-1})\big]^2 + O\big(m^2\big[\log T - \log(m^2\eta^{-1})\big]\big)} -2K\Bigg|\leq O\bigg(\frac{1}{\log T}\bigg).
\end{align*}
This completes the proof for~\eqref{eq:main_result_1}. Next we prove the results for the training loss and test loss. For each input pair $(\Xb, y)$, let $\bnu_1, \cdots, \bnu_s$ represent the basis vectors assigned to its object patches, and  $\bxi_{1}, \cdots, \bxi_{P-s}$ represent the noise vector assigned to its background patches. Then we conclude that $F_{-y}(\Wb_{-y}^{(T)}, \Xb) \leq 1$ if $\la\wb_{-y, r}^{(T)}, \bxi_{k'}\ra \leq 1/P$ because 
\begin{align}\label{eq:output_-F}
    F_{-y}(\Wb_{-y}^{(T)}, \Xb) = \frac{1}{m}\sum_{r=1}^{m} \sum_{k'=1}^{s}\sigma(\la\wb_{-y, r}^{(T)}, \bnu_k\ra)+\frac{1}{m}\sum_{r=1}^{m} \sum_{k'=1}^{P-s}\sigma(\la\wb_{-y, r}^{(T)}, \bxi_{k'}\ra)\leq 1,
\end{align}
where the last inequality holds as $\sigma(\la\wb_{-y, r}^{(T)}, \bnu_k\ra) \leq \sigma(\tilde O(\sigma_0))\leq O(\sigma_0)$ by Lemma~\ref{lemma:opposite_learning} and $\sigma(\la\wb_{-y, r}^{(T)}, \bxi_{k'}\ra) \leq \la\wb_{-y, r}^{(T)}, \bxi_{k'}\ra \leq 1/P$ by our assumptions. On the other hand, $\Xb$ contains at least one basis vector from $\cU_{y}$, and W.L.O.G we denote this basis vector in $\Xb$ as $\bmu_{y, k^*}$. Then we can derive that  
\begin{align}\label{eq:output_F}
    yF_{y}(\Wb_{y}^{(T)}, \Xb) \geq \frac{1}{m}\sigma(\la\wb_{y, r_{y, k^*}}^{(T)}, \bmu_{y, k^*}\ra)\geq \log T - \log (m^2\eta^{-1}) - O(1),
\end{align}
where the last inequality is derived by Lemma~\ref{lemma:phase2_main}. For each $i\in [n], k'\in [P-s_i]$, we have 
\begin{align}\label{eq:upper_bound_bxi}
    \la\wb_{-y, r}^{(T)}, \bxi_{k'}\ra &= \la\wb_{-y, r}^{(0)} +\bXi_{-y, r}^{(T)}, \bxi_{i, k'}\ra \leq \la\wb_{-y, r}^{(0)}, \bxi_{i, k'}\ra+\big\|\bXi_{-y, r}^{(T)}\big\|_2 \big\|\bxi_{i, k'}\big\|_2\notag \\
    &\leq \tilde O(\sigma_0\sigma_{\mathrm{noise}}\sqrt{d}) + O(\sigma_0\sigma_{\mathrm{noise}}\sqrt{nd}) \leq \frac{1}{P},
\end{align}
where the equality holds by the orthogonality between the noise vectors and basis vectors. The first inequality holds by Cauchy-Schwarz inequality. The second inequality is derived by applying the concentration results in Lemmas~\ref{lemma:data_innerproducts}, \ref{lemma:initialization_norms} and $\|\bXi_{-y, r}^{(T)}\|_2\leq O(\sigma_0\sqrt{n})$ in Theorem~\ref{thm:main_result}. The last inequality is by our Condition~\ref{condition:d_sigma0_eta}. Therefore,~\eqref{eq:output_-F} and~\eqref{eq:output_F} hold for all $i\in[n]$, and we can obtain that 
\begin{align*}
    \ell\big(y_i F(\Wb^{(T)}, \Xb_i)\big) &\leq \exp\big(F_{-y_i}(\Wb_{-y_i}^{(T)}, \Xb_i) - F_{y_i}(\Wb_{y_i}^{(T)}, \Xb_i)\big) \notag\\
    &\leq \exp\big(O(1) + \log (m^2\eta^{-1})-\log T\big)\leq  O\bigg(\frac{m^2}{\eta T}\bigg),
\end{align*}
which finishes the proof for~\eqref{eq:main_result_2}. Additionally, if $s_i=1$ with $\bmu_{y_i, k^*}$ being the basis vector contained in the this data point, it holds that
\begin{align*}
    y_iF_{y_i}(\Wb_{y_i}^{(T)}, \Xb_i) 
    &= \frac{1}{m}\sigma(\la\wb_{y_i, r_{y_i, k^*}}^{(T)}, \bmu_{y_i, k^*}\ra) + \frac{1}{m}\sum_{r\neq r_{y_i, k^*}}\sigma(\la\wb_{y_i, r}^{(T)}, \bmu_{y_i, k^*}\ra) + \frac{1}{m}\sum_{r=1}^m\sum_{k'=1}^{P-1}\sigma(\la\wb_{y_i, r}^{(T)}, \bxi_{i, k'}\ra)\\
    &\leq \log T - \log (m^2\eta^{-1}) + O(1),
\end{align*}
where the last inequality holds by Lemma~\ref{lemma:phase2_main} and a similar calculation in~\eqref{eq:upper_bound_bxi}. Then we can obtain that 
\begin{align*}
    \ell\big(y_i F(\Wb^{(T)}, \Xb_i)\big) &\geq \frac{1}{2}\exp\big(-F_{y_i}(\Wb_{y_i}^{(T)}, \Xb_i)\big) \notag\\
    &\geq \frac{1}{2}\exp\big(O(1) + \log (m^2\eta^{-1})-\log T\big)\geq  \Omega\bigg(\frac{m^2}{\eta T}\bigg),
\end{align*}
which finishes the proof that $L_{S}(\Wb^{(T)}) = \Theta\big(\frac{m^2}{\eta T}\big)$ if $s_i=1$ for all $i\in [n]$. Finally, we demonstrate the result for test loss. For a new pair $(\Xb^*, y^*)\sim \cD$ independent of $\{(\Xb_i, y_i)\}_{i=1}^n$, we denote $\cE_{T}$ the event that $\la\wb_{y^*,r}^{(0)} + \bXi_{y^*, r}^{(T)}, \bxi_{k'}^*\ra \leq 1/P$ for all $r\in[m]$ and $k'\in [P-s]$. By the independence among the new pair and training set, we further have $\la\wb_{y^*,r}^{(0)} + \bXi_{y^*, r}^{(T)}, \bxi_{k'}^*\ra$ is a Gaussian random variable with mean $0$ and variance $\sigma_{\mathrm{noise}}^2\|\Pb_{\cN}\cdot\wb_{y^*,r}^{(0)} + \bXi_{y^*, r}^{(T)}\|_2^2\leq \sigma_{\mathrm{noise}}^2O(\|\wb_{y^*,r}^{(0)}\|_2^2\vee \|\bXi_{y^*, r}^{(T)}\|_2^2)\leq \tilde O(\sigma_0^2 \sigma_{\mathrm{noise}}^2d)$. Therefore, by the tail bounds for Gaussian random variables and union bound, we can obtain that,
\begin{align*}
    \PP(\cE_{T}^c) &\leq \sum_{r=1}^m\sum_{k'=1}^{P-s^*}\PP\big(|\la\wb_{y^*,r}^{(0)} + \bXi_{y^*, r}^{(T)}, \bxi_{k'}^*\ra| \geq \frac{1}{P}\big)  \\
    &\leq\exp\big(\log(2mp)-\tilde\Omega(\sigma_0^{-2} \sigma_{\mathrm{noise}}^{-2}d^{-1})\big) \leq \exp(-\Omega(d)),
\end{align*}
where the last inequality holds by our Condition~\ref{condition:d_sigma0_eta}. In the next, we consider separating the test loss into two parts as
\begin{align*}
    \EE[\ell(y^*\cdot f(\Wb^{(T)}, \Xb^*))] = \underbrace{\EE[\ell(y^*\cdot f(\Wb^{(T)}, \Xb^*))\mathds{1}_{\cE_{T}}]}_{I_1} + \underbrace{\EE[\ell(y^*\cdot f(\Wb^{(T)}, \Xb^*))\mathds{1}_{\cE_{T}^c}]}_{I_2}.
\end{align*}
For the first term $I_1$, by the definition of $\cE_{T}$, we can derive that $\ell(y^*\cdot f(\Wb^{(T)}, \Xb^*))\leq O\big(\frac{m^2}{\eta T}\big)$ if $\mathds{1}_{\cE_{T}}=1$ by following the same procedure for training loss. Besides, it always holds that 
\begin{align}\label{eq:ell_bound_general}
    \ell(y^*\cdot f(\Wb^{(T)}, \Xb^*)) &\leq \log\big(1+\exp(F_{-y^*}(\Wb_{-y^*}^{(T)}, \Xb^*))\big)\leq 1+F_{-y^*}(\Wb_{-y^*}^{(T)}, \Xb^*) \notag\\
    &= 1 + \frac{1}{m}\sum_{r=1}^m\sum_{k'=1}^{P-s^*}\sigma\big(\la\wb_{-y^*, r}^{(T)} , \bxi_{k'}^*\ra\big) + \frac{1}{m}\sum_{r=1}^m\sum_{k=1}^{s^*}\sigma\big(\la\wb_{-y^*, r}^{(T)}, \bnu_k^*\ra\big) \notag\\
    &\leq 2 + \tilde O(\sigma_0\sqrt{d})\sum_{k'=1}^{P-s^*}\|\bxi_{k'}^*\|_2,
\end{align}
where the second inequality holds by the fact $\log(1+\exp(x))\leq 1+x, x>0$. The last inequality holds by Lemma~\ref{lemma:opposite_learning} and Cauchy-Schwarz inequality. 
Based on this result, we can obtain that 
\begin{align*}
    I_2&\leq \sqrt{\EE[\mathds{1}_{\cE_{T}^c}]}\sqrt{\EE[\ell(y^*\cdot f(\Wb^{(T)}, \Xb^*))^2]}\\ &\leq\sqrt{\PP(\cE_{T}^c)}\sqrt{4P + \tilde O(\sigma_0^2 d)\EE\Big[\sum_{k'=1}^{P-s^*}\EE[\|\bxi_{k'}^*\|_2^2|s^*]\Big]}\\
    &\leq \exp\big(-\Omega(d)\big)\cdot \tilde O(1)\leq \exp\big(-\Omega(d)\big).
\end{align*}
where the first inequality is by Cauchy-Schwarz inequality. The second inequality is derived by applying~\eqref{eq:ell_bound_general}. The third inequality is by the fact that $\EE[\|\bxi_{k'}\|_2^2] = \sigma_{\mathrm{noise}}^2d$, and our Condition~\ref{condition:d_sigma0_eta} regarding $\sigma_0$, which also implies the last inequality. This completes the proof for~\eqref{eq:main_result_3}.
\end{proof}

\section{Conclusions and future work}
In this paper, we study the rank of convolutional neural networks (CNNs) trained by gradient descent. We theoretically show that the two-layer CNN will converge to a low-rank structure which aligns with the 'clean' complexity of the data sample which further implies that the neural network exhibits robustness of the rank to noises in data. We make numerical experiments both on real data sets MNIST, CIFAR10 and our synthetic data sets. The experiment results support our theoretical findings. We predict this result can also be extended to deeper and more general neural networks, which is a feasible future research direction. 

Our paper studies the ``rank robustness'' phenomenon for two-layer CNN models, while it will be more important and interesting to explore deeper models. Then, we will need to explore a preciser definition of the rank of the model in the deeper setting as it will also be related to the intermediate embedding. Exploring this direction will be one of our future studies. Besides, it is also interesting to investigate whether a similar phenomenon can be also observed for other architectures, such as vision transformer, and other tasks, such as generative model or self-supervise learning. This can better help us understand the distinct training patterns across different tasks with different architectures.

\appendix
\section{Additional related works}
\noindent\textbf{Implicit bias.} There emerges a line of works studying the concept of 'implicit bias', the inherent property of learning algorithms prioritizing a solution with some specific structures, especially some 'simple' structures \citep{neyshabur2014search, DBLP:journals/jmlr/SoudryHNGS18, pmlr-v99-ji19a, NEURIPS2022_ab3f6bbe, xie2024implicit, zhangimplicit}. For the implicit bias study on neural networks,  \citet{lyu2019gradient, ji2020directional} demonstrated that 
$q$-homogeneous neural networks trained by gradient descent converge in direction to a KKT point of the maximum $\ell_2$-margin problem. \citet{lyu2021gradient} proposed a stronger result base on symmetric data assumption and \citet{wang2021implicit} extend the results to adaptive methods. \citet{ji2019gradient, ji2020directional} showed the each layer of deep linear neural networks converges to a rank 1 matrix. \citet{li2020towards} establish the equivalence between the gradient flow of depth-2 matrix factorization and a heuristic rank minimization algorithm. \citet{frei2022implicit} showed that on nearly orthogonal data, gradient flow in leaky ReLU networks will achieve a linear boundary, and the stable rank of the neural networks is always bounded by a constant. \citet{kou2024implicit} extends this result to gradient descent on similar data structures. \citet{timor2023implicit} study the rank minimization on non-linear networks and provide several counter-examples. Besides, \citet{vardi2023implicit} provides a literature review of the existing works of implicit bias on deep neural networks.

\noindent\textbf{Benign over-fitting.} \citet{belkin2019reconciling, belkin2020two} demonstrated the  ``double descent'' population risk curve for many models, containing decision tree and Gaussian and random Fourier feature model. \citet{bartlett2020benign} showed that the benign overfitting in linear regression is correlated with the effective rank of the data covariance, and provided a theoretical bound for over-parameterized minimum norm interpolator. \citet{chatterji2020finite} study the benign overfitting in linear classification for a sub-Gaussian mixture model with noise flipping. \citet{wu2020optimal, hastie2022surprises} study the implicit bias under the regime that dimension and sample increase at a fixed ratio. \citet{liang2020multiple, adlam2020neural, meng2024multiple} explored the multiple descent under different settings. Besides, \citet{cao2022benign, frei2023benign, kou2023benign} study the benign overfitting on two-layer neural networks. 

\section{Proof of Proposition~\ref{prop:data_rank}}\label{section:proof_prop}
In this section, we provide a proof for the Proposition~\ref{prop:data_rank}. We first introduce some notations for further illustration. In Definition~\ref{def:data}, $\cS_i \subsetneqq [P]$ determines index of the label-correlated patches in each input $\Xb_i$ with $i\in [n]$. We denote $s_i =|\cS_i|$, the number of elements in $\cS_i$. With this notation, we demonstrate our proof in the following.

\begin{proof}[Proof of Proposition~\ref{prop:data_rank}]
For data matrix $\hat\Xb$, the summation $\sum_{i=1}^n s_i$ represents the total number of object patches in $\hat \Xb$, while $nP- \sum_{i=1}^n s_i$ indicates the total number of noise patches. By the definition of $\cS_i$, we have $n\leq \sum_{i=1}^n s_i \leq n(P-1)$, and consequently $n\leq nP- \sum_{i=1}^n s_i\leq n(P-1)$. For the data matrix $\hat \Xb$, we consider permutating its columns such that the first $\sum_{i=1}^n s_i$ columns contain all object patches, while the remaining $nP- \sum_{i=1}^n s_i$ columns consist of noise patches. Specifically, we denote $\tilde \Xb = [\tilde\Xb_1, \tilde\Xb_2]\in \RR^{d\times nP}$ the permutated data matrix, where $\tilde\Xb_1 \in \RR^{d\times \sum_{i=1}^n s_i}$ contains only object patches, and $\tilde\Xb_2 \in \RR^{d\times (nP-\sum_{i=1}^n s_i)}$ contains noise patches. Since permutation does not alter the singular values of a matrix, the stable rank of $\hat \Xb$ is equal to that of $\tilde \Xb$. (This is because the permutation matrix is orthogonal.) Therefore, we try to derive the conclusion directly by calculating the stable rank of $\tilde \Xb$.

Additionally, among the total $\sum_{i=1}^n s_i$ object patches, denote $K_{1, k}$ the number of patches assigned to $\bmu_{1, k}$, and $K_{-1, k}$ the number of patches assigned to $\bmu_{-1, k}$ for all $k\in [P]$. We further consider permutating $\tilde \Xb_1$ (as permutation does not alter the singular values) into $\tilde\Xb_1^*$ such that the identical object vectors are grouped together and arranged in order. Then based on the notations regarding the number of distinct basis vectors, we can express $\tilde\Xb_1^*$ as follows:
\begin{align*}
    \tilde\Xb_1^* = [\underbrace{\bmu_{1, 1}, \cdots, \bmu_{1, 1}}_{K_{1, 1}}, \cdots, \underbrace{\bmu_{1, K}, \cdots, \bmu_{1, K}}_{K_{1, K}}, \underbrace{\bmu_{-1, 1}, \cdots, \bmu_{-1, 1}}_{K_{-1, 1}}, \cdots, \underbrace{\bmu_{-1, K}, \cdots, \bmu_{-1, K}}_{K_{-1, K}}] \in \RR^{d\times \sum_{i=1}^n s_i}.
\end{align*}
Due to the orthogonality among $\bmu_{1, k}$'s and  $\bmu_{-1, k}$'s, we have $(\tilde\Xb_1^*)^\top\tilde\Xb_1^*$ is a block diagonal matrix, with each diagonal block having entries equal to $1$. Specifically, we can express $(\tilde\Xb_1^*)^\top\tilde\Xb_1^*$ as
\begin{align*}
(\tilde\Xb_1^*)^\top\tilde\Xb_1^* = \begin{bmatrix}
\mathbf{1}_{K_{1, 1}}\mathbf{1}_{K_{1, 1}}&  & &  & &  \\
 & \ddots & & &  & \\
 & & \mathbf{1}_{K_{1, K}}\mathbf{1}_{K_{1, K}}& &  & \\
 & & &  \mathbf{1}_{K_{-1, 1}}\mathbf{1}_{K_{-1, 1}} &  & \\
 &  & & & \ddots &\\
 &  & & & & \mathbf{1}_{K_{-1, K}}\mathbf{1}_{K_{-1, K}}
\end{bmatrix},
\end{align*}
which clearly indicates the spectral decomposition of $(\tilde\Xb_1^*)^\top\tilde\Xb_1^*$. Then by property of all-ones matrix, the $2K$ non-zero singular values of $\tilde \Xb_1^*$ (also $\tilde \Xb_1$) are $\sqrt{K_{1, k}}$'s and $\sqrt{K_{-1, k}}$'s with $k\in [K]$.
Additionally by Lemma~\ref{lemma:numberofdataIII}, we have 
\begin{align*}
\frac{\sum_{i=1}^n s_i}{2K} - O\Big(\sqrt{n\log (1/\delta)}\Big)\leq K_{1, k}, K_{-1, k} \leq \frac{\sum_{i=1}^n s_i}{2K} + O\Big(\sqrt{n\log (1/\delta)}\Big),
\end{align*}
holds with probability at least $1-\delta/2$.
These results characterize the singular values of $\tilde \Xb_1$, and next we focus on the singular values of $\tilde \Xb_2$. By directly applying Lemma~\ref{lemma:singular_value_Gaussian}, with probability at least $1-\delta/2$, it holds that 
\begin{align*}
     &\lambda_{\min}(\tilde \Xb_2) \geq\sigma_{\mathrm{noise}}\bigg(\sqrt{d} - \sqrt{nP} -O\Big(\sqrt{\log(1/\delta)}\Big)\bigg);\\
     &\lambda_{\max}(\tilde \Xb_2) \leq \sigma_{\mathrm{noise}}\bigg(\sqrt{d} + \sqrt{nP} + O\Big(\sqrt{\log(1/\delta)}\Big)\bigg), 
\end{align*}
where $\lambda_{\min}(\tilde \Xb_2)$ indicates the smallest singular value of $\tilde \Xb_2$ and $ \lambda_{\max}(\tilde \Xb_2)$ indicates the largest singular value of $\tilde \Xb_2$.
Additionally, by applying Lemma~\ref{lemma:singular_value_concatenation}, we know that the singular values of $\tilde\Xb$ are union of singular values of $\tilde\Xb_1$ and $\tilde\Xb_2$. Combining all these results, we can finally conclude that
\begin{itemize}[leftmargin = *]
    \item If $\sigma_{\mathrm{noise}}\sqrt{d} \leq O(1)$, then 
    \begin{align*}
        &\sr(\tilde \Xb) = \frac{\sum_{k=1}^K K_{1, k} + \sum_{k=1}^K K_{-1, k} + \sum_{k'=1}^{nP-\sum_{i=1}^n s_i} \lambda_i^2(\tilde\Xb)}{ \max_{j\in \{\pm 1\},k\in[K]}K_{j, k}}\leq 2K + O(\sigma_{\mathrm{noise}}^2d);\\
        &\sr(\tilde \Xb) = \frac{\sum_{k=1}^K K_{1, k} + \sum_{k=1}^K K_{-1, k} + \sum_{k'=1}^{nP-\sum_{i=1}^n s_i} \lambda_i^2(\tilde\Xb)}{ \max_{j\in \{\pm 1\},k\in[K]}K_{j, k}}\geq 2K -  O\bigg(\sqrt{\frac{\log(1/\delta)}{n}}\bigg),
    \end{align*}
    where $\lambda_i(\tilde\Xb)$ denote the $i$-th singular value of $\tilde\Xb$ in descending order. 
    \item If $\Omega(1)\leq \sigma_{\mathrm{noise}}\sqrt{d} \leq O(\sqrt{n})$, then 
    \begin{align*}
        &\sr(\tilde \Xb) = \frac{\sum_{k=1}^K K_{1, k} + \sum_{k=1}^K K_{-1, k} + \sum_{k'=1}^{nP-\sum_{i=1}^n s_i} \lambda_i^2(\tilde\Xb)}{ \max_{j\in \{\pm 1\},k\in[K]}K_{j, k}}\geq 2K +  \Omega(1).
    \end{align*}
    \item If $\sigma_{\mathrm{noise}}\sqrt{d} \geq \Omega(\sqrt{n})$, then 
    \begin{align*}
        &\sr(\tilde \Xb) = \frac{\sum_{k=1}^K K_{1, k} + \sum_{k=1}^K K_{-1, k} + \sum_{k'=1}^{nP-\sum_{i=1}^n s_i} \lambda_i^2(\tilde\Xb)}{\lambda_1^2(\tilde\Xb)}\geq n .
    \end{align*}
\end{itemize}
\end{proof}

\section{Proof in Section~\ref{section:overview_proof}}\label{section:proof_I}
In this section, we provide a detailed proof for the Lemmas in Section~\ref{section:overview_proof}. We leave the proof for Theorem~\ref{thm:main_result} in the next section as it will utilize the several conclusions in this section.
\subsection{Decomposition}
In this subsection, 
we introduce the following decomposition of $\wb_{j,r}^{(t)}$ for our illustration, which considers describing the learning of each noise $\bxi_{i, k'}$ on each filter $\wb_{j, r}$.

\begin{definition}\label{def:w_decomposition}
Let $\wb_{j,r}^{(t)}$ for $j\in \{\pm 1\}$, $r \in [m]$ be the convolution filters of the CNN at the $t$-th iteration of gradient descent. Then there exist unique coefficients $\gamma_{j,k,r}^{(t)} \geq 0$, $\beta_{j, k, r}^{(t)}\leq 0$, and $\rho_{j,r,i,k'}^{(t)}$ such that 
\begin{align}\label{eq:w_decomposition_detail}
    \wb_{j,r}^{(t)} =& \wb_{j,r}^{(0)} + \sum_{k=1}^K  \gamma_{j,k,r}^{(t)} \cdot \bmu_{j, k} + \sum_{k=1}^K \beta_{j,k,r}^{(t)} \cdot \bmu_{-j, k} + \sum_{ i = 1}^n \sum_{k'=1}^{P-s_i} \rho_{j,r,i,k'}^{(t) }\cdot \| \bxi_{i,k'} \|_2^{-2} \cdot \bxi_{i,k'}
\end{align}
We further denote $\bar\rho_{j,r,i,k'}^{(t)} := \rho_{j,r,i,k'}^{(t)}\ind(\rho_{j,r,i,k'}^{(t)} \geq 0)$, $\underline\rho_{j,r,i,k'}^{(t)} := \rho_{j,r,i,k'}^{(t)}\ind(\rho_{j,r,i,k'}^{(t)} \leq 0)$. Then we have,
\begin{align*}
\wb_{j,r}^{(t)} = &\wb_{j,r}^{(0)} + \sum_{k=1}^K j \cdot \gamma_{j,k,r}^{(t)} \cdot \|  \bmu_k + \sum_{k=1}^K \cdot \beta_{j,k,r}^{(t)} \cdot \bmu_{-j, k} \\
&+ \sum_{ i = 1}^n \sum_{k'=1}^{P-s_i} \bar\rho_{j,r,i,k'}^{(t) }\cdot \| \bxi_{i,k'} \|_2^{-2} \cdot \bxi_{i,k'}
+ \sum_{ i = 1}^n \sum_{k'=1}^{P-s_i} \underline\rho_{j,r,i,k'}^{(t) }\cdot \| \bxi_{i,k'} \|_2^{-2} \cdot \bxi_{i,k'}
\end{align*}
\end{definition}
Then, instead of directly analyzing $\bXi_{j, k}^{(t)}$, we prove some result for $\rho_{j, r, i, k'}$ and extend the results of $\rho_{j, r, i, k'}$ to $\bXi_{j, k}^{(t)}$. Besides, we define two set notations: $\cI_{1, k}$ is the set of data points containing basis vector $\bmu_{1, k}$ in their object patches, i.e., $\cI_{1,k} = \big\{i|\bmu_{1, k} \in\{\nu_{i, 1}, \cdots, \nu_{i, s_i}\},\ \text{and} \ i\in [n]\big\}$, and similarly $\cI_{-1, k}$ is the set of data points containing basis vector $\bmu_{-1, k}$ in their object patches. Additionally, $\cJ_{1, k}$ is the set of data points containing only one basis vector $\bmu_{1, k}$ in their object patches, i.e. $\cJ_{1, k} = \big\{i|s_i=1, \ \nu_{i, 1} =\bmu_{1, k}, \ \text{and} \ i\in [n]\big\}$
\subsection{Proof of Lemma~\ref{lemma:opposite_learning}}

We first provide proof for Lemma~\ref{lemma:opposite_learning}, as it will be applied in the following proof.

\begin{proof}[Proof of Lemma~\ref{lemma:opposite_learning}]
We only consider the case for $j=1$, and the proof for the case $j=-1$ is totally identical. By multiplying $\bmu_{-1, k}$ on both sides of~\eqref{eq:gdupdate3}, and the orthogonality between basis and noises, we obtain that
\begin{align*}
    \la\wb_{1, r}^{(t+1)},\bmu_{-1, k}\ra &=  \la\wb_{1, r}^{(t)},\bmu_{-1, k}\ra + \frac{\eta}{nm}\sigma'\big(\la\wb_{1, r}^{(t)},\bmu_{-1, k}\ra\big)\sum_{i\in \cI_{-1, k}} \ell_i'^{(t)}.
\end{align*}
If $\la\wb_{1, r}^{(0)},\bmu_{-1, k}\ra < 0$, then $\sigma'\big(\la\wb_{1, r}^{(t)},\bmu_{-1, k}\ra\big) = 0$, which implies that $\la\wb_{1, r}^{(t)},\bmu_{-1, k}\ra = \la\wb_{1, r}^{(0)},\bmu_{-1, k}\ra$ for all $t> 0$. Additionally by Lemma~\ref{lemma:initialization_norms}, we have $\big|\la\wb_{1, r}^{(t)},\bmu_{-1, k}\ra\big| = \big|\la\wb_{1, r}^{(0)},\bmu_{-1, k}\ra\big| \leq \tilde O(\sigma_0\|\bmu\|_2)$. If $\la\wb_{1, r}^{(0)},\bmu_{-1, k}\ra \geq 0$, then $\la\wb_{1, r}^{(t)},\bmu_{-1, k}\ra$ is non-increasing until it first reaches a negative value, after which it remains unchanged. Therefore, we have 
\begin{align*}
    \big|\la\wb_{1, r}^{(t)},\bmu_{-1, k}\ra\big| &\leq \Bigg|\frac{\eta}{nm}\sigma'\big(\la\wb_{1, r}^{(0)},\bmu_{-1, k}\ra\big)\sum_{i\in \cI_{-1, k}} \ell_i'^{(t)}\Bigg|\vee\big|\la\wb_{1, r}^{(0)},\bmu_{-1, k}\ra\big| \\
    &\leq \Bigg|\frac{\eta}{m}\big(\la\wb_{1, r}^{(0)},\bmu_{-1, k}\ra\big)^{q-1}\Bigg|\vee\big|\la\wb_{1, r}^{(0)},\bmu_{-1, k}\ra\big| \leq \tilde O(\sigma_0\|\bmu\|_2),
\end{align*}
where the last inequality holds by Lemma~\ref{lemma:initialization_norms} and Condition~\ref{condition:d_sigma0_eta}. This completes the proof.
\end{proof}

\subsection{Preliminary lemmas}
Before we prove the Lemma~\ref{lemma:phase1_main}, we first present and prove several lemmas that will be used for the proof of Lemma~\ref{lemma:phase1_main}. We define $r_{j, k, t} =\argmax_{r\in [m]} \la\wb_{j,r}^{(t)}, \bmu_{j, k}\ra$. Then we have the following lemma demonstrating that the filter with the largest inner product with some basis vector at initialization will always have the largest inner product with this basis vector during the whole training process.
\begin{lemma}\label{lemma:initial_large}
    Under Condition~\ref{condition:d_sigma0_eta}, we have $r_{j, k, t}  = r_{j, k, 0}$ for all $j \in \{+1, -1\}, k \in [K]$ and $t>0$. Moreover, if $\{j, k\}\neq \{j', k'\}$, then $r_{j, k, 0}\neq r_{j', k', 0}$ holds with probability at least $1-O(1/m)$.
\end{lemma}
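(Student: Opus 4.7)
My plan is to split the lemma into two independent claims: (i) the argmax along each basis direction is preserved throughout training, and (ii) the argmaxes at initialization are mutually distinct with high probability. I would prove (i) via a deterministic monotonicity argument applied to an autonomous scalar recursion, and (ii) via independence plus a union bound.

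For (i), I would project both sides of~\eqref{eq:gdupdate3} onto $\bmu_{j,k}$. All noise vectors $\bxi_{i,k'}$ lie in the orthogonal complement of $\mathrm{span}(\cU_{+1}\cup\cU_{-1})$, so the noise-sum contribution vanishes. Among the basis-sum terms, $\la\bnu_{i,k''},\bmu_{j,k}\ra$ is nonzero (and equal to $1$) only when $\bnu_{i,k''} = \bmu_{j,k}$, which forces $y_i = j$ and hence $jy_i = +1$. Collecting the surviving summands produces the autonomous scalar recursion
\begin{align*}
\la\wb_{j,r}^{(t+1)},\bmu_{j,k}\ra \;=\; \Phi_t\!\big(\la\wb_{j,r}^{(t)},\bmu_{j,k}\ra\big),\qquad \Phi_t(z) := z - C_t\,\sigma'(z),
\end{align*}
where $C_t\leq 0$ is a common factor independent of $r$ (it depends on how many object patches across the training set carry $\bmu_{j,k}$, weighted by $\ell_i'^{(t)}$). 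The Huberized ReLU formula~\eqref{eq:def_huberized_relu} makes $\sigma'$ monotonically non-decreasing ($\sigma'\equiv 0$ on $(-\infty,0)$, $\sigma'(z)=z^{q-1}/\kappa^{q-1}$ on $[0,\kappa]$, $\sigma'\equiv 1$ on $[\kappa,\infty)$), and since $-C_t\geq 0$, the map $\Phi_t$ is non-decreasing. Therefore order is preserved: if $\la\wb_{j,r_1}^{(t)},\bmu_{j,k}\ra > \la\wb_{j,r_2}^{(t)},\bmu_{j,k}\ra$ at some $t$, then the same strict inequality persists at $t+1$. Induction on $t$ yields $r_{j,k,t} = r_{j,k,0}$ for every $t>0$.

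For (ii), the Gaussian initialization together with orthonormality of $\cU_{+1}\cup\cU_{-1}$ makes $\{\la\wb_{j,r}^{(0)},\bmu_{j,k}\ra\}_{j\in\{\pm 1\},\,r\in[m],\,k\in[K]}$ a family of mutually independent $N(0,\sigma_0^2)$ variables: within a single filter the covariances vanish by orthogonality of basis vectors, and across filters independence is built into the initialization. Fix any $(j,k)\neq(j',k')$; the two length-$m$ Gaussian vectors $(\la\wb_{j,r}^{(0)},\bmu_{j,k}\ra)_r$ and $(\la\wb_{j',r}^{(0)},\bmu_{j',k'}\ra)_r$ driving $r_{j,k,0}$ and $r_{j',k',0}$ are independent iid sequences, so their argmaxes are independent uniform draws from $[m]$, giving $\PP(r_{j,k,0}=r_{j',k',0}) = 1/m$. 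A union bound over the $\binom{2K}{2}=O(1)$ pairs (using $K=O(1)$ from Condition~\ref{condition:d_sigma0_eta}) yields the desired $1-O(1/m)$ guarantee. The only genuine subtlety lies in (i): the whole argument hinges on projection onto $\bmu_{j,k}$ killing the noise contributions and decoupling the scalar dynamic from all other filters and basis directions, after which non-decreasing $\sigma'$ together with the common-factor structure of $C_t$ immediately implies order preservation; everything else is a routine Gaussian/union-bound calculation.
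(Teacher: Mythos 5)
Your proof is correct and follows essentially the same route as the paper's: project the GD update onto $\bmu_{j,k}$ so the noise terms vanish, observe that the resulting scalar recursion has the form $z \mapsto z + \eta(-C_t)\sigma'(z)$ with an $r$-independent nonnegative coefficient, invoke monotonicity of $\sigma'$ to preserve the order at initialization, and then use independence and exchangeability of the $2K$ length-$m$ Gaussian projections plus a union bound over the $O(K^2)=O(1)$ pairs. The only (minor, stylistic) difference is that you argue order preservation through a single non-decreasing map $\Phi_t$ over all of $\mathbb{R}$, which avoids the paper's explicit case split on the signs of the two initial inner products.
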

\begin{proof}[Proof of Lemma~\ref{lemma:initial_large}]
We first prove that for all $r, r' \in [m]$,  if $\la\wb_{j,r}^{(0)}, \bmu_{j, k}\ra \geq \la \wb_{j,r'}^{(0)}, \bmu_{j, k}\ra$, then it holds that $\la\wb_{j,r}^{(t)}, \bmu_{j, k}\ra \geq \la \wb_{j,r'}^{(t)}, \bmu_{j, k}\ra$ for all $t$. By multiplying $\bmu_{j, k}$ on both sides of~\eqref{eq:gdupdate3}, and the orthogonality between basis vectors and noises, we obtain that
\begin{align}\label{eq:wmu_update}
    \la\wb_{j, r}^{(t+1)},\bmu_{j, k}\ra &=  \la\wb_{j, r}^{(t)},\bmu_{j, k}\ra -\frac{\eta}{nm}\sum_{i\in \cI_{j, k}} \ell'[ y_i \cdot f(\Wb^{(t)},\Xb_i) ]\sigma'\big(\la\wb_{j, r}^{(t)},\bmu_{j,k}\ra\big) j y_i \notag\\
    &= \la\wb_{j, r}^{(t)},\bmu_{j, k}\ra -\frac{\eta}{nm}\sigma'\big(\la\wb_{j, r}^{(t)},\bmu_{j, k}\ra\big)\sum_{i\in \cI_{j, k}} \ell_i'^{(t)}j y_i. 
\end{align}
By definition of $\cI_{j, k}$, it is clear that if $i \in \cI_{j, k}$, then $y_i = j$. Therefore $\la\wb_{j,r}^{(t)}, \bmu_{j,k}\ra$ is always non-decreasing. Moreover, if $\la\wb_{j,r}^{(0)}, \bmu_{j, k}\ra <0$, then $\sigma'(\la\wb_{j,r}^{(0)}, \bmu_{j, k}\ra)=0$. This implies that $\la\wb_{j,r}^{(t)}, \bmu_{j, k}\ra =\la\wb_{j,r}^{(0)}, \bmu_{j, k}\ra <0$ holds for all $t > 0$. Therefore, for $r, r'\in [m]$, it is straightforward that $\la\wb_{j,r}^{(t)}, \bmu_{j, k}\ra \geq \la \wb_{j,r'}^{(t)}, \bmu_{j, k}\ra$ for all $t>0$ if $\la\wb_{j,r}^{(0)}, \bmu_{j, k}\ra \geq 0\geq \la \wb_{j,r'}^{(0)}, \bmu_{j, k}\ra$. In the following, we consider the case where $\la\wb_{j,r}^{(0)}, \bmu_{j, k}\ra \geq  \la \wb_{j,r'}^{(0)}, \bmu_{j, k}\ra\geq 0$. We first simply \eqref{eq:wmu_update} as
\begin{align}\label{eq:wmu_update_positive}
    \la\wb_{j, r}^{(t+1)},\bmu_{j,k}\ra 
    &= \la\wb_{j, r}^{(t)},\bmu_{j, k}\ra -\frac{\eta}{nm}\sigma'\big(\la\wb_{j, r}^{(t)},\bmu_{j, k}\ra\big)\sum_{i\in \cI_{j, k}} \ell_i'^{(t)}.
\end{align}
And we could notice that the only item specific to filter $r$ in formula~\eqref{eq:wmu_update_positive} is the inner product $\la\wb_{j, r}^{(t)}, \bmu_{j,k}\ra$. In another word, if we let $x^{(t)}_r = \la\wb_{j, r}^{(t)}, \bmu_{j,k}\ra$, then the recursion ~\eqref{eq:wmu_update_positive} of the positive sequences $\{x^{(t)}_r\}_{t=0}^\infty$ could be simplified as,
\begin{align*}
    x^{(t+1)}_r
    = x^{(t)}_r + \eta C_t \sigma'\big(x^{(t)}_r\big)
\end{align*}
where $C_t = \frac{1}{nm}\sum_{i\in \cI_{j,k}} \ell_i'^{(t)}$ is independent of filter index $r$, and $\sigma'(\cdot)$ is a non-decreasing function. Therefore we conclude that a filter with a larger initialization will always have a larger increment in each iteration, which completes the proof for the case that $\la\wb_{j,r}^{(0)}, \bmu_{j, k}\ra \geq  \la \wb_{j,r'}^{(0)}, \bmu_{j, k}\ra\geq 0$. Combined these results, we can conclude that if $r\neq r_{j, k, 0}$, then $\la\wb_{j, r}^{(t)},\bmu_{j,k}\ra\leq \la\wb_{j, r_{j, k, 0}}^{(t)},\bmu_{j,k}\ra$ for all $t$. Since the initialization of $\wb_{j, r}^{(0)}$ is i.i.d. Gaussian random vectors, we conclude that $P(r_{j, k, 0}= r_{j', k', 0})=\frac{1}{m}$ for different pair of $\{j, k\}$.
\end{proof}
Since the filter with the largest inner product with any specific basis vector is consistent during the whole training process, in the following paragraphs, we can denote $r_{j, k, t}$ by $r_{j, k}$ for simplicity and $r_{j, k}$'s are distinct for different pair of $\{j, k\}$. 

Next, we introduce and prove the following Lemma~\ref{lemma:F-yi_li'bd} and Lemma~\ref{lemma:relation_Xi_xi} which will be helpful.  Lemma~\ref{lemma:F-yi_li'bd} characterize the relationship between $-\ell_i'$ and the output of $F_{y_i}$ when $|\rho_{j,r,i, k'}^{(t)}|$ is small. Lemma~\ref{lemma:relation_Xi_xi} show that when $|\rho_{j,r,i, k'}^{(t)}|$ is small $\|\bXi_{j, r}^{(t)}\|_2$ is also small.

\begin{lemma}\label{lemma:F-yi_li'bd}
    Suppose that Condition~\ref{condition:d_sigma0_eta} holds and $|\rho_{j,r,i, k'}^{(t)}| \leq O(\sigma_0 \sigma_{\mathrm{noise}} \sqrt{d})$ for all $j\in \{\pm 1\}, r\in[m]$, $i \in [n]$ and $k' \in [P - s_i]$, then we have
    \begin{align*}
        &\la\wb_{j, r}^{(t)}, \bxi_{i,k'}\ra \leq \tilde O(\sigma_0 \sigma_{\mathrm{noise}} \sqrt{d});\\
        &F_{-y_i}(\Wb_{-y_i}^{(t)}, \Xb_i) \leq 1;\\
        &|\ell_i'^{(t)}| =\Theta\rbr{e^{-F_{y_i}(\Wb_{y_i}^{(t)}, \Xb_i)}}.
    \end{align*}
\end{lemma}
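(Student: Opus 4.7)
The plan is to derive the three claims in order, with the first doing the real work and the latter two following as short consequences. My starting point is the decomposition in Definition~\ref{def:w_decomposition}: taking the inner product of
\[
\wb_{j,r}^{(t)} = \wb_{j,r}^{(0)} + \sum_{k=1}^K \gamma_{j,k,r}^{(t)}\bmu_{j,k} + \sum_{k=1}^K \beta_{j,k,r}^{(t)}\bmu_{-j,k} + \sum_{i'=1}^n\sum_{k''=1}^{P-s_{i'}} \rho_{j,r,i',k''}^{(t)}\|\bxi_{i',k''}\|_2^{-2}\bxi_{i',k''}
\]
with $\bxi_{i,k'}$ kills the basis-vector terms (they lie in $\mathrm{span}(\cU_{+1}\cup\cU_{-1})$ while $\bxi_{i,k'}\in\cN$), leaving the initialization contribution, the diagonal term which equals $\rho_{j,r,i,k'}^{(t)}$, and a sum of off-diagonal cross terms.

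For claim (1), I will bound each piece separately. The initialization inner product $|\la\wb_{j,r}^{(0)},\bxi_{i,k'}\ra|$ is a Gaussian with standard deviation $\sigma_0\sigma_{\mathrm{noise}}\sqrt{d}$, so it is $\tilde O(\sigma_0\sigma_{\mathrm{noise}}\sqrt{d})$ by standard tail bounds and a union bound over $nmP$ pairs. The diagonal contribution is bounded by hypothesis. For the off-diagonal cross terms, I will use the concentration results already referenced in Appendix~\ref{section:concentration}, namely $\|\bxi_{i',k''}\|_2^2 = \Theta(\sigma_{\mathrm{noise}}^2 d)$ and $|\la\bxi_{i',k''},\bxi_{i,k'}\ra|\le \tilde O(\sigma_{\mathrm{noise}}^2\sqrt{d})$ for $(i',k'')\neq(i,k')$; combined with $|\rho_{j,r,i',k''}^{(t)}|\le O(\sigma_0\sigma_{\mathrm{noise}}\sqrt{d})$ and summing over $nP$ indices, this yields a contribution of order $\tilde O(\sigma_0 nP/\sqrt{d})$, which is $\tilde O(\sigma_0\sigma_{\mathrm{noise}}\sqrt{d})$ under Condition~\ref{condition:d_sigma0_eta} ($d\gtrsim n^4$). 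Summing the three pieces establishes the desired bound.

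Claim (2) then falls out by unfolding the definition of $F_{-y_i}$ and bounding patch-by-patch. For the object patches, which are basis vectors from $\cU_{y_i}$, Lemma~\ref{lemma:opposite_learning} gives $|\la\wb_{-y_i,r}^{(t)},\bnu_{i,k}\ra|\le \tilde O(\sigma_0)$; monotonicity of $\sigma$ turns the $mP$-term sum (divided by $m$) into something of size $P\cdot\sigma(\tilde O(\sigma_0))$, which is negligible by Condition~\ref{condition:d_sigma0_eta}. For the noise patches, claim (1) combined with the condition $\sigma_0\le \tilde O(\sigma_{\mathrm{noise}}^{-1}d^{-1})$ gives $\la\wb_{-y_i,r}^{(t)},\bxi_{i,k'}\ra\le \tilde O(\sigma_0\sigma_{\mathrm{noise}}\sqrt{d})\le o(1)$, so each patch contributes at most a small constant and the whole sum is bounded by $1$.

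Claim (3) is then a direct calculation. Writing $y_i f(\Wb^{(t)},\Xb_i)=F_{y_i}-F_{-y_i}$, we have
\[
|\ell_i'^{(t)}| = \frac{1}{1+\exp(F_{y_i}-F_{-y_i})} = \frac{\exp(F_{-y_i})}{\exp(F_{-y_i})+\exp(F_{y_i})}.
\]
Since $0\le F_{-y_i}\le 1$ by (2), the prefactor $\exp(F_{-y_i})$ is $\Theta(1)$; and the denominator is sandwiched between $\exp(F_{y_i})$ and $2\exp(F_{y_i}\vee F_{-y_i})=O(\exp(F_{y_i}))$ (using $F_{-y_i}=O(1)$), giving $|\ell_i'^{(t)}|=\Theta(\exp(-F_{y_i}))$. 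The bulk of the work lies in claim (1): the main subtlety there is controlling the off-diagonal noise cross-correlation sum, which requires the near-orthogonality guaranteed by the high-dimensional assumption $d=\tilde\Omega(n^4)$ in Condition~\ref{condition:d_sigma0_eta}.
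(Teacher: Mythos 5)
Your proof is correct and follows essentially the same route as the paper: take the inner product of the decomposition in Definition~\ref{def:w_decomposition} with $\bxi_{i,k'}$, bound the initialization, diagonal, and off-diagonal cross terms separately using Lemmas~\ref{lemma:initialization_norms} and~\ref{lemma:data_innerproducts}, and then derive the $F_{-y_i}$ bound from Lemma~\ref{lemma:opposite_learning} together with claim (1) before doing the $\ell'$ sandwich. One minor arithmetic slip: each off-diagonal term is $O(\sigma_0\sigma_{\mathrm{noise}}\sqrt{d})\cdot\tilde O(1/\sqrt{d})=\tilde O(\sigma_0\sigma_{\mathrm{noise}})$, so the sum over $nP$ indices is $\tilde O(nP\,\sigma_0\sigma_{\mathrm{noise}})$ rather than the $\tilde O(\sigma_0 nP/\sqrt{d})$ you wrote, though the final conclusion that this is $\tilde O(\sigma_0\sigma_{\mathrm{noise}}\sqrt{d})$ under $d=\tilde\Omega(n^4)$ still holds.
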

\begin{proof}[Proof of Lemma~\ref{lemma:F-yi_li'bd}]
The iterative rule for $\la\wb_{j, r}^{(t)}, \bxi_{i,k'}\ra$ can be derived by multiple by $\bxi_{i,k'}$ on both sides of ~\eqref{eq:gdupdate}, then we obtain that
\begin{align*}
    \la\wb_{j, r}^{(t)}, \bxi_{i,k'}\ra &= \la\wb_{j, r}^{(0)}, \bxi_{i,k'}\ra + \rho_{j, r, i, k'}^{(t)} +  \sum_{i'\neq i}\sum_{k''=1}^{P-s_i} \rho_{j, r, i', k''}^{(t)}\frac{\la\bxi_{i, k'},\bxi_{i', k''}\ra}{\|\bxi_{i', k''}\|_2^2} + \sum_{k''\neq k'}\rho_{j, r, i, k''}^{(t)}\frac{\la\bxi_{i, k'},\bxi_{i, k''}\ra}{\|\bxi_{i, k''}\|_2^2} \notag\\
        &\leq \tilde O(\sigma_0 \sigma_{\mathrm{noise}} \sqrt{d}).
\end{align*}
The last inequality holds because the first term $\la\wb_{-y_i, r}^{(0)}, \bxi_{i,k'}\ra = \tilde O(\sigma_0 \sigma_{\mathrm{noise}} \sqrt{d})$ by applying Lemma~\ref{lemma:initialization_norms}, the second term $\rho_{j, r, i, k'}^{(t)} = O(\sigma_0 \sigma_{\mathrm{noise}} \sqrt{d})$ by our assumption. 
For the third term and the forth term, $\rho_{j, r, i', k''}^{(t)}, \rho_{j, r, i, k''}^{(t)} = \tilde O(\sigma_0 \sigma_{\mathrm{noise}} \sqrt{d})$ by our assumption and $\frac{\la\bxi_{i, k'},\bxi_{i', k''}\ra}{\|\bxi_{i', k''}\|_2^2}, \frac{\la\bxi_{i, k'},\bxi_{i, k''}\ra}{\|\bxi_{i, k''}\|_2^2} \leq \tilde O(1/\sqrt d)$ by Lemma~\ref{lemma:data_innerproducts}. Then based on our Condition~\ref{condition:d_sigma0_eta} about $n$ and $d$, the last two terms are also $\tilde O(\sigma_0 \sigma_{\mathrm{noise}} \sqrt{d})$. Now for $F_{-y_i}(\Wb_{-y_i}^{(t)}, \Xb_i)$, its value is determined by $\la\wb_{-y_i, r}^{(t)}, \bmu_{y_i, k}\ra$ and $\la\wb_{-y_i, r}^{(t)}, \bxi_{i,k'}\ra$. Lemma~\ref{lemma:opposite_learning} implies that $\la\wb_{-y_i, r}^{(t)}, \bmu_{y_i, k}\ra \leq \tilde O(\sigma_0 )$.
   Combining with the previous result about $\la\wb_{j,r}^{(t)}, \xi_{i, k'}\ra$, we can present a bound of $F_{-y_i}(\Wb_{-y_i}^{(t)}, \Xb_i)$ as
    \begin{align*}
        F_{-y_i}(\Wb_{-y_i}^{(t)}, \Xb_i) &\leq \frac{1}{m}\sum_{r=1}^m\sum_{k=1}^K \sigma(\la\wb_{-y_i, r}^{(t)}, \bmu_{y_i, k}\ra) + \frac{1}{m}\sum_{r=1}^m\sum_{k=1'}^{P-s_i} \sigma(\la\wb_{-y_i, r}^{(t)}, \bxi_{i,k'}\ra)\\
        &\leq \frac{2P}{q \kappa^{q-1}}\cdot\max\{\tilde O(\sigma_0 ), \tilde O(\sigma_0 \sigma_{\mathrm{noise}} \sqrt{d})\}^q \leq 1.
    \end{align*}
    The last inequality is from our Condition~\ref{condition:d_sigma0_eta} about $\sigma_0$. Finally by the definition of $\ell'(\cdot)$, it is clear that,
    \begin{align*}
        |\ell_i'^{(t)}| = \frac{1}{1+e^{y_i[F_{+1}(\Wb_{+1}^{(t)}, \Xb_i)- F_{-1}(\Wb_{-1}^{(t)}, \Xb_i)]}}= \frac{1}{1+e^{F_{y_i}(\Wb_{y_i}^{(t)}, \Xb_i) -F_{-y_i}(\Wb_{-y_i}^{(t)}, \Xb_i)}}.
    \end{align*}
    By the fact $F_{+1}(\cdot), F_{-1}(\cdot) \geq 0$, the lower bound is straightforward that
    \begin{align*}
        |\ell_i'^{(t)}| = \frac{1}{1+e^{F_{y_i}(\Wb_{y_i}^{(t)}, \Xb_i) -F_{-y_i}(\Wb_{-y_i}^{(t)}, \Xb_i)}} \geq \frac{1}{2e^{F_{y_i}(\Wb_{y_i}^{(t)}, \Xb_i)}}
    \end{align*}
    On the other side, since $F_{-y_i}(\Wb_{-y_i}^{(t)}, \Xb_i)\leq 1$, we obtain that
    \begin{align*}
        |\ell_i'^{(t)}| = \frac{1}{1+e^{F_{y_i}(\Wb_{y_i}^{(t)}, \Xb_i) -F_{-y_i}(\Wb_{-y_i}^{(t)}, \Xb_i)}} \leq e\cdot e^{-F_{y_i}(\Wb_{y_i}^{(t)}, \Xb_i)}
    \end{align*}
    The upper and lower bound of $|\ell_i'^{(t)}|$ indicates that $|\ell_i'^{(t)}| = \Theta\Big(e^{-F_{y_i}(\Wb_{y_i}^{(t)}, \Xb_i)}\Big)$.
\end{proof}

\begin{lemma}\label{lemma:relation_Xi_xi}
Suppose that Condition~\ref{condition:d_sigma0_eta} holds. If $|\rho_{j, r, i, k'}^{(t)}| \leq a$, then we have $\|\bXi_{j, r}^{(t)}\|_2^2 \leq 2nPa^2\sigma_{\mathrm{noise}}^{-2}d^{-1} $ for all $j\in \{-1, +1\}$, $r\in[m]$, $i\in[n]$ and $k'\in [P-s_i]$, . Here $a$ could be any positive number.
\end{lemma}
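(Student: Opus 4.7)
The plan is to start from the unique decomposition in Definition~\ref{def:w_decomposition} and apply the projection $\Pb_{\cN}$. Since $\cN$ is the orthogonal complement of $\mathrm{span}(\cU_{+1}\cup\cU_{-1})$, all basis-vector components $\bmu_{j,k}$ and $\bmu_{-j,k}$ are annihilated by $\Pb_{\cN}$, while each noise vector $\bxi_{i,k'}$ lies in $\cN$ by construction in Definition~\ref{def:data} and is therefore preserved. This reduces the expression to
\begin{align*}
\bXi_{j,r}^{(t)} \;=\; \sum_{i=1}^{n}\sum_{k'=1}^{P-s_i} \rho_{j,r,i,k'}^{(t)}\,\|\bxi_{i,k'}\|_2^{-2}\,\bxi_{i,k'}.
\end{align*}

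Next I would expand $\|\bXi_{j,r}^{(t)}\|_2^2$ as a double sum and separate the diagonal terms (where $(i,k')=(i',k'')$) from the off-diagonal cross-terms. The diagonal contribution equals $\sum_{i,k'} (\rho_{j,r,i,k'}^{(t)})^2 \|\bxi_{i,k'}\|_2^{-2}$, which by the hypothesis $|\rho_{j,r,i,k'}^{(t)}|\le a$ and the concentration bound $\|\bxi_{i,k'}\|_2^2 \ge \Omega(\sigma_{\mathrm{noise}}^2 d)$ from the noise concentration lemma (Lemma~\ref{lemma:data_innerproducts}) is at most $nP\cdot a^2/\Omega(\sigma_{\mathrm{noise}}^2 d)$. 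The off-diagonal contribution is bounded in absolute value by $(nP)^2 \cdot a^2 \cdot \max_{(i,k')\neq(i',k'')} \|\bxi_{i,k'}\|_2^{-2}\|\bxi_{i',k''}\|_2^{-2} |\la \bxi_{i,k'},\bxi_{i',k''}\ra|$, and using the cross-product bound $|\la \bxi_{i,k'},\bxi_{i',k''}\ra|\le \tilde O(\sigma_{\mathrm{noise}}^2\sqrt d)$ together with the same lower bound on the norms, each off-diagonal term is at most $\tilde O(a^2 \sigma_{\mathrm{noise}}^{-2} d^{-3/2})$.

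To combine these, I would invoke Condition~\ref{condition:d_sigma0_eta}, which enforces $d \ge \tilde\Omega(n^4)$ and $P=O(1)$. This yields $\sqrt{d}\ge \tilde\Omega(n^2)\ge nP$, so that $(nP)^2 d^{-3/2} \le nP\cdot d^{-1}$ up to logarithmic factors. Consequently the off-diagonal contribution is absorbed into a constant factor of the diagonal contribution, and after choosing constants properly the total is bounded by $2nPa^2\sigma_{\mathrm{noise}}^{-2}d^{-1}$ as required.

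The main obstacle is not analytical but bookkeeping: ensuring that the universal constants hidden in the concentration bounds (both the lower bound on $\|\bxi_{i,k'}\|_2^2$ and the upper bound on the inner products) combine cleanly to give the leading factor $2$ rather than a larger constant. Everything else is a routine triangle-inequality and Cauchy--Schwarz calculation once the orthogonality structure is exploited, so no delicate iteration or induction is needed here; the lemma is essentially a geometric consequence of the near-orthogonality of the background noise patches under the assumed dimensional scaling.
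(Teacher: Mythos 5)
Your proof is correct and follows essentially the same route as the paper: project via $\Pb_{\cN}$ to reduce to the $\bxi$ terms, expand $\|\bXi_{j,r}^{(t)}\|_2^2$ into diagonal and cross terms, bound the diagonal using $\|\bxi_{i,k'}\|_2^2 = \Theta(\sigma_{\mathrm{noise}}^2 d)$, and absorb the off-diagonal cross terms using the inner-product bound from Lemma~\ref{lemma:data_innerproducts} together with $d \geq \tilde\Omega(n^4)$ and $P = O(1)$. Your write-up is more explicit about the counting and where the dimension condition enters, but the argument is the same.
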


\begin{proof}[Proof of Lemma~\ref{lemma:relation_Xi_xi}]
By deposition defined in Definition~\ref{def:w_decomposition}, we have \begin{align*}
    \|\bXi_{j, r}^{(t)}\|_2^2 &= \sum_{i=1}^{n}\sum_{k'=1}^{P-s_i} [\rho_{j,r,i,k'}^{(t)}]^2\|\bxi_{i, k'}\|_2^{-2} + \sum_{\{i, k\}\neq\{i', k'\}} \rho_{j,r,i,k}^{(t)}\rho_{j,r,i',k'}^{(t)}\frac{\la\bxi_{i, k},\bxi_{i', k'}\ra}{\|\bxi_{i, k}\|_2^2\|\bxi_{i', k'}\|_2^2}\\
    &\leq 2n(P-1)a^2\sigma_{\mathrm{noise}}^{-2}d^{-1} +  a^2\sigma_{\mathrm{noise}}^{-2}d^{-1} \leq 2nPa^2\sigma_{\mathrm{noise}}^{-2}d^{-1}, 
\end{align*}
where the first inequality is from Lemma~\ref{lemma:data_innerproducts} and our Condition~\ref{condition:d_sigma0_eta}.
\end{proof}

\subsection{Proof of Lemma~\ref{lemma:phase1_main}}
Now, we are ready to prove Lemma~\ref{lemma:phase1_main}
\begin{proof}[Proof of Lemma~\ref{lemma:phase1_main}]
By Lemma~\ref{lemma:relation_Xi_xi}, to show $\|\bXi_{j, r}^{(t)}\|_2^2 \leq \sigma_0^2 n P/ 2$, it suffices to show that $\max_{j, r, i, k'}|\rho_{j, r, i, k'}^{(t)}| \leq \sigma_0\sigma_{\mathrm{noise}}\sqrt{d}/2$. We will show it by induction, and we assume it holds when we prove the first result. For each basis vector $\bmu_{j, k}\in \cU_{j}$ with $j\in\{-1,+1\}$ and $k\in [K]$, we define an idealized filter $\tilde \wb_{j, k}$ satisfying that $\big(1+\Theta(\frac{\delta}{m^2})\big)\la\tilde \wb_{j, k}^{(0)}, \bmu_{j, k}\ra = \la\wb_{j, r_{j, k, 0}}^{(0)},\bmu_{j, k}\ra$. Besides, we assume that $\la\tilde \wb_{j, k}^{(t)}, \bmu_{j, k}\ra$ also follows the iterative rule in \eqref{eq:wmu_update_positive}. The reason for such a definition is that if $r\neq r_{j, k}$, we have $\la \wb_{j, r}^{(0)}, \bmu_{j,k}\ra \leq \la \tilde\wb_{j, k}^{(0)}, \bmu_{j,k}\ra$ by Lemma~\ref{lemma:initialization_norms}, which further implies that $\la \wb_{j, r}^{(t)}, \bmu_{j,k}\ra \leq \la \tilde\wb_{j, k}^{(t)}, \bmu_{j,k}\ra$ by Lemma~\ref{lemma:initial_large} for all $t > 0$.
For filters $\wb_{j, r_{j,k}}$ and the idealized filter $\tilde \wb_{j,k}$ defined above, $\la\wb_{j, r_{j,k}}^{(t)},j\bmu_k\ra$ and $\la\tilde\wb_{j, k}^{(t)},j\bmu_k\ra$ follows the same iterative rule in \eqref{eq:wmu_update_positive}. Let's rewrite the iterative formula here for better readability for the readers:
\begin{align*}
     \la\wb_{j, r}^{(t+1)},\bmu_{j, k}\ra 
    &= \la\wb_{j, r}^{(t)},\bmu_{j, k}\ra -\frac{\eta}{nm}\sigma'\big(\la\wb_{j, r}^{(t)},\bmu_{j, k}\ra\big)\sum_{i\in \cI_{j, k}} \ell_i'^{(t)} \\
    &= \la\wb_{j, r}^{(t)},\bmu_{j, k}\ra + \eta C_t \sigma'\big(\la\wb_{j, r}^{(t)},\bmu_{j, k}\ra\big) ,
\end{align*}
where $C_t = \frac{1}{nm}\sum_{i\in \cI_{j, k}} \ell_i'^{(t)}\leq \frac{1}{m}$.
Define $T_{1, j, k}$ be the first time such that $\la\wb_{j, r_{j,k}}^{(t)},\bmu_{j, k}\ra \geq \kappa$ and $T'$ be the first time such that $\la\tilde\wb_{j, k}^{(t)},\bmu_{j, k}\ra \geq \frac{1}{4K m}$. Since we have $\frac{\la\wb_{j, r_{j,k}}^{(0)},\bmu_{j, k}\ra}{\la\tilde\wb_{j, k}^{(0)},\bmu_{j, k}\ra} = 1+\Theta(\frac{\delta}{m^2})$ by definition of the idealized filter $\tilde \wb_{j,k}$. By checking the conditions in Lemma~\ref{lemma:sequence_compare}, we can conclude $T_{1, j, k} < T'$, which implies that $\la\tilde\wb_{j, k}^{(T_{1, j, k})},\bmu_{j, k}\ra < \frac{1}{4Km}$. As we have demonstrated above,  $\la\wb_{j,r}^{(t)}, \bmu_{j, k}\ra \leq \la\tilde\wb_{j, k}^{(t)},\bmu_{j, k}\ra$ for all $t>0$ and $r\neq r_{j, k}$. Therefore, we can obtain that $\la\wb_{j,r}^{(T_{1, j, k})}, \bmu_{j, k}\ra \leq \frac{1}{4Km}$ for all $r\neq r_{j, k}$. On the other hand, $\la\wb_{j,r}^{(t)}, \bmu_{j, k}\ra <0$ can only occur if  $\la\wb_{j,r}^{(0)}, \bmu_{j, k}\ra <0$, which further implies that $\la\wb_{j,r}^{(t)}, \bmu_{j, k}\ra = \la\wb_{j,r}^{(0)}, \bmu_{j, k}\ra$ for all $t>0$. Combined with the initialization concentration results and Condition~\ref{condition:d_sigma0_eta} that $\la\wb_{j,r}^{(0)}, \bmu_{j, k}\ra\leq \tilde O(\sigma_0)\leq O(1/m)$, we finally obtain that $\big|\la\wb_{j,r}^{(T_{1, j, k})}, \bmu_{j, k}\ra \big|\leq \frac{1}{4Km}$. Next, we try to derive the bound for $T_{1, j, k}$. As we assume $|\rho_{j,r,i,k'}| \leq  \sigma_0\sigma_{\mathrm{noise}}\sqrt{d}/2$, then for all $t \leq T_{1, j, k}$ and $i \in \cJ_{j, k}$, we apply Lemma~\ref{lemma:F-yi_li'bd} and derive that
\begin{align*}
    -\ell_{i}^{(t)} &\geq \frac{1}{2e}\exp{\big(-F_{j}(\Wb_{j}^{(t)}, \Xb_i)\big)}\\
    &\geq \frac{1}{2e}\exp{\bigg(-\frac{1}{m}\sum_{r=1}^m\Big[\sigma\big(\la\wb_{j, r}^{(t)},\bmu_{j,k}\ra\big)+\sum_{k'=1}^{P-1}\sigma\big(\la\wb_{j, r}^{(t)},\bxi_{i, k'}\ra\big)\Big]\bigg)}\geq \frac{1}{2e^2}.
\end{align*}
This is because
\begin{align*}
    \frac{1}{m}\sum_{r=1}^m\Big[\sigma\big(\la\wb_{j, r}^{(t)},\bmu_{j,k}\ra\big)+\sum_{k'=1}^{P-1}\sigma\big(\la\wb_{j, r}^{(t)},\bxi_{i, k'}\ra\big)\Big] \leq 1
\end{align*}
by $\la\wb_{j, r_{j, k}}^{(t)}, \bmu_{j, k}\ra \leq \kappa$, $\la\wb_{j, r}^{(t)}, \bmu_{j, k}\ra\leq \frac{1}{4Km}$ for $r\neq r_{j, k}$, and $\la\wb_{j, r}^{(t)}, \xi_{i,k'}\ra \leq \tilde O(\sigma_0 \sigma_{\mathrm{noise}} \sqrt{d})$ for all $r\in[m]$ and $k'\in[P-1]$. Therefore, we get a lower bound for $C_t$ as
\begin{align*}
    C_t = \frac{1}{nm}\sum_{i\in \cI_{j, k}} \ell_i'^{(t)} \geq \frac{1}{2e^2nm}|\cJ_{j, k}|\geq \frac{\pi}{16e^2Km}.
\end{align*}
The last inequality is by Lemma~\ref{lemma:numberofdataII}, and $\pi$ is a positive constant solely depending on $K$, which implies that $\pi = \Theta(1)$. And it is straight forward that  $C_t \leq \frac{\|\bmu\|_2^2}{m}$. Then by Lemma~\ref{lemma:sequence_time} and Lemma~\ref{lemma:initialization_norms}, we can obtain that 
\begin{align*}
    &T_{1, j, k} \leq \frac{32e^2Km}{\pi\eta (\la\wb_{j, r_{j, k}}, \bmu_{j, k}\ra)^{q-2}} \leq  \frac{2^{q+3}e^2Km}{\pi\eta \sigma_0^{q-2} };\\
    &T_{1, j, k} \geq \frac{m}{2\eta \|\bmu\|_2^2(\la\wb_{j, r_{j, k}}, \bmu_{j, k}\ra)^{q-2}} \geq  \frac{m}{4
    \eta \sigma_0^{q-2} \log m}.
\end{align*}
Since both the lower bound and upper bound of $T_{1,j, k}$ is independent of $j$ and $k$, we conclude we can find a time $T_1 =  \tilde\Theta\big(\frac{m}{\eta \sigma_0^{q-2} }\big)$ such that the preceding results hold at $T_1$ for all $j\in\{-1, +1\}$ and $k\in [K]$. Finally we use induction to prove that $\max_{j, r, i, k'}|\rho_{j, r, i, k'}^{(t)}| \leq \sigma_0\sigma_{\mathrm{noise}}\sqrt{d}/2$. For simplicity we denote $\phi^{(t)} = \max_{j, r, i, k'}|\rho_{j, r, i, k'}^{(t)}|$. Obviously $\phi^{(0)} =  0$, and we suppose that exists $\tilde T \leq T_1$ such that $\phi^{(t)} \leq \sigma_0\sigma_{\mathrm{noise}}\sqrt{d}/2$ holds for all $0<t<\tilde T-1$. Then by the iterative rule for $\rho_{j, r, i, k'}^{(t)}$, we have
\begin{align*}
    \phi^{(t+1)}&\leq \phi^{(t)} + \max_{j, r, i, k'}\frac{\eta\|\bxi_{i, k'}\|_2^2}{\kappa^{q-1} n m}\Bigg|\la\wb_{j, r}^{(0)}, \bxi_{i,k'}\ra+\phi^{(t)}\bigg(1+ \sum_{i'\neq i}\sum_{k''=1}^{P-s_i}\frac{\la\bxi_{i, k'},\bxi_{i', k''}\ra}{\|\bxi_{i', k''}\|_2^2} + \sum_{k''\neq k'}\frac{\la\bxi_{i, k'},\bxi_{i, k''}\ra}{\|\bxi_{i, k''}\|_2^2}\bigg)\Bigg|^{q-1}\\
    &\leq \phi^{(t)} + \frac{[6\log(m\vee n)]^{(q+1)/2}\eta\sigma_0^{q-1}\sigma_{\mathrm{noise}}^{q+1}d^{(q+1)/2}}{\kappa^{q-1}nm}.
\end{align*}
By taking the telescoping sum, we have $\phi^{(\tilde T)}\leq T_1 \cdot\frac{[6\log(m\vee n)]^{(q+1)/2}\eta\sigma_0^{q-1}\sigma_{\mathrm{noise}}^{q+1}d^{(q+1)/2}}{\kappa^{q-1}nm}\leq \sigma_0\sigma_{\mathrm{noise}}\sqrt{d}/2$ by the formula for $T_1\leq \frac{2^{q+3}e^2Km}{\pi\eta \sigma_0^{q-2}}$ and the assumption that $\frac{\sigma_{\mathrm{noise}}^qd^{q/2}}{n}\leq \frac{\pi\kappa^{q-1}}{2^{q+4}e^2K[6\log(m\vee n)]^{(q+1)/2}}\leq \tilde O(1)$. Since then, we have finished all the proof for Lemma~\ref{lemma:phase1_main}.
\end{proof}

\subsection{Proof of Lemma~\ref{lemma:phase2_main}}
During the phase I, we always threat $-\ell_i' = \Theta(1)$, while in this phase, as the increasing of $\la\wb_{j, r_{j, k}}^{(t)} , \bmu_{j, k}\ra$, we can not regard $-\ell_i' = \Theta(1)$ since the training loss will eventually converge.
\begin{proof}[Proof of Lemma~\ref{lemma:phase1_main}]
By Lemma~\ref{lemma:relation_Xi_xi}, to show $\|\bXi_{j, r}^{(t)}\|_2^2 \leq 2\sigma_0^2 n P$, it suffices to show that $\max_{j, r, i, k'}|\rho_{j, r, i, k'}^{(t)}| \leq \sigma_0\sigma_{\mathrm{noise}}\sqrt{d}$.
Similar to the proof of Phase I,
we first prove the result for $\la\wb_{j, r_{j, k}}^{(t)} , \bmu_{j, k}\ra$ and then use induction to prove the result for $\max_{j, r, i, k'}|\rho_{j, r, i, k'}^{(t)}|$ and $\la\wb_{j, r}^{(t)} , j\bmu_k\ra$ with $r\neq r_{j,k}$. We assume the results for $\max_{j, r, i, k'}|\rho_{j, r, i, k'}^{(t)}|$ and $\la\wb_{j, r}^{(t)} , \bmu_{j, k}\ra$ with $r\neq r_{j,k}$ hold when we prove the first result. From Lemma~\ref{lemma:F-yi_li'bd}, we can obtain that for all $i \in \cI_{j, k}$ and $t>T_1$, it holds
    \begin{align}\label{eq:ell_i_upper}
        -\ell_{i}'^{(t)} &\leq e\cdot \exp{\big(-F_{j}(\Wb_{j}^{(t)}, \Xb_i)\big)} \leq e\cdot \exp\Big(-\frac{1}{m}\big\la\wb_{j, r_{j, k}}^{(t)}, \bmu_{j, k}\big\ra\Big),
    \end{align}
    since the activation function $\sigma(\cdot)$ is always positive. Additionally, we can also obtain that for all $i \in \cJ_{j, k}$ and $t>T_1$, it holds
    \begin{align*}
        -\ell_{i}^{(t)} &\geq \frac{1}{2e}\exp{\big(-F_{j}(\Wb_{j}^{(t)}, \Xb_i)\big)}\\
        &\geq \frac{1}{2e}\exp{\bigg(-\frac{1}{m}\sum_{r=1}^m\Big[\sigma\big(\la\wb_{j, r}^{(t)},\bmu_{j, k}\ra\big)+\sum_{k'=1}^{P-1}\sigma\big(\la\wb_{j, r}^{(t)},\bxi_{i, k'}\ra\big)\Big]\bigg)}\\
    &= \frac{1}{2e}\exp{\bigg(-\frac{1}{m}\la\wb_{j, r_{j, k}}^{(t)} , \bmu_{j, k}\ra -\frac{1}{m}\sum_{k'=1}^{P-1}\sigma\big(\la\wb_{j, r_{j, k}}^{(t)},\bxi_{i, k'}\ra\big)\bigg)}\\
    &\quad\cdot \exp{\bigg(-\frac{1}{m}\sum_{r\neq r_{j, k}}\Big[\sigma\big(\la\wb_{j, r}^{(t)},\bmu_{j, k}\ra\big)+\sum_{k'=1}^{P-1}\sigma\big(\la\wb_{j, r}^{(t)},\bxi_{i, k'}\ra\big)\Big]\bigg)}\geq \frac{1}{2e^2}\exp\Big(-\frac{1}{m}\big\la\wb_{j, r_{j, k}}^{(t)}, \bmu_{j, k}\big\ra\Big).
    \end{align*}
    The last inequality is because
\begin{align*}
    \frac{1}{m}\sum_{k'=1}^{P-1}\sigma\big(\la\wb_{j, r_{j, k}}^{(t)},\bxi_{i, k'}\ra\big) + \frac{1}{m}\sum_{r\neq r_{j, k}}\Big[\sigma\big(\la\wb_{j, r}^{(t)},\bmu_{j, k}\ra\big)+\sum_{k'=1}^{P-1}\sigma\big(\la\wb_{j, r}^{(t)},\bxi_{i, k'}\ra\big)\Big] \leq 1
\end{align*}
by our assumption $\la\wb_{j, r}^{(t)}, \bmu_{j,k}\ra\leq \frac{1}{2K m}$ and $\la\wb_{j, r}^{(t)}, \xi_{i,k'}\ra \leq \tilde O(\sigma_0 \sigma_{\mathrm{noise}} \sqrt{d})$. With such upper and lower bounds for $\ell_i'^{(t)}$ in hands, we can provide an upper and lower bound for the iterations of $\la\wb_{j, r_{j, k}}^{(t)} , \bmu_{j, k}\ra$ as
\begin{align*}
    \la\wb_{j, r}^{(t+1)},\bmu_{j, k}\ra 
    &= \la\wb_{j, r}^{(t)},\bmu_{j, k}\ra -\frac{\eta}{nm}\sigma'\big(\la\wb_{j, r}^{(t)},\bmu_{j, k}\ra\big)\sum_{i\in \cI_{j, k}} \ell_i'^{(t)} \\
    &\leq \la\wb_{j, r}^{(t)},\bmu_{j, k}\ra + \frac{e\eta}{nm}e^{-\frac{1}{m}\la\wb_{j, r_{j, k}}^{(t)} , \bmu_{j, k}\ra}\cdot \big|\cI_{j, k}\big|\\
    &\leq \la\wb_{j, r}^{(t)},\bmu_{j, k}\ra + \frac{e\eta}{m}e^{-\frac{1}{m}\la\wb_{j, r_{j, k}}^{(t)} , \bmu_{j, k}\ra},
\end{align*}
since $\sigma'\big(\la\wb_{j, r}^{(t)},\bmu_{j, k}\ra\big) = 1$ and $\big|\cI_{j, k}\big|\leq n$, and 
\begin{align*}
    \la\wb_{j, r}^{(t+1)},\bmu_{j, k}\ra 
    &\geq \la\wb_{j, r}^{(t)},\bmu_{j, k}\ra -\frac{\eta}{nm}\sigma'\big(\la\wb_{j, r}^{(t)},\bmu_{j, k}\ra\big)\sum_{i\in \cJ_{j, k}} \ell_i'^{(t)} \\
    &\geq \la\wb_{j, r}^{(t)},\bmu_{j, k}\ra + \frac{\eta}{2e^2nm}e^{-\frac{1}{m}\la\wb_{j, r_{j, k}}^{(t)} , \bmu_{j, k}\ra}\cdot \big|\cJ_{j, k}\big|\\
    &\geq \la\wb_{j, r}^{(t)},\bmu_{j, k}\ra +
    \frac{\pi\eta}{16e^2Km}e^{-\frac{1}{m}\la\wb_{j, r_{j, k}}^{(t)} , \bmu_{j, k}\ra}.
\end{align*}
since $\sigma'\big(\la\wb_{j, r}^{(t)},\bmu_{j, k}\ra\big) = 1$ and $\big|\cJ_{j, k}\big|\geq \frac{\pi_1}{8K}$
by Lemma~\ref{lemma:numberofdataII}.
Applying these upper and lower bound on Lemma~\ref{lemma:exp_sequence}, for all $t> T_1$ we obtain that 
\begin{align}\label{eq:wmu_lower}
    \la\wb_{j, r_{j, k}}^{(t)},\bmu_{j,k} \ra&\geq m\log\Big(\frac{\eta}{16e^2Km^2}(t-T_1)+e^{\frac{\kappa}{m}}\Big)\geq m\log(t-T_1)-2m\log m + m\log \eta-O(m)
\end{align}
and 
\begin{align}\label{eq:wmu_upper}
    \la\wb_{j, r_{j, k}}^{(t)},\bmu_{j,k} \ra&\leq \frac{e\eta}{m}e^{-\frac{\kappa}{m}} + m\log\Big(\frac{e\eta}{m^2}(t-T_1)+e^{\frac{\kappa}{m}}\Big)\leq m\log(t-T_1)-2m\log m + m\log \eta+O(m).
\end{align}
This finishes the proof of the conclusion for $\la \wb_{j,r_{j, k}}^{(T^*)} , \bmu_{j, k}\ra$.  Now, we use induction to prove that $\la\wb_{j,r}^{(T^*)}, \bmu_{j, k}\ra \leq \frac{1}{2Km}$ when $r \neq r_{j, k}$. We first derive a result that will be used for the following induction proof. Plugging~\eqref{eq:wmu_lower} into~\eqref{eq:ell_i_upper}, for all $i\in \cI_{j, k}$ and $t>T_1$, we have
\begin{align*}
    -\ell_{i}'^{(t)} \leq \frac{8e^3Km^2}{\eta(t-T_{1} + 1)}
\end{align*}
Taking the sum from $t=T_{1}$ to $T^*$, we have
\begin{align}\label{eq:ell_i_upperII}
    -\sum_{t=T_1}^{T^*}\ell_{i}'^{(t)}\leq \frac{8e^3Km^2}{\eta}\log(T^*-T_{1}) \leq\tilde\Theta\Big(\frac{m^2}{\eta}\Big)
\end{align}
Since at $T_1$, we have $\la\wb_{j,r}^{(T_1)}, j\bmu_k\ra \leq \frac{1}{4Km}$ if $r\neq r_{j, k}$. Suppose that exists $T_1 < \tilde T \leq T^*$ such that $\la\wb_{j,r}^{(t)}, j\bmu_k\ra \leq \frac{1}{2Km}$ holds for all $T_1 \geq t \leq \tilde T -1$. Then by the iterative rule for $\la\wb_{j,r}^{(t)}, \bmu_{j, k}\ra$ and applying ~\eqref{eq:ell_i_upperII}, we have
\begin{align*}
    \la\wb_{j,r}^{(\tilde T)}, \bmu_{j, k}\ra
    &\leq \la\wb_{j,r}^{(\tilde T-1 )}, \bmu_{j, k}\ra - \frac{\eta}{\kappa^{q-1}nm}\Big(\frac{1}{2Km}\Big)^{q-1}\sum_{i\in \cI_{j, k}} \ell_i'^{(\tilde T-1)}\\
    &\leq \la\wb_{j,r}^{(T_1)}, \bmu_{j, k}\ra - \frac{\eta}{\kappa^{q-1}nm}\Big(\frac{1}{2Km}\Big)^{q-1}\sum_{t=T_1}^{T^*}\sum_{i\in \cI_{j, k}} \ell_i'^{(\tilde T-1)}\\
    & \leq \la\wb_{j,r}^{(T_1)}, \bmu_{j, k}\ra - \frac{\eta}{\kappa^{q-1}nm}\Big(\frac{1}{2Km}\Big)^{q-1}\frac{8e^3Km^2}{\eta}\log(T^*-T_{1})\\
    &\leq \frac{1}{4Km} + \frac{2^{6-q}e^3\log(T^*-T_1)}{\kappa^{q-1}K^{q-3}m^{q-3}} \cdot\frac{1}{4Km}\leq \frac{1}{2Km}, 
\end{align*}
where the last inequality holds since $m^{q-3}\geq \tilde\Omega(1)$ as $q>3$.
This finishes the induction proof that $\la\wb_{j,r}^{(t)}, \bmu_{j,k}\ra \leq  \frac{1}{2Km}$ for all $r \neq r_{j, k}$ and $t\leq T^*$.
Next, we proof that $\max_{j, r, i, k'}|\rho_{j, r, i, k'}^{(t)}| \leq \sigma_0\sigma_{\mathrm{noise}}\sqrt{d}$ holds for all $t<T^*$. For simplicity we denote $\phi^{(t)} = \max_{j, r, i, k'}|\rho_{j, r, i, k'}^{(t)}|$. Obviously we have $\phi^{(T_1)} \leq \sigma_0\sigma_{\mathrm{noise}}\sqrt{d}/2$, and we suppose that exists $T_1 \leq \tilde T \leq T^* $ such that $\phi^{(t)} \leq \sigma_0\sigma_{\mathrm{noise}}\sqrt{d}$ holds for all $T_1 <t<\tilde T-1$. Then by the iterative rule for $\rho_{j, r, i, k'}^{(t)}$ and plugging ~\eqref{eq:ell_i_upperII}, we have 
\begin{align*}
    \phi^{(\tilde T)}
    &\leq \phi^{(T_1)} + \frac{[6\log(m\vee n)]^{(q+1)/2}\eta\sigma_0^{q-1}\sigma_{\mathrm{noise}}^{q+1}d^{(q+1)/2}}{\kappa^{q-1}nm}\cdot \frac{8e^3Km^2}{\eta}\log(T^*-T_{1}) \\
    &\leq \phi^{(T_1)} + \frac{16e^3K[6\log(m\vee n)]^{(q+1)/2}\log(T^*-T_{1})}{\kappa^{q-1}}\cdot\frac{\sigma_{\mathrm{noise}}^{q}d^{q/2}}{n}\cdot m\sigma_0^{q-2}\cdot\frac{\sigma_0\sigma_{\mathrm{noise}}\sqrt{d}}{2}\\
    &\leq \phi^{(T_1)} + \frac{\sigma_0\sigma_{\mathrm{noise}}\sqrt{d}}{2}\leq \sigma_0\sigma_{\mathrm{noise}}\sqrt{d}
\end{align*}
where the penultimate inequality holds by $\frac{\sigma_{\mathrm{noise}}^{q}d^{q/2}}{n}\leq \tilde O(1)$ and $m = O(\sigma_0^{2-q})$, which is derived from Condition~\ref{condition:d_sigma0_eta}.
\end{proof}

\section{Proof of Corollary~\ref{crlry:main_result}}\label{section:proof_crlry}
In this section, we provide a proof for the Proposition~\ref{crlry:main_result}. 



\begin{proof}[Proof of Corollary~\ref{crlry:main_result}]
In the following, we consider the case $j=1$, and the situation for $j=-1$ is similar. Since by Theorem~\ref{thm:main_result3}, we have $\big\|\wb_{1, r_{1, k}}^{(T)}\big\|_2 = m\log T - m\log\big(\frac{m^2}{\eta}\big) + O(m)$, and $\big\|\wb_{1, r}^{(T)}\big\|_2 \leq O(\frac{1}{m}) + O(\sigma_0 d^{1/2})$ when $r\neq r_{1,k}$, then by definition of Frobenius norm, we can derive that 
\begin{align*}
    \big\|\Wb_{+1}^{(T)}\big\|_F & =  \sqrt{\sum_{k=1}^{K}\big\|\wb_{1, r_{1, k}}^{(T)}\big\|_2^2 +  \sum_{r\neq r_{1, k}}\big\|\wb_{1, r}^{(T)}\big\|_2^2} = \sqrt{K}\bigg(m\log T - m\log\Big(\frac{m^2}{\eta}\Big) + O(m)\bigg),
\end{align*}
 By the definition of $\Wb^*$, we have $\|\Wb^*\|_F= \sqrt{K}$. By these results of Frobenius norm, we can derive that
  \begin{align}\label{eq:learn_signal_diff}
     &\Bigg\|\frac{\wb_{1, r_{1, k}}^{(T)}}{\big\| \Wb_{+1}^{(T)} \big\|_F} - 
     \frac{\bmu_{1, k}}{\|\Wb^*\|_F}\Bigg\|_2 \notag\\
     =& \Bigg\|\bigg(\frac{\la\wb_{1, r_{1, k}}^{(T)}-\wb_{1, r_{1, k}}^{(0)}, \bmu_{1,k}\ra}{\big\|\Wb_{+1}^{(T)}\big\|_F }-\frac{1}{\|\Wb^*\|_F}\bigg)\bmu_{1, k}+\frac{\wb_{1, r_{1, k}}^{(0)} + \bXi_{1, r_{1, k}}^{(T)}}{\big\|\Wb_{+1}^{(T)}\big\|_F} \notag\\
     &+\sum_{k'\neq k}\frac{\la\wb_{1, r_{1, k}}^{(T)}-\wb_{1, r_{1, k}}^{(0)}, \bmu_{1, k'}\ra}{\big\|\Wb_{+1}^{(T)}\big\|_F}\bmu_{1, k'} +\sum_{k' =1}^{K}\frac{\la\wb_{1, r_{1, k}}^{(T)}-\wb_{1, r_{1, k}}^{(0)}, \bmu_{-1, k'}\ra}{\big\|\Wb_{+1}^{(T)}\big\|_F}\bmu_{-1, k'}\Bigg\|_2\notag\\
     \leq &\Bigg|\frac{\la\wb_{1, r_{1, k}}^{(T)}-\wb_{1, r_{1, k}}^{(0)}, \bmu_{1, k}\ra}{\big\|\Wb_{+1}^{(T)}\big\|_F }-\frac{1}{\|\Wb^*\|_F}\Bigg| + \frac{\|\wb_{1, r_{1, k}}^{(0)}\|_2 + \|\bXi_{1, r_{1, k}}^{(T)}\|_2}{\big\|\Wb_{+1}^{(T)}\big\|_F} \notag\\
     &+ \sum_{k'\neq k}\frac{\big|\la\wb_{1, r_{1, k}}^{(T)}-\wb_{1, r_{1, k}}^{(0)}, \bmu_{1, k'}\ra\big|}{\big\|\Wb_{+1}^{(T)}\big\|_F}+ \sum_{k'=1}^K\frac{\big|\la\wb_{1, r_{1, k}}^{(T)}-\wb_{1, r_{1, k}}^{(0)}, \bmu_{-1, k'}\ra\big|}{\big\|\Wb_{+1}^{(T)}\big\|_F}\notag\\
     \leq & O\bigg(\frac{1}{\log T}\bigg) + O\bigg(\frac{1}{m\log T}\bigg) + O\bigg(\frac{1}{m^2\log T}\bigg) + O\bigg(\frac{1}{m\sqrt{d}\log T}\bigg) \leq O\bigg(\frac{1}{\log T}\bigg).
 \end{align}
 The first inequality is from triangle inequality. The second inequality holds because $\|\wb_{1, r_{1, k}}^{(0)}\|_2\leq O(\sigma_0 d^{1/2} )\leq O(1)$; $\|\bXi_{1, r_{1, k}}^{(T)}\|_2\leq O(\sigma_0 n^{1/2})\leq O(1) $; $\la\wb_{1, r_{1, k}}^{(T)}-\wb_{1, r_{1, k}}^{(0)}, \bmu_{1, k'}\ra \leq O(1/m)$; $\la\wb_{1, r_{1, k}}^{(T)}-\wb_{1, r_{1, k}}^{(0)}, \bmu_{-1, k'}\ra \leq O(\sigma_0)\leq O(d^{-1/2})$; and
 \begin{align*}
     &\Bigg|\frac{\la\wb_{1, r_{1, k}}^{(T)}-\wb_{1, r_{1, k}}^{(0)}, \bmu_{1, k}\ra}{\big\|\Wb_{+1}^{(T)}\big\|_F}-\frac{1}{\|\Wb^*\|_F}\Bigg| \\
     \leq& \frac{1}{\sqrt{K}}\Bigg|\frac{m\log T - m\log\Big(\frac{m^2}{\eta}\Big) + O(m)}{m\log T - m\log\Big(\frac{m^2}{\eta}\Big) + O(m)}-1\Bigg|\leq O\bigg(\frac{1}{\log T}\bigg)
 \end{align*}
by Theorem~\ref{thm:main_result3}, Lemma~\ref{lemma:data_innerproducts} and Lemma~\ref{lemma:initialization_norms}. Next we consider the filters $\|\wb_{1,r}\|$ with $r\neq r_{1, k}$ and we can direct obtain
\begin{align}\label{eq:not_learn_signal_diff}
    \Bigg\|\frac{\wb_{1,r}^{(T)}}{\big\|\Wb_{+1}^{(T)}\big\|_F}\Bigg\|_2\leq O\Bigg(\frac{m^{-1} + \sigma_0 d^{1/2}}{\sqrt{K}m\log T}\Bigg)\leq O\bigg(\frac{1}{m\log T}\bigg),
\end{align}
since $\big\|\wb_{1, r}^{(T)}\big\|_2 \leq O(m^{-1} + \sigma_0 d^{1/2})$ in Theorem~\ref{thm:main_result3}. We find a permutation matrix $\Pb_{+}$ such that the filters $\wb_{1, 1}, \cdots, \wb_{1, K}$ are arranged to the first $K$ rows, i.e., $\Pb_{+} \cdot \Wb_{+1}^{(T)} = [\wb_{1, r_{1, 1}},\wb_{1, r_{1, 2}},\cdots, \wb_{1, r_{1, K}},\cdots]^\top$, then by triangle inequality and preceding results~\eqref{eq:learn_signal_diff} and~\eqref{eq:not_learn_signal_diff}, we finally derive that
\begin{align*}
    &\bigg\|\frac{\Wb^*}{\| \Wb^* \|_F}- \Pb_+\frac{\Wb_{+1}^{(T)}}{ \big\| \Wb_{+1}^{(T)} \big\|_F} \bigg\|_F\\
    \leq &\sum_{k=1}^K\Bigg\|\frac{\wb_{1, r_{1, k}}^{(T)}}{\big\| \Wb_{+1}^{(T)} \big\|_F} - 
     \frac{\bmu_{1, k}}{\|\Wb^*\|_F}\Bigg\|_2 + \sum_{r\neq r_{1, k}}\bigg\|\frac{\wb_{1,r}^{(T)}}{\big\|\Wb_{+1}^{(T)}\big\|_F}\bigg\|_2\leq O\bigg(\frac{1}{\log T}\bigg).
\end{align*}
This completes the proof.
\end{proof}

\section{Technical lemmas}
\subsection{Concentration results}\label{section:concentration}
\begin{lemma}\label{lemma:numberofdata}
Suppose that $\delta > 0$, then for any $\cI \subseteq [n]$, with probability at least $1 - \delta$, 
\begin{align*}
    |\{i \in \cI : y_i = 1\}|,~ |\{i \in \cI :y_i = -1\}| = \frac{|\cI |}{2} + O\Big(\sqrt{|\cI |\log (1/\delta)}\Big).
\end{align*}
\end{lemma}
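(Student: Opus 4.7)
The plan is to view this as a direct application of Hoeffding's inequality to a sum of i.i.d.\ Bernoulli random variables. Since the labels $y_i$ are generated as independent Rademacher random variables (by Definition~\ref{def:data}), for any deterministic subset $\cI \subseteq [n]$ the indicators $\{\ind(y_i = 1) : i \in \cI\}$ are i.i.d.\ Bernoulli$(1/2)$ random variables. Therefore the random variable $N_+ := |\{i \in \cI : y_i = 1\}| = \sum_{i \in \cI} \ind(y_i = 1)$ has mean $|\cI|/2$, and Hoeffding's inequality gives
\begin{align*}
\PP\Big(\big|N_+ - |\cI|/2\big| \geq t\Big) \leq 2\exp\!\Big(-\tfrac{2t^2}{|\cI|}\Big).
\end{align*}

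Next I would set the right-hand side equal to $\delta$ and solve for $t$, which yields $t = \sqrt{(|\cI|/2)\log(2/\delta)} = O\big(\sqrt{|\cI|\log(1/\delta)}\big)$. Hence with probability at least $1-\delta$, $N_+ = |\cI|/2 + O\big(\sqrt{|\cI|\log(1/\delta)}\big)$. For the other quantity, I would simply observe the deterministic identity $|\{i \in \cI : y_i = -1\}| = |\cI| - N_+$, so the same concentration bound transfers immediately (possibly after adjusting the constant hidden in $O(\cdot)$, or by a union bound with $\delta/2$ in each).

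There is no real obstacle here; the only subtle point is interpreting the statement as being for a fixed (deterministic) subset $\cI$, so that no union bound over $2^n$ subsets is required. If the intended reading were ``for all $\cI \subseteq [n]$ simultaneously'', then one would need to replace $\log(1/\delta)$ by $\log(2^n/\delta) = O(n + \log(1/\delta))$ via a union bound, but based on how the lemma is phrased and how it is invoked elsewhere (e.g.\ in the proof of Proposition~\ref{prop:data_rank} where $\cI$ is one specific subset indexing basis-vector occurrences), the per-subset statement is what is needed.
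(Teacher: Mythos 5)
Your proof is correct and matches the paper's argument: both apply Hoeffding's inequality to the i.i.d.\ Bernoulli$(1/2)$ indicators $\ind(y_i = 1)$ for $i \in \cI$ and then convert the deviation bound into the stated $O\big(\sqrt{|\cI|\log(1/\delta)}\big)$ form. The only minor difference is that you obtain the bound for $|\{i \in \cI : y_i = -1\}|$ via the deterministic complement identity, whereas the paper repeats the argument and invokes a union bound; your route is slightly cleaner but not substantively different, and your remark about the per-subset (rather than uniform-over-subsets) reading of the lemma is indeed how the paper uses it.
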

\begin{proof}[Proof of Lemma~\ref{lemma:numberofdata}] By Hoeffding's inequality, with probability at least $1 - O(\delta)$, we have
\begin{align*}
    \Bigg| \frac{1}{|\cI |}\sum_{i \in \cI } \ind\{y_i = 1\}  - \frac{1}{2} \Bigg| \leq O\Bigg(\sqrt{\frac{\log(1/\delta)}{|\cI |}}\Bigg).
\end{align*}
Therefore,
\begin{align*}
    |\{i \in \cI : y_i = 1\}| = \sum_{i\in \cI } \ind\{y_i = 1\} = \frac{|\cI |}{2} + O\Big(\sqrt{|\cI |\log (1/\delta)}\Big).
\end{align*}
This proves the result for $|\{i \in \cI : y_i = 1\}|$. The proof for $|\{i \in \cI : y_i = -1\}|$ is exactly the same, and we can conclude the proof by applying a union bound. 
\end{proof}

\begin{lemma}\label{lemma:numberofdataII}
Suppose that $\delta >0$, then for 
$\cJ_{j, k}$ defined in Section~\ref{section:proof_I}, with probability at least $1 - \delta$, it holds that
\begin{align*}
    |\cJ_{j, k}| = \frac{\pi}{K}n + O\Big(\sqrt{n\log (1/\delta)}\Big), 
\end{align*}
where $\pi$ is a constant solely depending on $K$.
\end{lemma}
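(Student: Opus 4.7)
The plan is to write $|\cJ_{j,k}|$ as a sum of i.i.d.\ Bernoulli random variables, compute the success probability explicitly in terms of the data distribution $\cD$, and then invoke Hoeffding's inequality.

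First, observe that under Definition~\ref{def:data} the pairs $(\Xb_i, y_i)$ for $i\in[n]$ are i.i.d., so the indicators $Z_i := \ind\{i \in \cJ_{j,k}\} = \ind\{s_i = 1,\, \bnu_{i,1} = \bmu_{j,k}\}$ are i.i.d.\ Bernoulli with some parameter $p_{j,k}$. To compute $p_{j,k}$, I would condition successively on $y_i$, on the index set $\cS_i$, and on the object-patch assignment. Since $y_i$ is Rademacher, $\PP(y_i = j) = 1/2$. Conditioning on $y_i = j$, the event $s_i = 1$ has some probability $\pi_1 := \PP(|\cS_i| = 1 \mid y_i = j)$, which depends only on the (fixed) distribution of the index set $\cS$, and hence is a constant independent of $n$. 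Finally, conditional on $y_i = j$ and $|\cS_i| = 1$, the unique object patch is drawn uniformly from $\cU_{j}$, which has cardinality $K$, so it equals $\bmu_{j,k}$ with probability $1/K$. Multiplying, $p_{j,k} = \pi_1/(2K) =: \pi/K$, where $\pi := \pi_1/2$ is a constant determined by the distribution of $\cS$ and $K$, independent of $n$.

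Next, since $Z_1,\ldots,Z_n$ are i.i.d.\ bounded in $[0,1]$, Hoeffding's inequality gives
\begin{align*}
\PP\bigg(\Big| \tfrac{1}{n}\textstyle\sum_{i=1}^n Z_i - p_{j,k} \Big| \geq \varepsilon\bigg) \leq 2\exp(-2n\varepsilon^2).
\end{align*}
Setting $\varepsilon = \sqrt{\log(2/\delta)/(2n)}$ yields, with probability at least $1 - \delta$,
\begin{align*}
\big| |\cJ_{j,k}| - (\pi/K)\cdot n \big| \leq O\big(\sqrt{n\log(1/\delta)}\big),
\end{align*}
which rearranges to the stated conclusion.

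No step looks to be an obstacle; the only subtlety is verifying that $\pi$ really is a fixed constant, which follows because Definition~\ref{def:data} fixes the distribution of $\cS$ once and for all (the lemma's statement merely folds the dependence on $P$ and on the distribution of $\cS$ into the constant), and because $K$ is itself a constant by Condition~\ref{condition:d_sigma0_eta}. This matches the way the lemma is used in the proof of Lemma~\ref{lemma:phase1_main}, where only a lower bound of order $n/K$ is needed.
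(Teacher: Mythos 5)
Your proof is correct and follows essentially the same route as the paper: express $|\cJ_{j,k}|$ as a sum of i.i.d.\ Bernoulli indicators, identify the success probability as a constant times $1/K$, and apply Hoeffding's inequality. The one organizational difference is that you apply Hoeffding's directly to all $n$ indicators $Z_i = \ind\{i\in\cJ_{j,k}\}$ in a single step, while the paper first restricts to $\cI_j = \{i : y_i = j\}$ via Lemma~\ref{lemma:numberofdata} and then applies Hoeffding's within that subset; your version is slightly cleaner and also handles the constant more carefully (you fold the factor $\PP(y_i=j) = 1/2$ into $\pi$, so the statement $\tfrac{\pi}{K}n$ comes out exactly, whereas the paper defines $\pi = \PP(s=1)$ and then implicitly absorbs a factor of $2$). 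The only thing worth double-checking is your claim that $\pi_1 = \PP(|\cS_i|=1 \mid y_i=j)$ is independent of $j$; the paper tacitly makes the same assumption, and it holds whenever the conditional law of $\cS$ given $y$ is the same for $y=\pm 1$, which is the natural reading of Definition~\ref{def:data}.
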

\begin{proof}[Proof of Lemma~\ref{lemma:numberofdataII}] 
Let $\cI_{+1} = \{i|y_i = 1, i\in [n]\}$ and $\cI_{-1} = \{i|y_i = -1, i\in [n]\}$.
By Lemma~\ref{lemma:numberofdata}, it holds that $\big|\cI_{+1}\big|, \big|\cI_{-1}\big|\geq \frac{n}{4}$. By the definition of distribution $\cD$ in Definition~\ref{def:data}, there exists a positive probability $\pi = \pi(K)$ such that $\PP(s =1) =\pi$. (We remind the reader that $s$ is the number of object patches in each input.) Then by Hoeffding's inequality, with probability at least $1 - \delta$, we have
\begin{align*}
    \Bigg| \frac{1}{|\cI_{j}|}\sum_{i \in \cI_{j}} \ind\{i\in \cJ_{j, k}\}  - \frac{\pi}{K} \Bigg| \leq O\Bigg(\sqrt{\frac{\log(1/\delta)}{n}}\Bigg).
\end{align*}
Therefore,
\begin{align*}
    |\cJ_{j, k}| = \sum_{i \in \cI_{j}} \ind\{i\in \cJ_{j, k}\} = \frac{\pi}{K}|\cI_{j}| + O\Big(\sqrt{n\log (1/\delta)}\Big) \geq \frac{\pi n}{8K},
\end{align*}
where the last inequality is from Lemma~\ref{lemma:numberofdata}. This finishes the proof.
\end{proof}

\begin{lemma}\label{lemma:numberofdataIII}
Suppose that $\delta >0$, then for 
$K_{1, k}$ and $K_{-1, k}$ defined in Section~\ref{section:proof_prop}, 
\begin{align*}
    K_{1, k}, K_{-1, k}= \frac{\sum_{i=1}^{n}s_i}{2K} + O\Big(\sqrt{n\log (1/\delta)}\Big).
\end{align*}
holds with probability at least $1 - \delta$ for all $k\in[K]$.
\end{lemma}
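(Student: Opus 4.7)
The plan is to apply Hoeffding's inequality twice and to exploit the class symmetry built into the data model of Definition~\ref{def:data}. First I would write $K_{1,k} = \sum_{i=1}^n Z_i$, where $Z_i := |\{p \in \cS_i : \xb_i^{(p)} = \bmu_{1,k}\}|$ counts the object patches in the $i$-th training example that coincide with $\bmu_{1,k}$. Since the data pairs $(\Xb_i, y_i)$ are iid, the $Z_i$'s are iid as well, and $Z_i \in [0,P]$ with $P = O(1)$. A direct application of Hoeffding's inequality therefore yields $|K_{1,k} - n\,\EE[Z_1]| \leq O\big(\sqrt{n\log(1/\delta)}\big)$ with probability at least $1-\delta/4$. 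Analogously, the $s_i$'s are iid and bounded in $[0,P]$, so a second Hoeffding bound gives $\big|\sum_{i=1}^n s_i - n\,\EE[s_1]\big| \leq O\big(\sqrt{n\log(1/\delta)}\big)$ with probability at least $1-\delta/4$.

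The heart of the proof is then the identity $\EE[Z_1] = \EE[s_1]/(2K)$. I would derive this by conditioning on $(y_1, s_1)$: when $y_1 = -1$, $Z_1 = 0$ because $\bmu_{1,k} \notin \cU_{-1}$; when $y_1 = +1$, each of the $s_1$ object patches is drawn uniformly from $\cU_{+1}$, a set of cardinality $K$, so $Z_1 \mid (y_1=+1, s_1) \sim \mathrm{Binomial}(s_1, 1/K)$ and hence $\EE[Z_1 \mid y_1=+1] = \EE[s_1 \mid y_1 = +1]/K$. Combining with $\PP(y_1 = +1) = 1/2$, one gets $\EE[Z_1] = \tfrac{1}{2K}\EE[s_1 \mid y_1 = +1]$. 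By the class symmetry of the data model (the conditional law of $\cS$ given $y$ does not depend on the sign of $y$), $\EE[s_1 \mid y_1 = +1] = \EE[s_1 \mid y_1 = -1] = \EE[s_1]$, yielding the claimed identity.

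Plugging this identity into the two concentration inequalities and applying the triangle inequality gives $|K_{1,k} - \sum_{i=1}^n s_i/(2K)| \leq O(\sqrt{n\log(1/\delta)})$ on a joint event of probability at least $1-\delta/2$. The identical argument handles $K_{-1,k}$, and a union bound over the $2K = O(1)$ indices $(j,k)$ (absorbing a harmless $\log(2K)$ factor into the constant and rescaling $\delta$) delivers the lemma. I do not foresee a substantive obstacle here; the only point demanding care is the implicit symmetry assumption on the law of $\cS$ given $y$, without which the right-hand side of the bound would have to be stated as a class-specific average $\sum_i s_i\,\ind\{y_i=j\}/K$ rather than $\sum_i s_i/(2K)$.
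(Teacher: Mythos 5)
Your proof is correct. It is, however, organized around a different pivot than the paper's argument, so a brief comparison is warranted. The paper first concentrates the class-conditional patch count $\sum_{y_i=j}s_i$ around half the total $\tfrac{1}{2}\sum_i s_i$ (conditioning on the $s_i$'s and using the randomness of the labels), and then concentrates $K_{j,k}$ around the conditional expectation $\tfrac{1}{K}\sum_{y_i=j}s_i$ (conditioning on the labels and patch counts and using the uniform choice of basis vectors). Thus the paper's intermediary is the \emph{random} quantity $\tfrac{1}{K}\sum_{y_i=j}s_i$. You instead go straight to population-level deterministic centers: Hoeffding for $K_{j,k}$ around $n\,\EE[Z_1]$, Hoeffding for $\sum_i s_i$ around $n\,\EE[s_1]$, and the algebraic identity $\EE[Z_1]=\EE[s_1]/(2K)$ supplies the link. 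Both approaches are two applications of Hoeffding plus a symmetry observation, but yours avoids conditioning by pushing everything to expectations; the paper's is a staged conditional argument. A small side observation in your favor: the paper cites Lemma~\ref{lemma:numberofdata} for the step showing $\sum_{y_i=j}s_i\approx\tfrac{1}{2}\sum_i s_i$, but that lemma as stated only controls the unweighted count $|\{i:y_i=j\}|$; what is really needed there is a Hoeffding bound for the weighted sum $\sum_i s_i\ind\{y_i=j\}$, which is precisely what you invoke directly. You also correctly identify the implicit assumption that the conditional law of $|\cS|$ given $y$ is the same for both classes, which the lemma's statement tacitly relies on and which the paper does not spell out.
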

\begin{proof}[Proof of Lemma~\ref{lemma:numberofdataIII}] 
Similar to the definition of number of total object patches, we can also denote $\sum_{y_i=1}s_i$ the number of signal patches from inputs with positive label, and $\sum_{y_i=-1}s_i$ the number of object patches from inputs with negative label. By Lemma~\ref{lemma:numberofdata} and the fact $P\leq O(1)$, with probability at least $1 - \delta/2$, we have
\begin{align*}
    \sum_{y_i=1}s_i, \sum_{y_i=-1}s_i = \frac{\sum_{i=1}^{n}s_i}{2} + O\Big(\sqrt{n\log (1/\delta)}\Big).
\end{align*}
Besides, for all $k \in [K]$, by Hoeffding's inequality, with probability at least $1 - \delta/2$, we have
\begin{align*}
    \Bigg| K_{1, k} - \frac{\sum_{y_i=1}s_i}{K}\Bigg|, \Bigg| K_{-1, k} - \frac{\sum_{y_i=-1}s_i}{K}\Bigg| \leq O\Big(\sqrt{n\log (1/\delta)}\Big).
\end{align*}
Applying a union bound over the results from two inequalities above completes the proof.
\end{proof}

\begin{lemma}\label{lemma:absgaussian}
Suppose that $z\sim
 \cN(0,1)$, then $\PP(|z|\leq t) = O\rbr{t}$.
\end{lemma}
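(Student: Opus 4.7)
The plan is to prove this via a direct bound on the standard normal density. Since $z \sim \cN(0,1)$ has density $\varphi(x) = \frac{1}{\sqrt{2\pi}} e^{-x^2/2}$, the maximum value of $\varphi$ on $\RR$ is $\varphi(0) = \frac{1}{\sqrt{2\pi}}$. Therefore, I would write
\begin{align*}
\PP(|z| \leq t) = \int_{-t}^{t} \varphi(x)\, dx \leq \int_{-t}^{t} \frac{1}{\sqrt{2\pi}}\, dx = \frac{2t}{\sqrt{2\pi}},
\end{align*}
which is exactly $O(t)$ with absolute constant $\sqrt{2/\pi}$.

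There is no real obstacle here: the statement is essentially the Lipschitz continuity of the Gaussian CDF at the origin, and the one-line density bound above suffices. The only subtlety worth mentioning is that the asymptotic notation $O(t)$ conventionally refers to $t \to 0$ (or more precisely, holds uniformly in $t \geq 0$ with a universal constant), and the bound $\frac{2t}{\sqrt{2\pi}}$ indeed holds for all $t \geq 0$, so the statement is valid in either reading. No further lemmas from the paper are needed.
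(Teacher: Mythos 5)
Your proof is correct and is essentially the paper's own argument: both bound the standard normal density by its maximum $\varphi(0)=1/\sqrt{2\pi}$ and integrate over $[-t,t]$ to obtain $\PP(|z|\le t)\le 2t/\sqrt{2\pi}=O(t)$. No discrepancy to note.
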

\begin{proof}[Proof of Lemma~\ref{lemma:absgaussian}] 
We use $\phi(x)$ to denote the density function of the standard Gaussian random variable, and then we know that $\max \phi(x) = \phi(0)$. By this fact,
\begin{align*}
    \PP(|z|\leq t) = 2\int_0^t \phi(x)dx \leq 2\phi(0)t = O\rbr{t},
\end{align*}
which completes the proof.
\end{proof}

\begin{lemma}\label{lemma:data_innerproducts}
Suppose that $\delta > 0$ and $d = \Omega\big(\log(nm / \delta)\big)$. Then with probability at least $1 - O(\delta)$, it holds that
\begin{align*}
    &\| \bxi_{i, k} \|_2^2 = \Theta(\sigma_{\mathrm{noise}}^2 d);\\
    &\| \wb_{j, r}^{(0)} \|_2^2 = \Theta(\sigma_{0}^2 d);\\
    & \big|\la \bxi_{i, k}, \bxi_{i', k'} \ra\big| \leq O\big(\sigma_{\mathrm{noise}}^2 \cdot \sqrt{d \log(n^2 / \delta)}\big)
\end{align*}
for all $j\in \{+1, -1\}$, $r\in [m]$, and all $i, i' \in [n]$, $k \in [P-s_i], k'\in [P- s_{i'}]$ such that $\{i, k\} \neq \{i', k'\}$.
\end{lemma}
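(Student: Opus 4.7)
The plan is to prove each of the three bounds via standard Gaussian concentration inequalities, and then combine them through a single union bound over the polynomially many indices, using the assumption $d = \Omega(\log(nm/\delta))$ to absorb the resulting logarithmic factors.

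For $\|\bxi_{i,k}\|_2^2$, observe that by Definition~\ref{def:data} each noise patch is drawn from $N(\mathbf{0}, \sigma_{\mathrm{noise}}^2 \Pb_{\cN})$, where $\Pb_{\cN} = \Ib - \sum_{\bmu \in \cU_{+1}\cup\cU_{-1}} \bmu\bmu^\top$ is the orthogonal projector onto a subspace of dimension $d - 2K$. Hence $\sigma_{\mathrm{noise}}^{-2}\|\bxi_{i,k}\|_2^2$ is distributed as $\chi^2_{d-2K}$, and the Laurent--Massart tail bound gives
\begin{align*}
\PP\Big(\big|\|\bxi_{i,k}\|_2^2 - \sigma_{\mathrm{noise}}^2(d-2K)\big| \geq \sigma_{\mathrm{noise}}^2 t\Big) \leq 2\exp\Big(-c\min\big(t^2/d,\, t\big)\Big)
\end{align*}
for an absolute constant $c > 0$. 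Taking $t = \Theta\big(\sqrt{d\log(nP/\delta)}\big)$ and using $K = O(1)$ together with $d = \Omega(\log(nP/\delta))$ yields $\|\bxi_{i,k}\|_2^2 = \Theta(\sigma_{\mathrm{noise}}^2 d)$ for this single pair $(i,k)$. An entirely analogous argument applied to $\sigma_0^{-2}\|\wb_{j,r}^{(0)}\|_2^2 \sim \chi^2_d$ gives $\|\wb_{j,r}^{(0)}\|_2^2 = \Theta(\sigma_0^2 d)$ for each fixed $(j,r)$.

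For the cross inner product, I would condition on $\bxi_{i,k}$. Because $\bxi_{i',k'}$ is independent of $\bxi_{i,k}$ whenever $\{i,k\} \neq \{i',k'\}$ and is Gaussian with covariance $\sigma_{\mathrm{noise}}^2\Pb_{\cN}$, the scalar $\la \bxi_{i,k}, \bxi_{i',k'}\ra$ is, conditionally on $\bxi_{i,k}$, centered Gaussian with variance $\sigma_{\mathrm{noise}}^2\|\Pb_{\cN}\bxi_{i,k}\|_2^2 \leq \sigma_{\mathrm{noise}}^2\|\bxi_{i,k}\|_2^2$. On the high-probability event $\|\bxi_{i,k}\|_2^2 = O(\sigma_{\mathrm{noise}}^2 d)$ established above, the standard Gaussian tail bound gives
\begin{align*}
\PP\Big(\big|\la \bxi_{i,k}, \bxi_{i',k'}\ra\big| \geq \sigma_{\mathrm{noise}}^2\sqrt{C\, d\log(n^2/\delta)}\,\Big|\, \bxi_{i,k}\Big) \leq \delta/n^2
\end{align*}
for a suitably large absolute constant $C$, and marginalizing over $\bxi_{i,k}$ yields the third claim for that fixed pair.

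Finally, a union bound over the at most $(nP)^2$ ordered pairs of noise patches and the $2m$ initial weight vectors, together with the assumption $d = \Omega(\log(nm/\delta))$, produces the simultaneous conclusion with probability at least $1 - O(\delta)$. The argument is routine concentration; the only mild subtlety to remember is that the noise covariance has rank $d - 2K = d - O(1)$ rather than $d$, so the effective degrees of freedom remain $\Theta(d)$ and the two-sided $\chi^2$ concentration really delivers the $\Theta(\cdot)$ (not merely $O(\cdot)$) bound stated in the lemma. There is no substantive obstacle here; the lemma is a preparatory tool that is invoked throughout the proofs above.
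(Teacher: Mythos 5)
Your proposal is correct and follows essentially the same approach as the paper: standard Gaussian concentration for each of the three quantities followed by a union bound over the polynomially many indices. The paper invokes Bernstein's inequality uniformly for all three bounds (treating the inner product $\langle \bxi_{i,k},\bxi_{i',k'}\rangle$ directly as a sum of sub-exponential terms), whereas you use the Laurent--Massart $\chi^2$ tail for the norms and a conditioning-then-Gaussian-tail argument for the cross term; these are interchangeable devices yielding the same rates, and your explicit remark that the noise covariance $\Pb_{\cN}$ has rank $d-2K=d-O(1)$ is a small but welcome clarification that the paper's proof glosses over.
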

\begin{proof}[Proof of Lemma~\ref{lemma:data_innerproducts}] By Bernstein's inequality, with probability at least $1 - O(\delta /n)$ we have
\begin{align*}
    \big| \| \bxi_{i,k} \|_2^2 - \sigma_{\mathrm{noise}}^2 d \big| = O\big(\sigma_{\mathrm{noise}}^2 \cdot \sqrt{d \log(n / \delta)}\big).
\end{align*}
Therefore, as long as $d = \Omega\big(\log(n / \delta)\big)$, we have
\begin{align*}
    \| \bxi_{i,k} \|_2^2 = \Theta(\sigma_{\mathrm{noise}}^2 d).
\end{align*}
Similarly, by Bernstein's inequality, with probability at least $1 - O(\delta /m)$ we have
\begin{align*}
    \big| \|\wb_{j, r}^{(0)} \|_2^2 - \sigma_{0}^2 d \big| = O\big(\sigma_{\mathrm{noise}}^2 \cdot \sqrt{d \log(m / \delta)}\big).
\end{align*}
Therefore, as long as $d = \Omega\big(\log(m / \delta)\big)$, we have
\begin{align*}
    \| \wb_{j, r}^{(0)} \|_2^2 = \Theta(\sigma_{0}^2 d).
\end{align*}
Moreover,  for any $i, i', k, k'$ with $\{i, k\} \neq \{i', k'\}$, clearly $\la \bxi_{i, k}, \bxi_{i', k'} \ra$ has mean zero and by Bernstein's inequality, with probability at least $1 - O(\delta / n^2)$ we have
\begin{align*}
    | \la \bxi_{i,k}, \bxi_{i', k'} \ra| \leq O\big(\sigma_{\mathrm{noise}}^2 \cdot \sqrt{d \log(n^2/\delta)}\big).
\end{align*}
Applying a union bound completes the proof.
\end{proof}

\begin{lemma}\label{lemma:initialization_norms}
 Suppose that $d \geq \Omega\big(\log(mn/\delta)\big)$, $ m = \Omega\big(\log(1 / \delta)\big)$. Then with probability at least $1 - O(\delta)$, it holds that
 \begin{align*}
    &\big|\la \wb_{j,r}^{(0)}, \bmu_{j, k} \ra \big| = O\rbr{\sqrt{\log(m/\delta)} \cdot \sigma_0 },\\
    &\big| \la \wb_{j,r}^{(0)}, \bxi_{i, k'} \ra \big| = O\rbr{\sqrt{ \log(mn/\delta)}\cdot \sigma_0 \sigma_{\mathrm{noise}} \sqrt{d}}
\end{align*}
for all $r\in [m]$,  $j\in \{\pm 1\}$, $i\in [n]$, $k\in [K]$ and $k' \in [P-s_i]$. Besides,
\begin{align*}
    &\max_{r\in[m]}  \la \wb_{j,r}^{(0)}, \bmu_{j, k} \ra = \Omega(\sigma_0),\\
    &\max_{r\in[m]} \la \wb_{j,r}^{(0)}, \bxi_{i, k'} \ra = \Omega\rbr{
    \sigma_0 \sigma_{\mathrm{noise}} \sqrt{d}}
\end{align*}
for all $j\in \{\pm 1\}$, $i\in [n]$, $k\in [K]$ and $k' \in [P-s_i]$. Moreover,
\begin{align*}
    \la \wb_{j,r}^{(0)}, \bmu_{j,k} \ra \rbr{1+\Theta\rbr{\frac{\delta}{m^2}}} \leq \max_{r\in [m]} \la\wb_{j,r}^{(0)}, \bmu_{j, k} \ra
\end{align*}
for all $r\neq \argmax_{r\in [m]} \la\wb_{j,r}^{(0)}, \bmu_{j, k} \ra$, $j\in \{\pm 1\}$ and $k \in [K]$.
\end{lemma}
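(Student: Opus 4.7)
\textbf{Proof plan for Lemma~\ref{lemma:initialization_norms}.} The four claims are essentially standard Gaussian concentration (for the upper bounds), extreme-value arguments (for the lower bounds on the maximum), and Gaussian anti-concentration (for the separation). Because every $\bmu_{j,k}$ has unit norm and is independent of $\wb_{j,r}^{(0)}$, and because $\wb_{j,r}^{(0)}$ is independent of $\bxi_{i,k'}$, everything can be handled by conditioning on the noise vectors and appealing to one-dimensional Gaussian facts, followed by union bounds over the indices $r,j,i,k,k'$.

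The plan is to first dispose of the two upper bounds. For $\langle \wb_{j,r}^{(0)},\bmu_{j,k}\rangle$, note that this is a scalar Gaussian with mean $0$ and variance $\sigma_0^2$, so the standard tail bound gives $|\langle \wb_{j,r}^{(0)},\bmu_{j,k}\rangle|\le C\sigma_0\sqrt{\log(m/\delta)}$ with probability $1-O(\delta/m)$ for each $(r,j,k)$; a union bound over the $O(m)$ indices (the constant-size index sets $j,k$ are absorbed) yields the claim. For $\langle \wb_{j,r}^{(0)},\bxi_{i,k'}\rangle$, I condition on the Gaussian noise $\bxi_{i,k'}$ and use Lemma~\ref{lemma:data_innerproducts} to ensure $\|\bxi_{i,k'}\|_2 = \Theta(\sigma_{\mathrm{noise}}\sqrt{d})$ on an event of probability $1-O(\delta)$; on that event the conditional distribution of the inner product is $N(0,\sigma_0^2\|\bxi_{i,k'}\|_2^2)$, and a Gaussian tail bound plus union bound over $O(mn)$ indices yields the stated $O(\sigma_0\sigma_{\mathrm{noise}}\sqrt{d}\cdot\sqrt{\log(mn/\delta)})$ bound.

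Next I would handle the two lower bounds on the maxima. For each fixed $(j,k)$, the $m$ random variables $\{\langle \wb_{j,r}^{(0)},\bmu_{j,k}\rangle\}_{r=1}^m$ are i.i.d.\ $N(0,\sigma_0^2)$, and each one exceeds $\sigma_0$ with constant probability $p_0>0$. Therefore the probability that none exceeds $\sigma_0$ is at most $(1-p_0)^m\le e^{-p_0 m}\le \delta/(2K)$ as long as $m=\Omega(\log(K/\delta))$, which is implied by the hypothesis. A union bound over the $2K=O(1)$ choices of $(j,k)$ finishes this part. For $\langle \wb_{j,r}^{(0)},\bxi_{i,k'}\rangle$, I again condition on the good event from Lemma~\ref{lemma:data_innerproducts} giving $\|\bxi_{i,k'}\|_2=\Theta(\sigma_{\mathrm{noise}}\sqrt{d})$, apply the same extreme-value argument in the conditional Gaussian model, and union-bound over $O(nP)=O(n)$ choices of $(i,k')$.

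The last piece is the separation between the maximizer and the other inner products, and this is the step I expect to require the most care. The idea is Gaussian anti-concentration: for i.i.d.\ $Z_r\sim N(0,\sigma_0^2)$ and any $r\ne r'$, the difference $Z_r-Z_{r'}\sim N(0,2\sigma_0^2)$, so by Lemma~\ref{lemma:absgaussian} we have $\Pr(|Z_r-Z_{r'}|<\epsilon)=O(\epsilon/\sigma_0)$. A union bound over the $\binom{m}{2}=O(m^2)$ pairs gives
\begin{align*}
\Pr\bigl(Z_{(1)}-Z_{(2)}<\epsilon\bigr)\le O\bigl(m^2\epsilon/\sigma_0\bigr),
\end{align*}
and taking $\epsilon=c\delta\sigma_0/m^2$ for a small absolute constant $c$ renders this probability at most $\delta/(4K)$. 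Combined with the already-proved upper bound $Z_{(2)}\le Z_{(1)}=O(\sigma_0\sqrt{\log(m/\delta)})$ and the lower bound $Z_{(1)}=\Omega(\sigma_0)$, this translates into the multiplicative gap $Z_{(1)}\ge Z_{(2)}\cdot(1+\Omega(\delta/m^2))$ (absorbing the logarithmic factor into the $\Theta(\delta/m^2)$ notation). A final union bound over the $O(1)$ choices of $(j,k)$, and over all non-maximizing $r$, gives the stated conclusion. The main obstacle is reconciling the relative gap $\Theta(\delta/m^2)$ with the fact that the second-order statistic can be as large as $\sigma_0\sqrt{\log m}$; this is why I choose the additive gap to scale with $\sigma_0$ rather than with $Z_{(2)}$, so that the logarithmic factor is absorbed into the $\Theta(\cdot)$ notation rather than shrinking the multiplicative gap.
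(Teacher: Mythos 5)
Your treatment of the first three claims coincides with the paper's proof: one-dimensional Gaussian tail bounds plus a union bound for the two upper bounds (conditioning on $\|\bxi_{i,k'}\|_2=\Theta(\sigma_{\mathrm{noise}}\sqrt d)$ via Lemma~\ref{lemma:data_innerproducts} for the noise inner products), and the extreme-value argument $\PP(\max_r\langle\wb_{j,r}^{(0)},\bmu_{j,k}\rangle<\sigma_0/2)=\PP(\langle\wb_{j,r}^{(0)},\bmu_{j,k}\rangle<\sigma_0/2)^m\le O(\delta)$ for the two lower bounds. No issues there.

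For the separation claim your route diverges from the paper, and the divergence introduces a real gap. You prove an \emph{additive} anti-concentration bound on the difference of order statistics, $\PP(Z_{(1)}-Z_{(2)}<\epsilon)\le O(m^2\epsilon/\sigma_0)$, and choose $\epsilon=\Theta(\delta\sigma_0/m^2)$. But the lemma asserts a \emph{multiplicative} gap, and converting your additive gap to a multiplicative one requires dividing by $Z_{(2)}$, which can be as large as $\Theta(\sigma_0\sqrt{\log(m/\delta)})$ with high probability. This yields only $Z_{(1)}\ge Z_{(2)}\bigl(1+\Omega(\delta/(m^2\sqrt{\log(m/\delta)}))\bigr)$, i.e.\ you lose a $\sqrt{\log(m/\delta)}$ factor relative to the claimed $\Theta(\delta/m^2)$. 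Your remark that this logarithmic factor ``is absorbed into the $\Theta(\cdot)$ notation'' is not valid --- $\Theta$ does not absorb logs, and the quantity $\delta/m^2$ has no slack to hide them. The paper avoids this by applying anti-concentration directly to the \emph{ratio}: for independent $Z_r,Z_{r'}\sim N(0,\sigma_0^2)$, conditioning on $Z_{r'}$ and applying Lemma~\ref{lemma:absgaussian} to $|Z_r|$ gives
\begin{align*}
\PP\Bigl(\tfrac{|Z_r|\vee|Z_{r'}|}{|Z_r|\wedge|Z_{r'}|}\le 1+t\Bigr)\le \PP\bigl(\bigl||Z_r|-|Z_{r'}|\bigr|\le 2t\,|Z_{r'}|\bigr)=O(t),
\end{align*}
so a union bound over the $O(m^2)$ pairs with $t=\Theta(\delta/m^2)$ produces the multiplicative gap with no log penalty. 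To match the lemma as stated, you should replace your difference bound with this ratio bound (or accept a $\tilde\Theta(\delta/m^2)$ conclusion, which would then require checking downstream uses in Lemma~\ref{lemma:phase1_main} tolerate the extra log factor).
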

\begin{proof}[Proof of Lemma~\ref{lemma:initialization_norms}] 
It is clear that for each $r\in [m]$, $\la \wb_{j,r}^{(0)}, \bmu_{j, k} \ra$ is a Gaussian random variable with mean zero and variance $\sigma_0^2$. Therefore, by Gaussian tail bound and union bound, with probability at least $1 - O(\delta)$, 
\begin{align*}
    \big|\la \wb_{j,r}^{(0)}, \bmu_{j,k} \ra\big| \leq O\big(\sqrt{\log(m/\delta)} \cdot \sigma_0 \big).
\end{align*}
Moreover, $\PP\big( \sigma_0 \| \bmu \|_2 / 2 > \la \wb_{j,r}^{(0)}, \bmu_{j, k} \ra \big)$ is an absolute constant, and therefore by the condition on $m$, we have
\begin{align*}
    \PP\big( \sigma_0  / 2 \leq \max_{r\in[m]} \la \wb_{j,r}^{(0)}, \bmu_{j, k} \ra \big) &= 1 - \PP( \sigma_0  / 2 > \max_{r\in[m]} \la \wb_{j,r}^{(0)}, \bmu_{j, k} \ra  \big) \\
    &= 1 - \PP\big( \sigma_0  / 2 > \la \wb_{j,r}^{(0)}, \bmu_{j, k} \ra \big)^{2m} \\
    &\geq 1 - O(\delta).
\end{align*}
By Lemma~\ref{lemma:data_innerproducts}, with probability at least $1 - O(\delta)$, $\| \bxi_{i, k'} \|_2 = \Theta\big(\sigma_{\mathrm{noise}} \sqrt{d}\big)$ for all $i\in [n]$ and $k' \in [P-s_i]$. Therefore, the result for $\la \wb_{j,r}^{(0)}, \bxi_{i, k'}\ra$ follows the same proof as $\la \wb_{j,r}^{(0)}, \bmu_{j, k} \ra$.\\ 
Lastly, for different $r, r'$ and $\forall t > 0$,
\begin{align*}
\PP\rbr{\frac{\big|\la \wb_{j,r}^{(0)}, \bmu_{j, k} \ra\big| \vee \big|\la \wb_{j,r'}^{(0)}, \bmu_{j, k} \ra\big|}{\big|\la \wb_{j,r}^{(0)}, \bmu_{j, k} \ra\big| \wedge \big|\la \wb_{j,r'}^{(0)}, \bmu_{j, k} \ra\big|} \leq 1 + t} &\leq \PP\rbr{1-t\leq \frac{\big|\la \wb_{j,r}^{(0)}, \bmu_{j, k} \ra\big| }{ \big|\la \wb_{j,r'}^{(0)}, \bmu_{j, k}\ra\big|} \leq 1 + t} \\
&\leq \PP\rbr{\big|\la \wb_{j,r}^{(0)}, \bmu_{j, k} \ra\big|\leq 2t \big|\la \wb_{j,r'}^{(0)}, \bmu_{j, k}\ra\big|} = O(t),\end{align*}
where the last equality holds from Lemma~\ref{lemma:absgaussian} and the fact that $\la \wb_{j,r}^{(0)}, \bmu_{j, k} \ra$ and $\la \wb_{j,r'}^{(0)}, \bmu_{j, k}\ra$ are independent Gaussian random variables with mean $0$ and same variance. By this result, let $t = \Theta(\frac{\delta}{m^2})$ and use union bound, we could deduce that with probability at least $1 - O(\delta)$,
\begin{align*}
    &\la \wb_{j,r}^{(0)}, \bmu_{j, k} \ra \rbr{1+\Theta\rbr{\frac{\delta}{m^2}}} \leq \max_{r\in [m]} \la\wb_{j,r}^{(0)}, \bmu_{j, k} \ra
\end{align*}
for all $r\neq \argmax_{r\in [m]} \la\wb_{j,r}^{(0)}, \bmu_{j, k} \ra $.
\end{proof}

\subsection{Tensor power methods}
The following lemmas are inspired by~\citet{allen2022towards, jelassi2022vision}
\begin{lemma}\label{lemma:sequence}
If a positive sequence $\{x_t\}_{t=0}^{\infty}$ satisfies the updating rules $x_{t+1} = x_{t} + \eta \cdot C_t \cdot x_{t}^{q-1}$, then $\forall k\in \NN, \zeta \in (0,1)$, we have
\begin{align}\label{eq:seq_lemma1}
    \sum_{t>0, x_t \leq (1+\zeta)^k x_0}\eta C_t \leq \frac{\zeta}{x_0^{q-2}}\frac{1-\frac{1}{(1+\zeta)^{(q-2)k}}}{1-\frac{1}{(1+\zeta)^{(q-2)}}} + \eta \cdot\sbr{(1+\zeta)^{q-1}\sum_{g=0}^{k-1} C_{\cT_{g+1}-1} + C_{\cT_{k}}},
\end{align}
and 
\begin{align}\label{eq:seq_lemma2}
    \sum_{t>0, x_t \leq (1+\zeta)^k x_0}\eta C_t \geq \frac{\zeta}{x_0^{q-2}(1+\zeta)^{q-1}}\frac{1-\frac{1}{(1+\zeta)^{(q-2)k}}}{1-\frac{1}{(1+\zeta)^{(q-2)}}} - \frac{\eta}{(1+\zeta)^{q-1}}\sum_{g=1}^{k-1}C_{\cT_{g}-1},
\end{align}
where $\cT_{g}$ be the first iteration such that $x_{t} \geq (1+\zeta)^g x_0$
\end{lemma}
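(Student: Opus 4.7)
The plan is to partition the range $[\cT_0,\cT_k]$ into $k$ epochs, where epoch $g$ is the block $t\in[\cT_g,\cT_{g+1}-1]$ on which, by the definition of the crossing times, $x_t$ lies in the window $[(1+\zeta)^g x_0,\,(1+\zeta)^{g+1} x_0)$. Inside each epoch, summing the recursion gives the telescoping identity
\[
\sum_{t=\cT_g}^{\cT_{g+1}-1}\eta C_t \, x_t^{q-1} \;=\; x_{\cT_{g+1}} - x_{\cT_g},
\]
which I will turn into a bound on $\sum \eta C_t$ by dividing both sides by an appropriate constant bound on $x_t^{q-1}$, and then sum the geometric series over $g=0,\dots,k-1$.

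For the upper bound \eqref{eq:seq_lemma1}, I use the lower envelope $x_t\geq(1+\zeta)^g x_0$ to get $\sum_{t\in[\cT_g,\cT_{g+1}-1]}\eta C_t \leq (x_{\cT_{g+1}}-x_{\cT_g})/[(1+\zeta)^g x_0]^{q-1}$. The numerator is controlled by $x_{\cT_g}\geq(1+\zeta)^g x_0$ together with the one-step overshoot estimate $x_{\cT_{g+1}} \leq (1+\zeta)^{g+1} x_0 + \eta C_{\cT_{g+1}-1}\bigl[(1+\zeta)^{g+1} x_0\bigr]^{q-1}$, which follows from $x_{\cT_{g+1}-1}<(1+\zeta)^{g+1}x_0$ and the update rule. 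This yields a per-epoch bound of the form $\zeta\,x_0^{-(q-2)}(1+\zeta)^{-g(q-2)} + \eta(1+\zeta)^{q-1} C_{\cT_{g+1}-1}$, and summing the geometric factor $\sum_{g=0}^{k-1}(1+\zeta)^{-g(q-2)}=\bigl(1-(1+\zeta)^{-k(q-2)}\bigr)/\bigl(1-(1+\zeta)^{-(q-2)}\bigr)$ reproduces the first term of \eqref{eq:seq_lemma1}. The trailing $\eta C_{\cT_k}$ accounts for the possible inclusion of the endpoint $t=\cT_k$ in the summation set.

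The lower bound \eqref{eq:seq_lemma2} is symmetric. Inside epoch $g$, I instead use the upper envelope $x_t\leq(1+\zeta)^{g+1}x_0$ (valid for $t<\cT_{g+1}$) to obtain $\sum\eta C_t\geq (x_{\cT_{g+1}}-x_{\cT_g})/[(1+\zeta)^{g+1}x_0]^{q-1}$, then apply $x_{\cT_{g+1}}\geq(1+\zeta)^{g+1}x_0$ and the reverse overshoot estimate $x_{\cT_g}\leq(1+\zeta)^g x_0 + \eta C_{\cT_g-1}\bigl[(1+\zeta)^g x_0\bigr]^{q-1}$. The per-epoch lower bound $\zeta x_0^{-(q-2)}(1+\zeta)^{-(q-1)}(1+\zeta)^{-g(q-2)} - \eta(1+\zeta)^{-(q-1)} C_{\cT_g-1}$ sums via the same geometric series, and since $\cT_0=0$ kills the $g=0$ correction, the error contribution starts from $g=1$, matching the stated $\sum_{g=1}^{k-1}$.

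The main subtlety, and the only obstacle to a mechanical proof, is the bookkeeping of overshoot at the epoch boundaries: $x_{\cT_g}$ need not equal $(1+\zeta)^g x_0$ because the discrete update can jump past the threshold in a single step. Keeping track of which boundary $C_{\cT_g-1}$ or $C_{\cT_{g+1}-1}$ the correction lands on, and verifying the asymmetry (upper bound sees $C_{\cT_{g+1}-1}(1+\zeta)^{q-1}$ whereas the lower bound sees $C_{\cT_g-1}/(1+\zeta)^{q-1}$), is the step most susceptible to sign errors. Once this is handled, collecting the geometric sums gives exactly the closed-form factor $\bigl(1-(1+\zeta)^{-(q-2)k}\bigr)/\bigl(1-(1+\zeta)^{-(q-2)}\bigr)$ appearing in both bounds.
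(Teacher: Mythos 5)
Your proposal is essentially identical to the paper's own proof: you use the same partition into epochs by the crossing times $\cT_g$, the same telescoping identity inside each epoch, the same lower/upper envelope bounds $(1+\zeta)^g x_0 \leq x_t < (1+\zeta)^{g+1}x_0$, and the same one-step overshoot estimate to control $x_{\cT_{g+1}}$ (resp. $x_{\cT_g}$), followed by summing the geometric series. Your remarks about the asymmetry in which boundary the correction $C_{\cT_g-1}$ versus $C_{\cT_{g+1}-1}$ lands on, and about $\cT_0=0$ explaining why the lower bound's error sum starts at $g=1$, are accurate and match the paper's equations (seq\_ine1) and (seq\_ine2).
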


\begin{proof}[Proof of Lemma~\ref{lemma:sequence}]
By the definition of $\cT_{g}$, we have
\begin{align}\label{eq:seq_ine1}
    x_{\cT_{g+1}} - x_{\cT_{g}} = \sum_{t \in [\cT_{g},\cT_{g+1})} \eta C_t x_t^{q-1} \geq \sum_{t \in [\cT_{g},\cT_{g+1})} \eta \cdot C_t \cdot \big[x_0(1+\zeta)^g\big]^{q-1},
\end{align}
and
\begin{align}\label{eq:seq_ine2}
    x_{\cT_{g+1}} - x_{\cT_{g}} &= x_{\cT_{g+1}-1} - x_{\cT_{g}} + \eta\cdot C_{\cT_{g+1}-1}\cdot x_{\cT_{g+1}-1}^{q-1} \notag\\
    &\leq \zeta(1+\zeta)^g x_0 + \eta \cdot C_{\cT_{g+1}-1} \cdot\big[x_0(1+\zeta)^{g+1}\big]^{q-1}.
\end{align}
By combining \eqref{eq:seq_ine1} and \eqref{eq:seq_ine2} in order and rearranging some items, we could deduce,
\begin{align*}
    \sum_{t \in [\cT_{g},\cT_{g+1})} \eta C_t \leq
    \frac{\zeta}{x_0^{q-2}[(1+\zeta)^{q-2}]^g} + \eta(1+\zeta)^{q-1} C_{\cT_{g+1}-1}.
\end{align*}
Take a telescoping sum of this result, and then we finish the proof of \eqref{eq:seq_lemma1}. For \eqref{eq:seq_lemma2}, considering the opposite direction of the inequalities \eqref{eq:seq_ine1} and \eqref{eq:seq_ine2}, repeating the previous process will get the result.
\end{proof}

\begin{lemma}\label{lemma:sequence_compare}
Suppose there are two positive sequence $\{x_t\}_{t=0}^{\infty}$ and $\{y_t\}_{t=0}^{\infty}$ satisfying the following updating rules:
\begin{align*}
    & x_{t+1} = x_{t} + \eta \cdot C_t \cdot {x_t}^{q-1};\\
    & y_{t+1} = y_{t} + \eta \cdot C_t \cdot {y_t}^{q-1},
\end{align*}
with $q \geq 3$ and $\frac{x_0}{y_0} \geq 1 + c$, where $c$ is a small positive number. For any two positive number $A_x$ and $A_y$, let $T_x$, $T_y$ are the first time s.t. $x_{T_x} \geq A_x$ and $y_{T_y} \geq A_y$ respectively. If we have $0 < C_t < \bar C$ and $\eta$ and $y_0$ are both sufficiently small such that $\eta = \tilde O\rbr{\frac{c}{\bar C y_0^{q-3} A_y}}$ and $\frac{y_0}{A_y} \leq O(c)$, then it holds that $T_x \leq T_y$.
\end{lemma}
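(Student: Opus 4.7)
The plan is to linearize the recursion via the substitution $X_t := x_t^{-(q-2)}$, $Y_t := y_t^{-(q-2)}$, inspired by the continuous-time flow $\dot z = C z^{q-1}$ for which $z^{-(q-2)}$ decays exactly linearly in $\int C\,ds$. Writing $u_t := \eta C_t x_t^{q-2}$ and Taylor-expanding $(1+u_t)^{-(q-2)} = 1 - (q-2)u_t + O(u_t^2)$ yields
\[
X_{t+1} = X_t - (q-2)\eta C_t + R_t^x,\qquad |R_t^x|\lesssim u_t^2 X_t = \eta^2 C_t^2 x_t^{q-2},
\]
with the analogous identity for $Y_t$. By definition of $T_x$, the conclusion $T_x \leq T_y$ is equivalent to showing $x_{T_y} \geq A_x$, i.e.\ $X_{T_y} \leq A_x^{-(q-2)}$.

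\textbf{Telescoping and the leading comparison.} Summing both identities from $t=0$ to $T_y-1$ and subtracting kills the common linear driver $(q-2)\eta\sum_{t<T_y} C_t$, producing
\[
X_{T_y} = Y_{T_y} + (X_0 - Y_0) + \sum_{t<T_y}(R_t^x - R_t^y).
\]
Since $Y_{T_y}\leq A_y^{-(q-2)}(1+O(c))$ from the stopping rule, it suffices to check
\[
A_y^{-(q-2)} - A_x^{-(q-2)} + \big|\textstyle\sum_{t<T_y}(R_t^x - R_t^y)\big| \;\leq\; y_0^{-(q-2)} - x_0^{-(q-2)}.
\]
The right-hand side equals $\bigl(1-(1+c)^{-(q-2)}\bigr)y_0^{-(q-2)} \geq \tfrac{(q-2)c}{2}\, y_0^{-(q-2)}$ for small $c$, while the non-error part of the left-hand side is at most $(y_0/A_y)^{q-2}y_0^{-(q-2)} = O(c^{q-2})\, y_0^{-(q-2)}$ by the hypothesis $y_0/A_y \leq O(c)$. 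Since $q\geq 3$ gives $c^{q-2}\leq c$, this leaves a clean $\Theta(c)\, y_0^{-(q-2)}$ margin to absorb the error.

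\textbf{Main obstacle: controlling the discretization error and closing a bootstrap.} The technical heart is bounding $E_y := \sum_{t<T_y} R_t^y$ (and the analogous $E_x$). Using $|R_t^y| \lesssim u_t^2 Y_t = \eta^2 C_t^2 y_t^{q-2}$ and the identity $\sum_{t<T_y}u_t \leq (1+U_{\max})\log(A_y/y_0)$ where $U_{\max} := \max_{t<T_y} u_t$ (derived from $\log(y_{T_y}/y_0)=\sum\log(1+u_t)$), one gets $|E_y| \lesssim \eta \bar C \log(A_y/y_0)$. The stepsize hypothesis $\eta = \tilde O\bigl(c/(\bar C\, y_0^{q-3} A_y)\bigr)$ is calibrated precisely so that $|E_y|\cdot y_0^{q-2} = \tilde O(c)$, the $\tilde O$ soaking the $\log(A_y/y_0)$ factor, and is thus subsumed into the $\Theta(c)$ slack of the previous paragraph. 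The remaining subtlety is a bootstrap to justify the assumption $u_t \leq 1/2$ used in the Taylor bound: inductively, if $u_s \leq 1/2$ for all $s \leq t$, then $y_{t+1}\leq A_y(1+O(c))$ up to the stopping time, and the stepsize hypothesis continues to enforce $u_{t+1} \leq 1/2$; this bootstrap closes for $q \geq 3$ because the same condition that makes the error negligible also keeps $u_t$ small. Once the bootstrap is verified, the displayed inequality becomes strict, giving $X_{T_y} < A_x^{-(q-2)}$, hence $x_{T_y} > A_x$ and $T_x \leq T_y$.
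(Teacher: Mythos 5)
Your approach is genuinely different from the paper's. The paper proves this via Lemma~\ref{lemma:sequence}, a dyadic decomposition of the trajectory into geometric levels $[\cT_g,\cT_{g+1})$ with a per-level error of $O(\eta\bar C)$; it then compares upper/lower bounds on $\sum_t \eta C_t$ for the two sequences. Your route via the variable $X_t = x_t^{-(q-2)}$ is slicker at the algebraic level (the common driver $(q-2)\eta\sum C_t$ cancels exactly, and the leading-order comparison reduces to the clean inequality you wrote), and in fact your error budget $O(\eta\bar C\log(A_y/y_0))$ avoids the $1/\zeta=\Theta(1/c)$ overhead that the paper's per-level decomposition incurs.

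However, the error control as written has a gap in the bootstrap. Your Taylor bound $|R_t^y|\lesssim u_t^2 Y_t$ is only valid for $u_t\lesssim 1$; for $u_t\gtrsim 1$ the true remainder is of \emph{first} order, $|R_t^y|=Y_t\big|(1+u_t)^{-(q-2)}-1+(q-2)u_t\big|\approx (q-2)u_t Y_t=(q-2)\eta C_t$, not second order. Your bootstrap asserts that the stepsize hypothesis keeps $u_t\le 1/2$, but it does not: near $T_y$ one has $u_t=\eta C_t y_t^{q-2}\le \eta\bar C A_y^{q-2}=\tilde O\big(c\,(A_y/y_0)^{q-3}\big)$, and since the hypothesis $y_0/A_y\le O(c)$ forces $A_y/y_0\gtrsim 1/c$, this bound is $\tilde O(c^{4-q})$ — small only when $q=3$, borderline for $q=4$, and unbounded for $q\ge 5$. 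So the condition that "makes the error negligible" does \emph{not} simultaneously enforce $u_t\le 1/2$; the bootstrap does not close, and the inequality you rely on collapses for large $q$. (The paper's dyadic argument is immune to this because it never Taylor-expands: each level contributes a single $O(\eta\bar C)$ overshoot, regardless of per-step size.) The gap is repairable within your framework: split $t<T_y$ into the $u_t\le 1$ and $u_t>1$ regimes, bound the latter's remainder by $O(\eta\bar C)$ per step, and note there are only $O(\log(A_y/y_0))$ steps with $u_t>1$ since each multiplies $y_t$ by a constant factor; the total error then remains $\tilde O(\eta\bar C)$ without any per-step smallness assumption. A similar care is needed for $|E_x|$: the sum runs to $T_y$, so if $T_x<T_y$ the quantity $x_t$ is uncontrolled for $t\in(T_x,T_y)$; one should argue by contradiction (assume $T_y<T_x$, so $x_t<A_x$ on $[0,T_y)$) to keep $R_t^x$ in check. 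As submitted, the proof does not yet handle $q>3$.
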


\begin{proof}[Proof of Lemma~\ref{lemma:sequence_compare}]
    For a positive $\zeta > 0$, let $k_1, k_2$ be the smallest integer s.t. $x_0(1+\zeta)^{k_1} \geq A_x$ and $y_0(1+\zeta)^{k_2} \geq A_y$. From these definitions, we have
    \begin{align*}
        \frac{\log(\frac{A_x}{x_0})}{\log(1+\zeta)}\leq k_1 < \frac{\log(\frac{A_x}{x_0})}{\log(1+\zeta)}+1,
    \end{align*} and
    \begin{align*}
        \frac{\log(\frac{A_y}{y_0})}{\log(1+\zeta)}\leq k_2 < \frac{\log(\frac{A_y}{y_0})}{\log(1+\zeta)}+1.
    \end{align*}
    By Lemma~\ref{lemma:sequence}, we further derive that
    \begin{align}\label{eq:seq_xt}
    \sum_{t=0}^{T_x}\eta C_t &\leq \frac{\zeta}{x_0^{q-2}}\frac{1-\frac{1}{(1+\zeta)^{(q-2)k_1}}}{1-\frac{1}{(1+\zeta)^{(q-2)}}} + \eta \cdot \sbr{(1+\zeta)^{q-1}\sum_{g=0}^{k_1-1} C_{\cT_{g+1}-1} + C_{\cT_{k_1}}}\notag\\
    &\leq \frac{\zeta}{x_0^{q-2}}\frac{1}{1-\frac{1}{(1+\zeta)^{(q-2)}}} + \eta\cdot (1+\zeta)^{q-1}(k_1+1) \bar C,
\end{align}
and
\begin{align}\label{eq:seq_zt}
    \sum_{t=0}^{T_y} \eta C_t &\geq \frac{\zeta}{y_0^{q-2}(1+\zeta)^{q-1}}\frac{1-\frac{1}{(1+\zeta)^{(q-2)k_2}}}{1-\frac{1}{(1+\zeta)^{(q-2)}}} - \frac{\eta}{(1+\zeta)^{q-1}}\sum_{g=1}^{k_2-1}C_{\cT_{g}-1}\notag\\
    &\geq \frac{\zeta}{y_0^{q-2}(1+\zeta)^{q-1}}\frac{1-\rbr{\frac{y_0}{A_y}}^{q-2}}{1-\frac{1}{(1+\zeta)^{(q-2)}}} - \frac{\eta}{(1+\zeta)^{q-1}}(k_2-1)\bar C.
\end{align}
We use \eqref{eq:seq_zt} minus \eqref{eq:seq_xt} and get 
\begin{align*}
    \sum_{t=0}^{T_y} \eta C_t - \sum_{t=0}^{T_x} \eta C_t 
    &\geq \underbrace{\frac{\zeta}{1-\frac{1}{(1+\zeta)^{(q-2)}}}\cbr{\frac{1-\rbr{\frac{y_0}{A_y}}^{q-2}}{y_0^{q-2}(1+\zeta)^{q-1}} - \frac{1}{x_0^{q-2}}}}_{I_{1}} \notag\\
    &\qquad -\underbrace{\eta \bar C \cbr{\frac{k_2-1}{(1+\zeta)^{q-1}} +  (1+\zeta)^{q-1}(k_1+1)}}_{I_{2}}.
\end{align*}
We consider the value of $I_1$ and $I_2$ separately and carefully choose $\zeta$ such that
\begin{align*}
    (1+\zeta)^{q-1} = \rbr{1-\frac{y_0}{A_y}}^{2} \rbr{1+c}^{q-2} = 1+\Theta\rbr{c}.
\end{align*}
The last equality is from our assumption $\frac{y_0}{A_y} =  O\rbr{c}$, and we could also conclude $\zeta = \Theta\rbr{c}$. Then for $I_1$, we have,
\begin{align}\label{eq:seq_I1}
    I_1 &\geq \frac{\zeta}{y_0^{q-2}(1-\frac{1}{(1+\zeta)^{(q-2)}})\rbr{1+c}^{q-2}}\cbr{\frac{1- \frac{y_0}{A_y}}{\rbr{1-\frac{y_0}{A_y}}^{2}} - 1}= \Omega\rbr{\frac{1}{y_0^{q-3}A_y}}.
\end{align}
Because $ \frac{1}{1-\frac{1}{(1+\zeta)^{(q-2)}}}= \Theta\rbr{\zeta}$. For $I_2$, we have,
\begin{align}\label{eq:seq_I2}
    I_2 \leq \eta \bar C \Theta\rbr{k_1\vee k_2}= \eta \bar C\tilde\Theta\rbr{\frac{1}{c}}.
\end{align}
Now by combining \eqref{eq:seq_I1} and \eqref{eq:seq_I2}, we could conclude that $\sum_{t=0}^{T_y} \eta C_t - \sum_{t=0}^{T_x} \eta C_t \geq 0$, which completes the proof.
\end{proof}

\begin{lemma}\label{lemma:sequence_time}
Suppose a positive sequence $\{x_t\}_{t=0}^\infty$ satisfies the following iterative rules:
\begin{align*}
    &x_{t+1} \geq x_{t} + \eta \cdot C_1 \cdot{x_t}^{q-1};\\
    &x_{t+1} \leq x_{t} + \eta \cdot C_2 \cdot {x_t}^{q-1},\\
\end{align*}
with $C_2 \geq C_1 >0$. For any $v > x_0$, let $T_v$ be the first time such that $x_t \geq v$, then for any constant $\zeta > 0$, we have
\begin{align}\label{eq:seq_Tu}
    T_v \leq \frac{1+\zeta}{\eta C_1 x_0^{q-2}} + \frac{(1+\zeta)^{q-1}C_2 \log (\frac{v}{x_0}) }{C_1},
\end{align} and 
\begin{align}\label{eq:seq_Tl}
    T_v \geq \frac{1}{(1+\zeta)^{q-1}\eta C_2 x_0^{q-2}} - \frac{ \log (\frac{v}{x_0})}{(1+\zeta)^{q-2}}.
\end{align}

\end{lemma}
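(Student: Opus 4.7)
\textbf{Proof proposal for Lemma~\ref{lemma:sequence_time}.}

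The plan is to mirror the geometric partitioning argument used in Lemma~\ref{lemma:sequence}, but bookkeeping the number of iterations between geometric milestones rather than the cumulative quantity $\sum \eta C_t$. First I would fix a resolution parameter $\zeta>0$, let $k$ be the smallest integer with $x_0(1+\zeta)^k \geq v$ (so that $k \leq \log(v/x_0)/\log(1+\zeta)+1$), and define milestone times $\cT_g := \min\{t : x_t \geq x_0 (1+\zeta)^g\}$, noting $\cT_0=0$ and $T_v \leq \cT_k$. The point is to estimate the gap $\cT_{g+1}-\cT_g$ and then telescope.

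For the upper bound~\eqref{eq:seq_Tu}, the lower iterative rule $x_{t+1}\geq x_t+\eta C_1 x_t^{q-1}$, applied on the interval $[\cT_g,\cT_{g+1})$ where $x_t \in [x_0(1+\zeta)^g,\, x_0(1+\zeta)^{g+1})$, gives
\[
x_{\cT_{g+1}}-x_{\cT_g} \;\geq\; (\cT_{g+1}-\cT_g)\,\eta C_1 [x_0(1+\zeta)^g]^{q-1}.
\]
Meanwhile, the upper rule $x_{t+1}\leq x_t+\eta C_2 x_t^{q-1}$ together with $x_{\cT_{g+1}-1}<x_0(1+\zeta)^{g+1}$ yields
\[
x_{\cT_{g+1}}-x_{\cT_g} \;\leq\; \zeta x_0 (1+\zeta)^g + \eta C_2 [x_0(1+\zeta)^{g+1}]^{q-1}.
\]
Dividing gives
\[
\cT_{g+1}-\cT_g \;\leq\; \frac{\zeta}{\eta C_1 x_0^{q-2}(1+\zeta)^{g(q-2)}} + \frac{(1+\zeta)^{q-1}C_2}{C_1}.
\]
Summing from $g=0$ to $k-1$, the first piece becomes a geometric series bounded by $\tfrac{1+\zeta}{\eta C_1 x_0^{q-2}}$ (after using $\zeta/(1-(1+\zeta)^{-(q-2)})\leq 1+\zeta$ up to the constants absorbed into $\zeta$), while the second piece contributes $k(1+\zeta)^{q-1}C_2/C_1$ which, by $k\leq \log(v/x_0)/\log(1+\zeta)+O(1)$, gives the stated second term.

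For the lower bound~\eqref{eq:seq_Tl}, I would swap the roles of the two iterative rules: bound the per-step increment from above by $\eta C_2[x_0(1+\zeta)^{g+1}]^{q-1}$ and the milestone gap $x_{\cT_{g+1}}-x_{\cT_g}$ from below by $(\zeta x_0(1+\zeta)^g)$ minus the overshoot at $\cT_g$, which is at most $\eta C_2[x_0(1+\zeta)^g]^{q-1}$. This gives a lower bound on $\cT_{g+1}-\cT_g$ of order $\zeta/[(1+\zeta)^{q-1}\eta C_2 x_0^{q-2}(1+\zeta)^{g(q-2)}]$ with an additive correction. Telescoping the geometric sum and using $T_v\geq \cT_{k-1}$ (since at time $\cT_{k-1}$ the sequence is still below $v$) produces the leading $\tfrac{1}{(1+\zeta)^{q-1}\eta C_2 x_0^{q-2}}$ term, and the sum of overshoots, counted $k-1$ times with weight $(1+\zeta)^{-(q-2)}$, yields the $-\log(v/x_0)/(1+\zeta)^{q-2}$ correction.

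The main obstacle will be carefully controlling the overshoot at each milestone: since the iteration is discrete, $x_{\cT_g}$ can exceed $x_0(1+\zeta)^g$ by up to one full step's worth of increment, and it is precisely this overshoot that produces the second (additive/subtractive) term in each of the two bounds. I expect the computation to go through cleanly provided I consistently pick the same $\zeta$ on both sides and bound $k$ in terms of $\log(v/x_0)$. No new probabilistic content is needed; this is a purely deterministic recursion argument parallel to Lemma~\ref{lemma:sequence}.
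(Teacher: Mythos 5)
Your proposal is correct and follows essentially the same geometric-milestone argument as the paper: define $\cT_g$ as the first time $x_t$ crosses $x_0(1+\zeta)^g$, bound $\cT_{g+1}-\cT_g$ from above and below by combining the two iterative rules (one to control the increment size, the other to control the milestone gap), and telescope over $g$ to convert the geometric sum into the leading term and the number of milestones into the $\log(v/x_0)$ term. The only small inaccuracy is the remark that ``at time $\cT_{k-1}$ the sequence is still below $v$'' — the correct justification for $T_v \geq \cT_{k-1}$ is that $x_t < x_0(1+\zeta)^{k-1} < v$ for all $t < \cT_{k-1}$ — but this does not affect the argument.
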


\begin{proof}[Proof of Lemma~\ref{lemma:sequence_time}]
Similar to the definition in Lemma~\ref{lemma:sequence}, let $\cT_{g}$ be the first iteration such that $x_{t} \geq (1+\zeta)^g x_0$. Moreover, let $g^*$ be the smallest integer such that $(1+\zeta)^{g^*} x_0 \geq v$, resulting 
\begin{align*}
    \frac{\log(\frac{v}{x_0})}{\log(1+\zeta)} \leq g^* < \frac{\log(\frac{v}{x_0})}{\log(1+\zeta)} + 1.
\end{align*}
For $t = \cT_1$, 
\begin{align*}
    x_{\cT_1} \geq x_0 + \sum_{t=0}^{\cT_1 - 1}\eta C_1 x_t^{q-1}\geq x_0 + \cT_1\eta C_1 x_0^{q-1},
\end{align*}
and we could obtain that
\begin{align}\label{eq:cT1_upper1}
    \cT_1 \leq \frac{x_{\cT_1}-x_0}{\eta C_1 x_0^{q-1}}.
\end{align}
Consider the upper-bound iteration of $x_{\cT_1}$ and the fact that $x_{\cT_1-1}\leq x_0(1+\zeta)$, we could get
\begin{align}\label{eq:xcT1_upper}
    x_{\cT_1} \leq x_{\cT_1-1} + \eta C_2 x_{\cT_1-1}^{q-1} \leq x_0(1+\zeta) + \eta C_2 x_0^{q-1}(1+\zeta)^{q-1}.
\end{align}
Combining the results from  \eqref{eq:cT1_upper1} and \eqref{eq:xcT1_upper}, we obtain that,
\begin{align*}
    \cT_1 \leq \frac{\zeta}{\eta C_1 x_0^{q-2}} + \frac{(1+\zeta)^{q-1}C_2}{C_1}.
\end{align*}
Similarly for $g >1$, 
\begin{align}\label{eq:xcTg_lower}
    x_{\cT_g} \geq x_{\cT_{g-1}}  + \sum_{t=\cT_{g-1}}^{\cT_g - 1}\eta C_1 x_t^{q-1} \geq  x_{\cT_{g-1}} + \eta C_1(\cT_g -\cT_{g-1}) x_0^{q-1}(1+\zeta)^{(g-1)(q-1)},
\end{align}
and we could bound the difference $x_{\cT_g}- x_{\cT_{g-1}}$ by the following formula,
\begin{align}\label{eq:xcTg_upper}
    x_{\cT_g}- x_{\cT_{g-1}} \leq x_{\cT_g-1} + \eta C_2 x_{\cT_g}^{q-1} - x_{\cT_{g-1}}\leq \zeta (1+\zeta)^{g-1}x_0 + \eta C_2 x_0^{q-1} (1+\zeta)^{g(q-1)}.
\end{align}
Combining the results from  \eqref{eq:xcTg_lower} and \eqref{eq:xcTg_upper}, we obtain that,
\begin{align}\label{eq:cTg_upper}
    \cT_g \leq \cT_{g-1} + \frac{\zeta}{\eta C_1 x_0^{q-2}(1+\zeta)^{(g-1)(q-2)}} + \frac{(1+\zeta)^{q-1}C_2}{C_1}.
\end{align}
Taking a telescoping sum of the results of \eqref{eq:cTg_upper} from $g=1$ to $g=g^*$ and by the fact that $T_v \leq \cT_{g^*}$, we finally get \eqref{eq:seq_Tu}. Now consider another side, similarly for $t = \cT_1$, we have
\begin{align*}
    x_{\cT_1} \leq x_0 + \sum_{t=0}^{\cT_1 - 1}\eta C_2 x_t^{q-1}\leq x_0 + \cT_1\eta C_2 x_0^{q-1}(1+\zeta)^{q-1}.
\end{align*}
Substitute that $x_{\cT_1}-x_0 \geq \zeta x_0$, we get
\begin{align}\label{eq:xcT1_lower}
    \cT_1 \geq \frac{\zeta}{\eta C_2 x_0^{q-2} (1+\zeta)^{q-1}}.
\end{align}
For $g > 1$, similarly we could derive,
\begin{align}\label{eq:xcTg_upper2}
    x_{\cT_g} \leq x_{\cT_{g-1}}  + \sum_{t=\cT_{g-1}}^{\cT_g - 1}\eta C_2 x_t^{q-1} \leq  x_{\cT_{g-1}} + \eta C_2(\cT_g -\cT_{g-1}) x_0^{q-1}(1+\zeta)^{g(q-1)}
\end{align}
and we could also lower bound the difference $x_{\cT_g}- x_{\cT_{g-1}}$ by
\begin{align}\label{eq:xcTg_lower2}
    x_{\cT_g} - x_{\cT_{g-1}} \geq x_{\cT_g} -  x_{\cT_{g-1} -1} - \eta C_2 x_{\cT_{g-1} -1}^{q-1} \geq \zeta (1+\zeta)^{g-1}x_0 - \eta C_2 x_0^{q-1} (1+\zeta)^{(g-1)(q-1)}.
\end{align}
Combining the results from  \eqref{eq:xcTg_upper2} and \eqref{eq:xcTg_lower2}, we obtain that,
\begin{align}\label{eq:cTg_lower}
    \cT_g \geq \cT_{g-1} + \frac{\zeta}{\eta C_2 x_0^{q-2}(1+\zeta)^{g(q-2) +1}} -  \frac{1}{(1+\zeta)^{q-1}}.
\end{align}
Taking a telescoping sum of the results of \eqref{eq:cTg_lower} from $g=1$ to $g=g^*-1$ and by the fact that $T_v \geq \cT_{g^*-1}$, we finally get \eqref{eq:seq_Tl}. 
\end{proof}

\subsection{Sequence increase bounds}
The following lemma characterizes the increase of positive sequence with exponential factor. Similar results are also applied in \citet{cao2023implicit, mengbenign2024}.
\begin{lemma}\label{lemma:exp_sequence}
Suppose that a positive sequence $x_t$, $t\geq 0$ follows the iterative formula
\begin{align*}
    x_{t+1} = x_t + c_1 e^{- c_2 x_t}
\end{align*}
for some $c_1, c_2>0$. 
Then it holds that
\begin{align*}
\frac{1}{c_2}\log(c_1 c_2 t + e^{c_2 x_0}) \leq x_t \leq c_1 e^{- c_2 x_0} + \frac{1}{c_2}\log(c_1 c_2 t + e^{c_2 x_0})
\end{align*}
for all $t\geq 0$.
\end{lemma}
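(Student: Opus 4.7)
The plan is to tame the recursion by substituting $y_t := e^{c_2 x_t}$, which transforms $x_{t+1} = x_t + c_1 e^{-c_2 x_t}$ into the more tractable $y_{t+1} = y_t \exp(c_1 c_2 / y_t)$. The continuous analogue $\dot y = c_1 c_2$ has the explicit solution $y(t) = e^{c_2 x_0} + c_1 c_2 t$, which is exactly the expression $c_1 c_2 t + e^{c_2 x_0}$ appearing inside the logarithms on both sides of the claimed sandwich. Thus both inequalities reduce to controlling the discretization error in opposite directions.

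For the lower bound I would apply $e^u \geq 1 + u$ to the recursion for $y_t$, obtaining $y_{t+1} \geq y_t(1 + c_1 c_2/y_t) = y_t + c_1 c_2$. Iterating over $t$ steps gives $y_t \geq e^{c_2 x_0} + c_1 c_2 t$, and taking $\tfrac{1}{c_2}\log(\cdot)$ of both sides yields the claimed lower bound on $x_t$.

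For the upper bound I would first observe that $x_t$ is nondecreasing (its increment $c_1 e^{-c_2 x_t}$ is positive), so $y_t \geq y_0 = e^{c_2 x_0}$ throughout. Then, applying the inequality $e^u - 1 \leq u e^u$ for $u \geq 0$,
\begin{align*}
y_{t+1} - y_t \;=\; y_t\bigl(e^{c_1 c_2 / y_t} - 1\bigr) \;\leq\; c_1 c_2 \cdot e^{c_1 c_2 / y_t} \;\leq\; c_1 c_2\, e^{c_1 c_2 / y_0},
\end{align*}
where the last step uses the monotonicity of $y_t$. Telescoping over $t$ steps and using the trivial bound $y_0 \leq y_0 e^{c_1 c_2/y_0}$ then gives $y_t \leq e^{c_1 c_2/y_0}(y_0 + c_1 c_2 t)$. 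Taking logarithms and dividing by $c_2$ converts the prefactor $e^{c_1 c_2 / y_0}$ into the additive term $c_1/y_0 = c_1 e^{-c_2 x_0}$, producing exactly the stated upper bound on $x_t$.

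No step in this argument is genuinely delicate; the lemma is really a clean calculation once one spots the linearizing substitution $y_t = e^{c_2 x_t}$. The only point that requires a bit of care is the choice of one-sided exponential inequalities ($e^u \geq 1+u$ on one side, $e^u - 1 \leq u e^u$ on the other): these are precisely matched so that the two discretization losses collapse into exactly the additive $c_1 e^{-c_2 x_0}$ gap between the two bounds. A sharper or looser inequality on either side would either invalidate the sandwich or introduce an undesired constant.
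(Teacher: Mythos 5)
Your proof is correct, and it takes a genuinely different route from the paper. The paper obtains the lower bound by comparing $x_t$ with the continuous-time solution of $\dot{\underline x}=c_1 e^{-c_2\underline x}$ (which has the explicit form $\tfrac{1}{c_2}\log(c_1 c_2 t + e^{c_2 x_0})$), invoking a discrete-versus-continuous comparison theorem; it then derives the upper bound \emph{from} the lower bound, by writing $x_t = x_0 + c_1\sum_{\tau<t} e^{-c_2 x_\tau}$, plugging in the lower bound to control each summand by $\tfrac{1}{c_1 c_2 \tau + e^{c_2 x_0}}$, and bounding the tail of that sum with an integral. Your substitution $y_t=e^{c_2 x_t}$ avoids both devices: the lower bound $y_{t+1}\geq y_t + c_1 c_2$ follows from $e^u\geq 1+u$ and telescoping, and the upper bound follows independently from $e^u-1\leq ue^u$ plus monotonicity of $y_t$ and the trivial inflation $y_0\leq y_0 e^{c_1 c_2/y_0}$. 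The two arguments land on the same sandwich, but yours is more elementary and symmetric---neither inequality depends on the other, and no appeal to an ODE comparison principle or an integral estimate is needed---whereas the paper's version makes the continuous-time origin of the logarithmic growth rate more visible at the expense of chaining the upper bound on the lower.
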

\begin{proof}[Proof of Lemma~\ref{lemma:exp_sequence}]
We first show the lower bound of $x_t$. Consider a continuous-time sequence $\underline x_t $, $t\geq 0$ defined by the integral equation with the same initialization.
\begin{align}\label{eq:integral_equation}
    \underline x_t = \underline x_0 + c_1 \cdot \int_{0}^{t} e^{-c_2\underline x_{\tau}}\mathrm{d} \tau,\quad \underline x_0 = x_0.   
\end{align}
Note that $ \underline x_t$ is obviously an increasing function of $t$. Therefore we have
\begin{align*}
    \underline  x_{t+1} &= \underline x_{t} + c_1\cdot \int_{t}^{t+1} e^{-c_2\underline x_{\tau}}\mathrm{d} \tau\\
    &\leq \underline x_{t} + c_1\cdot \int_{t}^{t+1} e^{-c_2\underline x_{t}}\mathrm{d} \tau\\
    &= \underline x_{t} + c_1 \exp(-c_2\underline x_{t} )
\end{align*}
for all $t\in \NN$. 
Comparing the above inequality with the iterative formula of $\{x_t\}$, we conclude by the comparison theorem that $x_t \geq \underline x_t$ for all $t\in \NN$. Note that \eqref{eq:integral_equation} has an exact solution
\begin{align*}
     \underline x_t = \frac{1}{c_2}\log(c_1 c_2 t + e^{c_2 x_0})
\end{align*}
Therefore we have
\begin{align*}
     x_t \geq \frac{1}{c_2}\log(c_1 c_2 t + e^{c_2 x_0})
\end{align*}
for all $t\in \NN$, which completes the first part of the proof. Now for the upper bound of $x_t$, we have
\begin{align*}
    x_t &= x_0 + c_1 \cdot \sum_{\tau=0}^{t-1} e^{-c_2 x_\tau} \\
    &\leq x_0 + c_1 \cdot \sum_{\tau=0}^{t} e^{- \log(c_1 c_2 \tau + e^{c_2 x_0}) }\\
    &= x_0 +c_1\cdot \sum_{\tau=0}^{t} \frac{1}{ c_1 c_2 \tau + e^{c_2 x_0}}\\
    &= x_0 + \frac{c_1}{e^{c_2 x_0}} +  c_1\cdot \sum_{\tau=1}^{t} \frac{1}{c_1 c_2 \tau + e^{c_2 x_0}}\\
    &\leq x_0 + \frac{c_1}{e^{c_2 x_0}} +  c_1\cdot \int_{0}^t \frac{1}{c_1 c_2 \tau + e^{c_2 x_0}}\mathrm{d}\tau, 
\end{align*}
where the second inequality follows by the lower bound of $x_t$ as the first part of the result of this lemma. Therefore we have
\begin{align*}
    x_t &\leq x_0 + \frac{c_1}{e^{c_2 x_0}} + \frac{1}{c_2}\log(c_1 c_2 t + e^{c_2 x_0}) - \frac{1}{c_2}\log(e^{c_2 x_0})\\
    &= c_1 e^{- c_2 x_0} + \frac{1}{c_2}\log(c_1 c_2 t + e^{c_2 x_0})
\end{align*}
This finishes the proof.
\end{proof}

\subsection{Singular value lemmas}

\begin{lemma}[Corollary 5.35 in \citet{vershynin2010introduction}]\label{lemma:singular_value_Gaussian}
Let $\Ab$ be an $d_1 \times d_2$ matrix
whose entries are independent standard normal random variables. Then for every $\delta \geq 0$,
with probability at least $1 - 2 \exp(-\delta^2/2)$, one has
\begin{align*}
    \sqrt{d_1} - \sqrt{d_2} - \delta \leq \lambda_{\min}(\Ab) \leq \lambda_{\max}(\Ab) \leq \sqrt{d_1} + \sqrt{d_2} + \delta,
\end{align*}
where $\lambda_{\min}(\Ab)$ indicates the smallest singular value of $\Ab$ and $ \lambda_{\max}(\Ab)$ indicates the largest singular value of $\Ab$.
\end{lemma}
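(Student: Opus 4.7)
The plan is to combine Gordon's Gaussian comparison inequality---to control the expected extreme singular values---with Gaussian concentration for $1$-Lipschitz functions---to control deviations around those expectations. Without loss of generality take $d_1 \geq d_2$; otherwise the lower bound on $\lambda_{\min}$ is vacuous.

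First I would establish the expectation bounds $\EE[\lambda_{\max}(\Ab)] \leq \sqrt{d_1}+\sqrt{d_2}$ and $\EE[\lambda_{\min}(\Ab)] \geq \sqrt{d_1}-\sqrt{d_2}$. Using the variational formulas $\lambda_{\max}(\Ab) = \max_{u,v} u^\top \Ab v$ and $\lambda_{\min}(\Ab) = \min_{v}\max_{u} u^\top \Ab v$ over the unit spheres $S^{d_1-1}$ and $S^{d_2-1}$, introduce the auxiliary Gaussian process $Y_{u,v} = \langle \bm{g}, u\rangle + \langle \bm{h}, v\rangle$, with $\bm{g}\in \RR^{d_1}$ and $\bm{h}\in \RR^{d_2}$ independent standard Gaussians, and compare with $X_{u,v} = u^\top \Ab v$. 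A direct computation gives $\EE(Y_{u,v}-Y_{u',v'})^2 - \EE(X_{u,v}-X_{u',v'})^2 = 2(1-\langle u,u'\rangle)(1-\langle v,v'\rangle) \geq 0$, with equality when $u=u'$; this verifies the hypotheses of Gordon's inequality (and of Slepian's lemma). Hence $\EE \min_v \max_u X_{u,v} \geq \EE \min_v \max_u Y_{u,v}$ and $\EE \max_{u,v} X_{u,v} \leq \EE \max_{u,v} Y_{u,v}$. Evaluating the right-hand sides using $\EE\|\bm{g}\|_2 \leq \sqrt{d_1}$ and $\EE\|\bm{h}\|_2 \leq \sqrt{d_2}$ (Jensen applied to $\EE\|\bm{g}\|_2^2 = d_1$) gives the claimed expectation bounds.

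Next I would observe that the maps $\Ab \mapsto \lambda_{\max}(\Ab)$ and $\Ab \mapsto \lambda_{\min}(\Ab)$ are both $1$-Lipschitz with respect to the operator norm by Weyl's perturbation inequality, and hence also with respect to the Frobenius norm since $\|\cdot\|_2 \leq \|\cdot\|_F$. Identifying $\Ab$ with a standard Gaussian vector in $\RR^{d_1 d_2}$ under the Frobenius inner product, the Borell--Tsirelson--Ibragimov--Sudakov Gaussian concentration inequality yields $\PP(|f(\Ab)-\EE f(\Ab)| \geq \delta) \leq 2\exp(-\delta^2/2)$ for any $1$-Lipschitz $f$. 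Applying this to $f=\lambda_{\max}$ and $f=\lambda_{\min}$ separately and combining with the expectation bounds from the previous step delivers the stated two-sided singular value bound with overall failure probability at most $2\exp(-\delta^2/2)$.

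The main obstacle is Gordon's inequality itself: its $\min\max$ form is not elementary and relies on a careful Gaussian interpolation argument---substantially more delicate than Slepian's ordinary comparison lemma. Everything else in the plan is routine: the Lipschitz check is immediate from Weyl, Gaussian concentration is a textbook fact, and the expectation evaluations need only Jensen. As an alternative that avoids Gordon entirely, one could cover $S^{d_1-1}\times S^{d_2-1}$ with $\varepsilon$-nets of cardinality at most $(3/\varepsilon)^{d_1+d_2}$, apply a sub-Gaussian tail bound for each fixed $(u,v)$, and union-bound; this yields the same order of decay $\exp(-\Omega(\delta^2))$ but with slightly worse constants in $\delta$.
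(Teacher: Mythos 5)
Your proposal is correct and reproduces essentially verbatim the proof of Corollary 5.35 in \citet{vershynin2010introduction}, which the paper cites directly rather than re-deriving: Gordon's comparison inequality for the expectations, then Borell--TIS Gaussian concentration via the $1$-Lipschitz property of singular values in Frobenius norm. One small bookkeeping point: to land on failure probability exactly $2\exp(-\delta^2/2)$ you should invoke the one-sided concentration bounds $\PP(\lambda_{\max} - \EE\lambda_{\max} \geq \delta) \leq \exp(-\delta^2/2)$ and $\PP(\EE\lambda_{\min} - \lambda_{\min} \geq \delta) \leq \exp(-\delta^2/2)$ and union them, rather than the two-sided $2\exp(-\delta^2/2)$ bound you quote for each map, which after the union would give $4\exp(-\delta^2/2)$.
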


\begin{lemma}\label{lemma:singular_value_concatenation}
Let $\Ab=[\ab_1, \cdots, \ab_{P_1}] \in \RR^{d\times P_1}$ and $\Bb=[\bb_1, \cdots, \bb_{P_2}] \in \RR^{d\times P_2}$ with $d\gg P_1, P_2$. Additionally, the columns in $\Ab$ are perpendicular to columns in $\Bb$, i.e.  $\la\ab_{p_1},\bb_{p_2}\ra = 0$ holds for all $p_1 \in [P_1]$ and $p_2 \in [P_2]$. Consider concatenating $\Ab, \Bb$ into one matrix $\Cb$, i.e. $\Cb = [\Ab, \Bb]$, then it holds that 
\begin{align*}
    \{\lambda_{1}(\Cb), \lambda_{2}(\Cb), \cdots, \lambda_{P_1+P_2}(\Cb)\} = \{\lambda_{1}(\Ab), \lambda_{2}(\Ab), \cdots, \lambda_{P_1}(\Ab)\} \cup \{\lambda_{1}(\Bb), \lambda_{2}(\Bb), \cdots, \lambda_{P_2}(\Bb)\},
\end{align*}
where $\lambda_i(\cdot)$ denote the $i$-th singular value of a matrix in descending order. 
\end{lemma}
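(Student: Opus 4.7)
The plan is to reduce the singular value question to an eigenvalue question for the Gram matrix $\Cb^\top\Cb$, since the singular values of any matrix $\Mb$ are precisely the square roots of the nonzero eigenvalues of $\Mb^\top\Mb$. This reformulation trades the awkward task of analyzing singular vectors of the concatenated matrix $\Cb\in\RR^{d\times (P_1+P_2)}$ for the much easier task of reading off eigenvalues from a block structure.

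First, I would write the Gram matrix in block form:
\begin{align*}
\Cb^\top\Cb \;=\; \begin{pmatrix} \Ab^\top\Ab & \Ab^\top\Bb \\ \Bb^\top\Ab & \Bb^\top\Bb \end{pmatrix}.
\end{align*}
The $(p_1,p_2)$ entry of the off-diagonal block $\Ab^\top\Bb$ is exactly $\la\ab_{p_1},\bb_{p_2}\ra$, which by the orthogonality hypothesis equals zero for every $p_1\in[P_1]$ and $p_2\in[P_2]$. Hence $\Ab^\top\Bb=\mathbf{0}$, and $\Cb^\top\Cb$ is block-diagonal with diagonal blocks $\Ab^\top\Ab$ and $\Bb^\top\Bb$.

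Next I would invoke the elementary fact that the spectrum of a block-diagonal symmetric matrix is, as a multiset, the union of the spectra of its diagonal blocks (this follows by diagonalizing each block separately and stacking the orthonormal eigenbases). Therefore the eigenvalues of $\Cb^\top\Cb$ are the concatenation of those of $\Ab^\top\Ab$ and $\Bb^\top\Bb$, and taking non-negative square roots yields the desired identification of singular values. Sorting both sides in descending order gives the equality as stated.

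There is no real obstacle here beyond bookkeeping: the one subtlety is that the equality in the lemma must be read as equality of \emph{multisets} rather than sets (so that repeated singular values are counted with multiplicity); otherwise a coincidence $\lambda_i(\Ab)=\lambda_j(\Bb)$ would spuriously collapse two singular values of $\Cb$ into one. I would state this convention explicitly at the start of the proof and then the argument is immediate from the block-diagonal structure.
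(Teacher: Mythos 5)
Your proof is correct and follows essentially the same route as the paper: pass to the Gram matrix $\Cb^\top\Cb$, use the orthogonality hypothesis to show it is block-diagonal with blocks $\Ab^\top\Ab$ and $\Bb^\top\Bb$, and conclude the spectrum is the union. The paper spells out the stacked eigenbasis construction explicitly, but the argument is the same; your remark about multiset versus set equality is a fair observation that the paper leaves implicit.
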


\begin{proof}[Proof of Lemma~\ref{lemma:singular_value_concatenation}]
    By the connection between the definition of singular value and eigenvalue, we can make spectral decomposition for both $\Ab^\top\Ab$ and $\Bb^\top\Bb$ as,
    \begin{align*}
        \Ab^\top\Ab =\sum_{i=1}^{P_1} \lambda_i^2(\Ab) \vb_{A,i}\vb_{A,i}^\top; \quad \Bb^\top\Bb =\sum_{i=1}^{P_2} \lambda_i^2(\Bb) \vb_{B,i}\vb_{B,i}^\top,
    \end{align*}
    where $\vb_{A,i}$ is the $i$-th eigenvector corresponding $\lambda_i^2(\Ab)$, and $\vb_{B,i}$ is the $i$-th eigenvector corresponding $\lambda_i^2(\Bb)$. By the orthogonality between $\Ab$ and $\Bb$, we can derive that 
    \begin{align*}
        \Cb^\top \Cb = \begin{bmatrix}
\Ab^\top \Ab & \mathbf{0}_{P_1\times P_2} \\
\mathbf{0}_{P_2\times P_1} & \Bb^\top\Bb 
\end{bmatrix}.
    \end{align*}
    We further generate $\tilde{\vb_{A,i}} \in \RR^{P_1+P_2}$ for all $i\in [P_1]$ by concatenating zero vector to the end of $\vb_{A,i}$, i.e.
    $\tilde{\vb_{A,i}} = [\vb_{A,i}^\top, \mathbf{0}_{P_2}^\top]^\top$. Similarly, we also generate $\tilde{\vb_{B,i}} \in \RR^{P_1+P_2}$ for all $i\in [P_2]$ by concatenating zero vector at the start of $\vb_{B,i}$, i.e.
    $\tilde{\vb_{B,i}} = [\mathbf{0}_{P_1}^\top, \vb_{B,i}^\top]^\top$. Then we can rewrite $\Cb^\top \Cb$ as
    \begin{align*}
        \Cb^\top \Cb = \sum_{i=1}^{P_1} \lambda_i^2(\Ab) \tilde{\vb_{A,i}}\tilde{\vb_{A,i}}^\top + \sum_{i=1}^{P_2} \lambda_i^2(\Bb) \tilde{\vb_{B,i}}\tilde{\vb_{B,i}}^\top,
    \end{align*}
    which is the spectral decomposition of $\Cb^\top \Cb$ by the fact that all $\tilde{\vb_{A,i}}$'s and $\tilde{\vb_{B,i}}$'s are normalized and orthogonal to each other. Since the singular value of $\Cb$ is the square root of the eigenvalue of $\Cb^\top \Cb$, we complete the proof. 
\end{proof}

\begin{lemma}[Min-max principle of singular value]\label{lemma:min_max_principle}
Let $\Ab$ be a matrix and $\lambda_k(\Ab)$ be its $k$-th singular values in descending order. Then it holds that 
\begin{align*}
    &\lambda_k(\Ab) =\max_{S:\mathrm{dim}(S) = k}\min_{\xb\in S, \|\xb\|_2=1}\big\|\Ab\xb\big\|_2;\\
    &\lambda_k(\Ab) =\min_{S:\mathrm{dim}(S) = d-k+1}\max_{\xb\in S, \|\xb\|_2=1}\big\|\Ab\xb\big\|_2.
\end{align*}
\end{lemma}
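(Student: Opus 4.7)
The plan is to reduce the statement to the classical Courant--Fischer variational characterization by using the singular value decomposition. Write $\Ab = \Ub \bSigma \Vb^\top$ with $\Ub, \Vb$ orthogonal and $\bSigma$ diagonal containing the singular values $\lambda_1(\Ab) \geq \lambda_2(\Ab) \geq \cdots \geq 0$. For any unit vector $\xb$, expanding $\xb = \sum_i c_i \vb_i$ in terms of the right singular vectors $\vb_1, \ldots, \vb_d$ gives $\|\Ab \xb\|_2^2 = \sum_i c_i^2 \lambda_i(\Ab)^2$ with $\sum_i c_i^2 = 1$. This reduces the problem to a weighted average of the squared singular values, which is the key identity that drives both variational formulas.

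For the max-min equality, I would first exhibit a subspace achieving the maximum: take $S^* = \mathrm{span}(\vb_1, \ldots, \vb_k)$, which has dimension $k$. For any unit $\xb \in S^*$, only the coefficients $c_1, \ldots, c_k$ are nonzero, so $\|\Ab \xb\|_2^2 = \sum_{i=1}^{k} c_i^2 \lambda_i(\Ab)^2 \geq \lambda_k(\Ab)^2$, with equality when $\xb = \vb_k$. Hence $\min_{\xb \in S^*, \|\xb\|_2 = 1} \|\Ab \xb\|_2 = \lambda_k(\Ab)$. For the reverse inequality, given any $k$-dimensional subspace $S$, consider $T = \mathrm{span}(\vb_k, \vb_{k+1}, \ldots, \vb_d)$, which has dimension $d - k + 1$. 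Because $\dim(S) + \dim(T) = d + 1 > d$, the intersection $S \cap T$ is nontrivial, so we can pick a unit vector $\xb \in S \cap T$. Then $\xb = \sum_{i \geq k} c_i \vb_i$, so $\|\Ab \xb\|_2^2 = \sum_{i \geq k} c_i^2 \lambda_i(\Ab)^2 \leq \lambda_k(\Ab)^2$, which gives $\min_{\xb \in S, \|\xb\|_2 = 1} \|\Ab \xb\|_2 \leq \lambda_k(\Ab)$. Taking the maximum over all $k$-dimensional subspaces yields the first equality.

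The second equality follows by a symmetric argument. Pick $S^* = \mathrm{span}(\vb_k, \ldots, \vb_d)$ of dimension $d-k+1$; for any unit $\xb \in S^*$, $\|\Ab \xb\|_2^2 \leq \lambda_k(\Ab)^2$, so the maximum over $S^*$ is exactly $\lambda_k(\Ab)$. Conversely, for any $(d-k+1)$-dimensional subspace $S$, take $T = \mathrm{span}(\vb_1, \ldots, \vb_k)$ of dimension $k$; since $\dim(S) + \dim(T) > d$, there exists a unit $\xb \in S \cap T$, on which $\|\Ab \xb\|_2^2 \geq \lambda_k(\Ab)^2$, forcing $\max_{\xb \in S, \|\xb\|_2 = 1} \|\Ab \xb\|_2 \geq \lambda_k(\Ab)$. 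Minimizing over $S$ yields the second equality.

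I do not expect genuine obstacles here; this is a textbook Courant--Fischer argument transplanted from eigenvalues of $\Ab^\top \Ab$ to singular values of $\Ab$. The only subtle point to state carefully is the dimension-counting step showing that $S \cap T \neq \{\mathbf{0}\}$ whenever $\dim(S) + \dim(T) > d$, which is a standard consequence of the identity $\dim(S + T) + \dim(S \cap T) = \dim(S) + \dim(T)$ together with $\dim(S + T) \leq d$. Everything else is bookkeeping on the SVD coefficient expansion.
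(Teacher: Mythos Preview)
Your proof is correct and is the standard Courant--Fischer argument via the SVD; the paper itself does not supply a proof of this lemma but merely states it as a known result, so there is nothing to compare against.
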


\bibliography{ref}

\bibliographystyle{ims}
\end{document}